\pgfplotsset{compat=1.15}
\definecolor{uuuuuu}{rgb}{0.26666666666666666,0.26666666666666666,0.26666666666666666}
\definecolor{ffwwzz}{rgb}{1,0.4,0.6}
\definecolor{zzttff}{rgb}{0.6,0.2,1}
\newcommand{\pushright}[1]{\ifmeasuring@#1\else\omit\hfill$\displaystyle#1$\fi\ignorespaces}
\newcommand{\pushleft}[1]{\ifmeasuring@#1\else\omit$\displaystyle#1$\hfill\fi\ignorespaces}
\renewcommand*\env@matrix[1][\arraystretch]{%
  \edef\arraystretch{#1}%
  \hskip -\arraycolsep
  \let\@ifnextchar\new@ifnextchar
  \array{*\c@MaxMatrixCols c}}
\newcommand*{\addFileDependency}[1]{
  \typeout{(#1)}
  \@addtofilelist{#1}
  \IfFileExists{#1}{}{\typeout{No file #1.}}
}
\newtheorem{theorem}{Theorem}                
\newtheorem{proposition}{Proposition}        
\newtheorem{corollary}{Corollary}  
\newtheorem{lemma}{Lemma}                     
\theoremstyle{definition}                               
\newtheorem{definition}{Definition}[section]
	\theoremstyle{plain}
\newcommand{\CErrwi}{\mathcal{G}_{{\omega, i}}} 
\newcommand{\CErrw}{\mathcal{G}_{do_{\omega}}} 
\newcommand{\SErrw}{\mathcal{S}_{\omega}} 
\newcommand{\abs}[1]{\vert #1 \vert}  
\newcommand{\myBracs}[1]{\left ( #1 \right )} 
\newcommand{\myCurls}[1]{\left \{ #1 \right \}} 
\newcommand{\norm}[1]{\left\lVert#1\right\rVert} 
\newcommand{\indep}{\perp \!\!\! \perp} 
\newcommand{\ZeroVec}{\mathbf{0}}
\DeclareMathOperator*{\E}{\mathbb{E}} 
\DeclareMathOperator*{\Z}{\mathbb{Z}} 
\DeclareMathOperator*{\N}{\mathbb{N}} 
\DeclareMathOperator*{\R}{\mathbb{R}} 
\DeclareMathOperator*{\C}{\mathbb{C}} 
\newcommand{\prob}{\mathbb{P}} 
\begin{document}

%
\runningtitle{Causal Forecasting}

%
\runningauthor{Vankadara, Faller, Hardt, Minorics, Ghosdastidar, Janzing}

\twocolumn[

\aistatstitle{Causal Forecasting:\\Generalization Bounds for Autoregressive Models}

\aistatsauthor{Leena C Vankadara\textsuperscript{1} \And Philipp M Faller \And Michaela Hardt }
\aistatsaddress{University of T{\"u}bingen \And  Amazon Research \And Amazon Research }
\aistatsauthor{Lenon Minorics \And  Debarghya Ghoshdastidar \And Dominik Janzing}
\aistatsaddress{Amazon Research \And Technical University of Munich \And Amazon Research} ]

\begin{abstract}
Despite the increasing relevance of forecasting methods, causal implications of these algorithms remain largely unexplored. This is concerning considering that, even under simplifying assumptions such as causal sufficiency, the statistical risk of a model can differ significantly from its \textit{causal risk}. Here, we study the problem of \textit{causal generalization}---generalizing from the observational to interventional distributions---in forecasting. Our goal is to find answers to the question: How does the efficacy of an autoregressive (VAR) model in predicting statistical associations compare with its ability to predict under interventions?
    To this end, we introduce the framework of \textit{causal learning theory} for forecasting. Using this framework, we obtain a characterization of the difference between statistical and causal risks, which helps identify sources of divergence between them. Under causal sufficiency, the problem of causal generalization amounts to learning under covariate shifts albeit with additional structure (restriction to interventional distributions under the VAR model). This structure allows us to obtain uniform convergence bounds on causal generalizability for the class of VAR models. To the best of our knowledge, this is the first work that provides theoretical guarantees for causal generalization in the time-series setting.
\end{abstract}

\section{Introduction}
\label{sec:intro}
\begin{figure*}[htp]
     \centering
     \begin{subfigure}[b]{0.99\textwidth}
         \centering
         \includegraphics[width=0.35\textwidth]{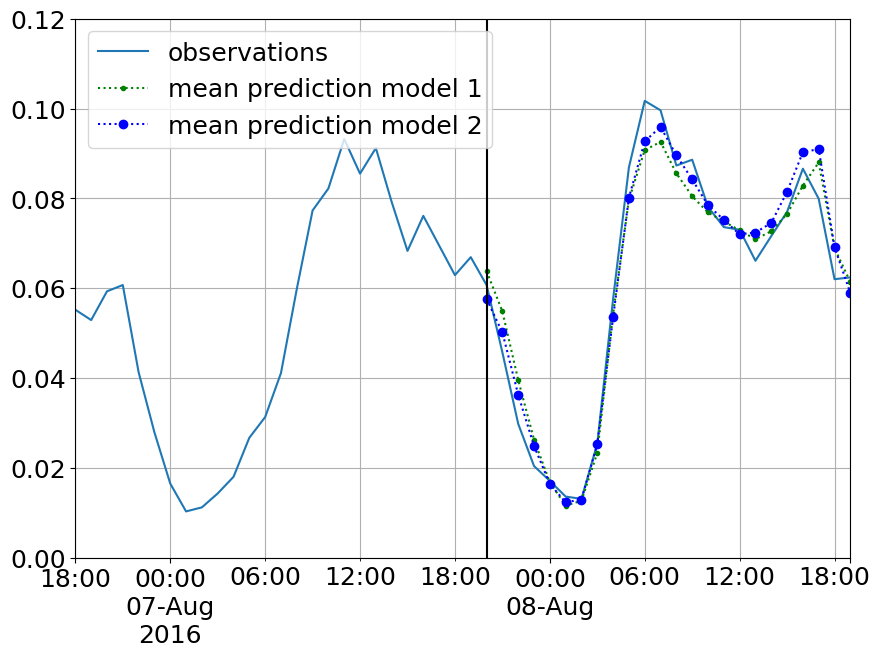}
         \includegraphics[width=0.35\textwidth]{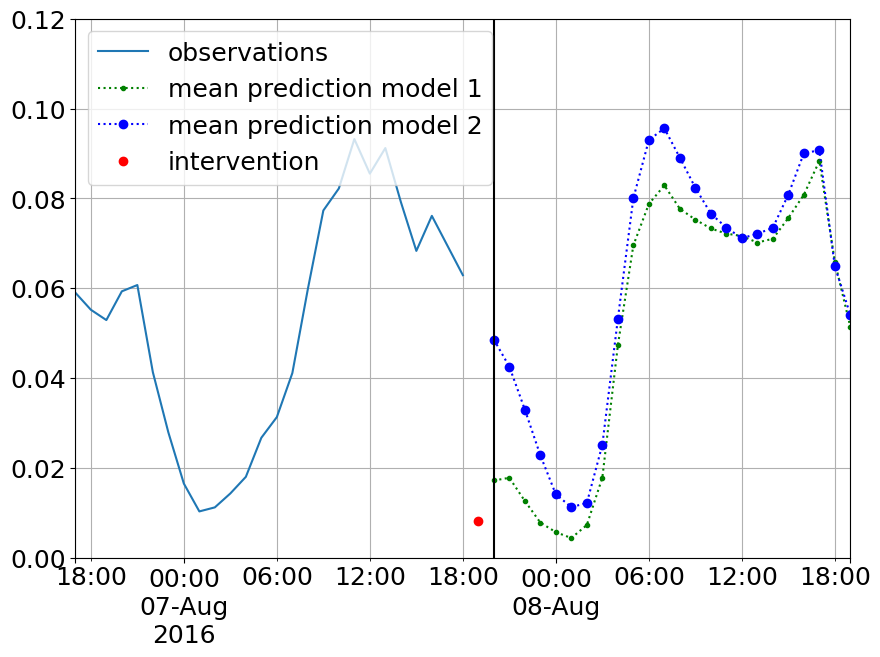}
     \end{subfigure}
     \caption{\label{fig:real_example} An example time series with predictions of two DeepAR models (top) under an intervention in red (bottom) on the Traffic dataset. While we do not know the ground-truth, we see that two models disagree when faced with an intervention more than on the in-distribution forecasting. Since at most one of them can be right, we conclude that at least the other one makes a notable forecasting error under the intervention.}
\end{figure*}
Forecasting algorithms are increasingly relevant in a variety of applications including meteorology, climatology, economics, and business. While traditional economic modelling relies on relatively simple time series models
\parencite{brockwell1991time}, e.g., autoregressive models, or methods like co-integration, modern business planning heavily uses neural networks for forecasting \parencite{Faloutsos2018,Januschowski2020,Salinas2021}. Despite the advancements of forecast quality, causal implications are not yet well understood. There has been notable progress in `explainable' models in the sense of feature relevance \parencite{Lundberg2017,Molnar2019,janzing2020feature,wang2020} with potential applications in forecasting. Furthermore, specialized models  \parencite{hatt2021sequential,bica20,Lim2018} have shown remarkable success for causal inference in forecasting.

It is common practice in business and econometrics to learn statistical forecasting models and interpret them causally. In practice, while forecasting models tend to agree on their statistical predictions, they can differ substantially on their causal predictions (see Figure~\ref{fig:real_example} for an example). In particular, this practice is considered justified under simplifying assumptions such as causal sufficiency and the absence of contemporaneous effects (see for instance \textcite[Section 1]{hyvarinen2010estimation}).
Here, we are interested in the fundamental question: what is the relation between the statistical predictability of a forecasting model and its causal generalizability --- ability to predict under interventions.

We argue that even for very simple models and even under simplifying assumptions such as causal sufficiency and absence of contemporaneous influence, causal interpretation of forecasting models is non-trivial.
To appreciate the challenges, consider a simple example of a process with strongly correlated observations where  
 $x_t\approx x_{t-1}$, and hence $x_t\approx x_{t-2}$.
 These observations can be explained either by a causal model with a strong influence of $x_{t-1}$ on $x_t$ or a causal model with a strong influence from $x_{t-2}$ on $x_t$. The difference between the models gets apparent when an intervention randomizes $x_{t-1}$ and $x_{t-2}$ independently. Then, predictions
 become hard, particularly when $x_{t-1}$ 
 and $x_{t-2}$ are set to significantly different values. While both models are similar in their statistical predictions, they differ substantially in their \textit{causal predictions}. This example already shows that, even in a simple setting, causal and statistical predictability can differ significantly. The question of causal generalization is thus practically relevant and non-trivial and begs for a better theoretical understanding.
 
Specifically, we consider the simple class of vector autoregressive models (VAR) and ask the question
\begin{center}
{%
        \textit{How does the efficacy of an autoregressive model in predicting statistical associations compare with its ability to predict under interventions?}
    }%
\end{center}
These models are widely applied in domains ranging from econometrics \parencite{lutkepohl2009econometric, grabowski2020tobit} and finance \parencite{zivot2006vector} to neuroscience \parencite{valdes2005estimating}.

 {\textbf{Connection to Covariate Shift.} The problem of causal generalization is closely related to the problem of covariate shift. To see this, we first ignore the time series setting and consider the scenario where a variable $Y$ should be predicted from a variable $X$, which is known not to be an effect of $Y$. 
If there is no common cause of $X$ and $Y$, that is, we assume causal sufficiency \parencite{Spirtes1993}, the statistical relation
between $X$ and $Y$ is entirely due to the influence of $X$ on $Y$.
Therefore, the observational and interventional conditionals coincide ($P_{Y|x=x^*}=P_{Y|do(x=x^*)}$ in Pearl's language \parencite{pearl2009causality}) and the true parameters would be optimal both from a statistical and causal perspective.
However, due to \textit{estimation bias}, a prediction model learned using finite samples from $P_x$ may perform poorly when randomized interventions draw $x$-values  
from a different distribution $\tilde{P}_X$, which is the usual covariate shift scenario \parencite{Masashi2012}. In our setting, $X$ and $Y$ are represented by the past and the present values of a (possibly multivariate) time series, respectively. Accordingly, we focus on
interventional distributions that are natural for
this setting: independent interventions at different time points and components of the multivariate process. Hence, we have additional structure in comparison with the standard covariate shift problem. We are not aware of any theoretical work on covariate shift in the time-series setting. Nevertheless, we describe the connections to learning theory in the standard covariate shift setting and other related work in Section \ref{sec:related_work}.}

\textbf{Our Contributions.}
Our central goal in this work is to develop a formal and thorough understanding of causal generalization for the class of VAR models.

\begin{enumerate}[leftmargin=0.2cm, label=\alph*.]
\setlength\itemsep{0.1em}
    \item  To this end, we introduce a framework of causal learning theory for forecasting to analyze when forecasting models can generalize from the \textit{observational} to the \textit{interventional distributions} (Section \ref{sec:clt}). This is closely related to the setting of learning under domain adaptation.
    \item Using this framework, we provide a characterization of the difference in the statistical and \textit{causal} risks (Section \ref{sec:results}). Such a characterization allows us to identify the sources of divergence between the two quantities. Our results show that the strength of correlation of the underlying process plays a key role in determining causal generalizability. They also highlight that already for simple models, causal and statistical errors can even diverge.
    \item Further, we provide finite-sample, uniform convergence bounds on causal generalization for the class of VAR models (Section \ref{sec:results}). Our simulations demonstrate that our bounds indeed capture the key drivers of causal generalization. To the best of our knowledge, this is the first work that provides theoretical guarantees for causal generalization of any kind in the time-series setting.
    \item As a by-product of our analysis, we provide an explicit characterization of the powers of a companion matrix (see Section \ref{sec:clt}) using symmetric Schur polynomials \parencite{macdonald1998symmetric} of its eigenvalues (Lemma \ref{lemma:coef_as_schur}) which, 
    to the best of our knowledge, has not been noted in the literature. This result could be of independent interest in theoretical endeavors that build upon companion matrices which, for instance, are ubiquitous in stochastic processes and in Linear-Time-Invariant dynamical systems \parencite{davison1976robust, melnyk2016estimating}.
    \item We conduct experiments with a variety of deep neural networks on real data. Our experiments approach causal risks in this setting and explore its relationship to uncertainty.
\end{enumerate}

\section{Causal Learning Theory for Forecasting}
\label{sec:clt}
In this section, we introduce a framework to formally evaluate the quality of a forecasting model with respect to prediction and the validity of its causal implications. We refer to this framework as {causal learning theory} for forecasting. First, we introduce some relevant notation.

\textbf{Notation.} For any stochastic process $\myCurls{x_t}_{t\in \mathbb{Z}} \in \mathbb{R}^d$, we use $\mathbf{x}^n_{t-\omega} = \myCurls{x_{t-\omega-n+1}, \cdots, x_{t-\omega - 1}, x_{t-\omega}} $ to denote the \textit{set} of $x_{t-\omega}$ and the $n-1$ variables in the past of $x_{t-\omega}$. We distinguish this from $y_t^n$ which denotes the \textit{vector} $\begin{pmatrix} x_{t}, x_{t-1} , \cdots, x_{t-n+1}\end{pmatrix}^T \in \R^{nd}$. When it is clear from context, to reduce cumbersome notation, we simply use $y_t$. For any random variable $x$, $\mathbb{E}[x]$ denotes its expectation. For any matrix $A$, we use $A_{i:}$ and $A_{:j}$ to denote the $i$th row and $j$th column of $A$ respectively. We use $A^j_{1k}$ to denote the $(1, k)$th element of $A^j$. For any vector $x_t$ at time $t$, we use $x_{t,i}$ to denote the $i$th element of $x_t$. We use $\lambda_{\max}(A), \lambda_{\min}(A), \kappa(A) = \lambda_{\max}(A) / \lambda_{\min}(A)$  to denote the maximum and minimum eigenvalues and the condition number of $A$ respectively. $\mathbb{I}_{p}$ denotes the identity matrix of size $p$,  $\mathbb{N}, \mathbb{Z}$ denote the set of natural numbers and integers respectively and $[n]$ denotes the set $\myCurls{1, 2, \cdots n}$.

To evaluate the statistical and causal efficacy of an estimator we introduce the notions of statistical and \textit{causal} forecast risks. To define statistical forecast risk, we consider the setting of $\omega-$step forecasting where the goal is to predict $x_t$ from observations $\mathbf{x}^n_{t-\omega}$ drawn from a stochastic process $\myCurls{x_t}_{t \in \mathbb{Z}}$ for some $\omega \in \mathbb{N}$. To define the causal forecast risk, we consider interventions on $x_{t-\omega,i}$ for some $i \in [d]$.\footnote{The results for simultaneous interventions are qualitatively similar to those of interventions on single variables, and for ease of exposition, we present our discussion in the latter case.}
\begin{definition}[\textbf{Statistical forecast error}]
	\label{def:stat_error_ts}
     The statistical forecast error of an estimator $\hat{f}$ in the prediction of a target variable $x_t$ from $\mathbf{x}^n_{t-{\omega}}$, drawn from the \textit{observational distribution}, can be defined as
	\begin{equation}
	\begin{aligned}
		\label{eq:stat_error_ts}
	\mathcal{S}_{\omega} &= \mathbb{E}_{\mathbb{P}(x_t, \mathbf{x}_{t-\omega}^n)} \big[ \big(x_t  - \hat{f}(x_{t-\omega}^n)  \big )^2\big].
	\end{aligned}
	\end{equation}
The empirical counterpart ($\hat{\mathcal{S}_{\omega}}$), is defined naturally by replacing the expectation by the empirical mean.

\end{definition}
For causal questions, we want to investigate the behavior of a model under interventions. Here, we consider atomic interventions. Using Pearl's do notation \parencite{pearl2009causality}, an {atomic intervention} $do(x = x^*)$ refers to \textit{setting} the variable $x$ to some value $x^*$.
\begin{definition}[\textbf{Causal errors}]
	\label{def:causal_error_ts}
 The interventional forecast error of $\hat{f}$ in predicting the \textit{effect of an intervention} $do({x}_{t-\omega,i} = x^*_{t-\omega,i})$, on target variable $x_t$ is defined as
	\begin{equation}
	\begin{aligned}
		\label{eq:causal_error_ts}
	\mathcal{G}_{do_{\omega, i}} &= \mathbb{E}_{\mathbb{P}_{do_{\omega, i}}(x_t, \mathbf{x}_{t-\omega}^n)} \big[ \big(x_t  - \hat{f}(x_{t-\omega}^n)  \big )^2\big],
	\end{aligned}
	\end{equation}
where $do_{\omega, i}$ is shorthand for $do(x_{t-{\omega}, i} = x^*_{t-{\omega}, i})$ and $\mathbb{P}_{do_{\omega, i}}$ denotes the distribution induced by the intervention $do({x}_{t-\omega,i} = x^*_{t-\omega,i})$. To isolate from the dependence on specific values that the intervened variables are set to, we present our results via the notion of \textit{average causal error}. It is defined as the expected interventional error for interventions drawn from the marginal distribution of $x_{t-\omega,i}$ since it provides a natural scale at which the statistical and causal errors can be compared. 
\begin{equation}
	\mathcal{G}_{\omega,i} = \mathbb{E}_{x^*_{t-\omega,i} \sim  \mathbb{P}(x_{t-\omega, i})} \left [\mathcal{G}_{do_{\omega, i}}\right ].
\end{equation}

\end{definition}

\textbf{Statistical and Causal Learning Theory.} Consider the standard framework of statistical learning in time-series prediction. For any stochastic process $\myCurls{x_t}_{t \in \Z}$ taking values in $\mathcal{X}$, given a loss function $l:\mathcal{X}\times \mathcal{X} \rightarrow \mathbb{R}^+$, the goal of statistical learning is to learn a function $f_{\mathcal{S}}^*$ that achieves the optimal statistical risk $\mathcal{S}^{\omega}(f)$: 

Since the true process is unknown, the empirical average ($\widehat{\mathcal{S}}^{\omega}$) of generalization risk is used to estimate $\mathcal{S}^{\omega}$. Statistical generalization bounds of the form: $ \mathcal{S}^{\omega}(f) < \widehat{\mathcal{S}}^{\omega}(f) + \mathcal{C}(\mathcal{F}, n)$ are then used to provide guarantees on the uniform deviation of empirical risk from expected risk given sufficiently many samples and when the ``complexity'' of the function class is small.

Analogously, the goal of \textit{causal learning} is to find a function $f^*_{\mathcal{G}}$ that achieves the optimal \textit{causal} risk  $\mathcal{G}^{\omega}(f)$
In contrast to statistical learning, the empirical averages of the causal error cannot be utilized to estimate $\mathcal{G}_{\omega}$ since we often do not have access to data from the interventional distributions. Instead, we are only provided with data from the observational/statistical distribution of the stochastic process and the goal of causal learning theory is to understand, to what extent is it possible to provide \textit{causal generalization} guarantees of the form: $\mathcal{G}^{\omega}(f) < \widehat{\mathcal{S}}^{\omega}(f) + \mathcal{C}(\mathcal{F}, n)$.

To summarize, we ask: Can the predictors in $\mathcal{F}$ generalize from the \textit{empirical observational distribution} to the \textit{true interventional distribution} assuming that we control the complexity of $\mathcal{F}$ and that we observe sufficiently many samples drawn from the observational distribution? One cannot address this question in a very general setting and would need model assumptions to make any meaningful statements. To this end, we now formally introduce our problem setup and some preliminaries. We provide additional relevant background in the Appendix \ref{sec:background}.

\textbf{Statistical and Causal Models.}
We assume that the stochastic process $\{x_t\}_{t \in \mathbb{Z}} \in \mathbb{R}^d$ follows a weakly stationary vector autoregressive model(VAR(p)) of order $p$ for some $p,d \in \mathbb{N}$ which is defined as 
 \begin{equation}
 \label{eq:VAR}
     x_t = A_1 x_{t-1} + A_2 x_{t-2} + \cdots A_P x_{t-p} + \epsilon_t, \;  
 \end{equation}
 where $x_t \in \R^d$ is a vector-valued time-series, for all $i \in [p]$, $A_i \in \R^{d \times d}$ are the coefficients of the VAR model, and $\epsilon_t \in \R^{d}$ denotes the noise vector such that $ \E[\epsilon_t] = 0$ and $\E[\epsilon_t \epsilon_{t+h}^T] =  \Sigma_{\epsilon} \textrm{ if} \; h = 0$ and $0 \textrm{ otherwise}.$ For some $\sigma_{\epsilon}^2 > 0$, we simply set $\Sigma_{\epsilon} = \sigma^2_{\epsilon} \mathbbm{I}$ for enhanced readability. Our results can be easily generalized to arbitrary covariance matrices by means of the spectral properties ($\lambda_{\min}, \lambda_{\max}$) of $\Sigma_{\epsilon}$. The autocovariance matrix of $\myCurls{x_t}_{t \in \mathbb{Z}}$ plays a central role in our results and analysis. For any $n \in \N$, we use $\Sigma_{n}$ to denote the autocovariance matrix of size $n$ defined as $ \E [(y^n_{t} - \E[y^n_{t}]) (y^n_{t} - \E[y^n_{t}])^T]$. It is convenient to rewrite a VAR model of order $p$ in Equation (\ref{eq:VAR}) as a VAR(1) model, $ y_t = A y_{t-1} + e_t$, where $y_t \in \mathbb{R}^{dp}, e_t \in \mathbb{R}^{dp}$ are defined as $y_t = \begin{pmatrix} x_{t}, x_{t-i} , \cdots, x_{t-p+1}\end{pmatrix}^T$, $e_t = \begin{pmatrix} \epsilon_t, 0, \cdots, 0 \end{pmatrix}^T$, and $A \in \mathbb{R}^{dp \times dp}$ is a \textit{(multi) companion matrix} defined as: 
    \begin{align}
    \label{eq:var_1_def}
        A = \begin{pmatrix}
        A_1 & A_2 & \cdots & A_{p-1} & A_p \\
        I & 0 & \cdots & 0 & 0 \\
        0 & I & \cdots & 0 & 0 \\
        \vdots &\vdots &\cdots &\vdots &\vdots \\
        0 & 0 & \cdots & I & 0
        \end{pmatrix}.
    \end{align}
The eigenvalues of the multi-companion matrix $A$ fully characterize the stability and stationarity of the VAR process. For a VAR(p) process to be weakly stationary, that is for the mean and the covariance of the process to not change over time, the eigenvalues of $A$, which satisfy 
\begin{equation}
\textrm{det} \abs{\mathbb{I}_d \lambda^p - A_1 \lambda^{p-1} - A_2 \lambda^{p-2} - \cdots - A_p} = 0,
\end{equation}
are constrained to not lie on the unit circle. If the magnitude of all the eigenvalues are $\abs{\lambda_i} < 1$, then the process is stable, that is, its values do not diverge \parencite{lutkepohl2013vector}.

\textbf{Causal Models.} Under the assumptions of causal sufficiency and absence of contemporaneous influences, a causal interpretation of the VAR model in \eqref{eq:VAR} as structural equations naturally yields the corresponding causal model. We consider the family of all VAR models as our function class $\mathcal{F}$ of statistical and causal estimators.

\section{Causal Generalization for VAR}
\label{sec:results}
In this section, we present causal generalization bounds for the family of VAR models under atomic interventions. We first provide an overview of our results in the more general case of VAR(p) models and later provide a thorough interpretation of the results, often by deriving simplified versions of the results for AR(p) models. We begin by providing an exact characterization of the difference in statistical and causal errors in terms of the model and estimated parameters and the autocovariance matrix of the underlying process. 

\begin{lemma}[\textbf{Difference in Causal and Statistical errors (VAR)}]
\label{lemma:diff_G_S_VAR}
Consider a vector-valued time series $\myCurls{x_t}_{t \in \Z} \in \R^d$, following a VAR(q) process parameterized by $\myCurls{A_1, A_2, \cdots A_q}$. Let $ \nu = \max \myCurls{p,q}$. For any VAR(p) model $f$ with parameters $\{\hat{A}_1, \hat{A}_2, \cdots \hat{A}_p\}$,

\begin{multline*}
\abs{\CErrwi - \SErrw} = 2\bigg \vert {(A^{\omega}_{ii} - \hat{A}^{\omega}_{ii}) \sum \limits_{k \neq i}^{d \nu} (A^{\omega}_{ik} - \hat{A}^{\omega}_{ik}) \Sigma_{ik}^{\nu}} \bigg \vert,
\end{multline*}

where $\Sigma^{\nu}$ denotes the autocovariance matrix of $x_t$ of size $\nu$, $A$ is a multi-companion matrix of the form described in (\ref{eq:var_1_def}) with the first $d$ rows populated by $\{A'_1, A'_2, \cdots A'_{\nu}\}$, with $A'_l$ defined as $A_l$ for all $l \leq p$ and as $\ZeroVec_{d \times d}$ for all $l > p$. $\hat{A}$ is analogously defined.
\end{lemma}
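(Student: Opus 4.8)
The plan is to pass to the VAR(1) lift $y_t = \CompMat y_{t-1} + e_t$ and express both risks as quadratic forms in the single vector of coefficient errors $\delta_k := \CompMat^{\omega}_{ik} - \hat{\CompMat}^{\omega}_{ik}$, $k \in [d\nu]$, where $\CompMat$ and $\hat{\CompMat}$ are the companion matrices padded to the common size $d\nu$ as in the statement. Unrolling the lift $\omega$ times gives $y_t = \CompMat^{\omega} y_{t-\omega} + \sum_{j=0}^{\omega-1}\CompMat^{j}e_{t-j}$; projecting onto coordinate $i$ and writing $z := y_{t-\omega}$,
\[
x_{t,i} \;=\; \sum_{k=1}^{d\nu}\CompMat^{\omega}_{ik}\, z_k \;+\; \eta_t,
\]
where $\eta_t := \big(\sum_{j=0}^{\omega-1}\CompMat^{j}e_{t-j}\big)_i$ aggregates only the innovations after time $t-\omega$, so $\E[\eta_t]=0$ and $\E[z_k\eta_t]=0$ for every $k$; by weak stationarity with zero-mean noise, $\E[z_k]=0$ and $\E[z_k z_l] = \Sigma^{\nu}_{kl}$. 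For $n \ge \nu$ the conditioning set $\mathbf{x}^n_{t-\omega}$ contains the stacked vector $z$, and the $\omega$-step forecast of the VAR($p$) model $f$ --- the iterated one-step predictor --- is $\hat f(\mathbf{x}^n_{t-\omega}) = \sum_k \hat{\CompMat}^{\omega}_{ik}\, z_k$.

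The \textbf{statistical risk} then follows by substitution: $x_{t,i}-\hat f = \sum_k \delta_k z_k + \eta_t$, and since $z$ and $\eta_t$ are zero-mean and $\E[z_k\eta_t]=0$, expanding the square and splitting off $k=i$ gives
\begin{multline*}
\SErrw = \sum_{k,l=1}^{d\nu}\delta_k\delta_l\,\Sigma^{\nu}_{kl} + \E[\eta_t^2] \\
= \delta_i^2\,\Sigma^{\nu}_{ii} + 2\delta_i\sum_{k\neq i}\delta_k\,\Sigma^{\nu}_{ik} + \sum_{k,l\neq i}\delta_k\delta_l\,\Sigma^{\nu}_{kl} + \E[\eta_t^2].
\end{multline*}

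For the \textbf{interventional risk}, read the VAR equations as structural assignments (valid under causal sufficiency and absence of contemporaneous influence). The atomic intervention $\dowi{i}$ fixes $z_i = x^*_{t-\omega,i}$, leaves the joint law of the remaining coordinates of $z$ --- all non-descendants of $x_{t-\omega,i}$, being strictly past values or same-time values with no instantaneous arrow --- equal to their \emph{observational marginal}, and does not alter the innovations after $t-\omega$, so $\eta_t$ is unchanged. Propagating through $\CompMat^{\omega}$ as before, under $\dowi{i}$ we get $x_{t,i}-\hat f = \delta_i x^*_{t-\omega,i} + \sum_{k\neq i}\delta_k z_k + \eta_t$; because the surviving $z_k$ remain zero-mean, the cross term with the constant $\delta_i x^*_{t-\omega,i}$ vanishes in expectation, so $\mathcal{G}_{\dowi{i}} = \delta_i^2(x^*_{t-\omega,i})^2 + \sum_{k,l\neq i}\delta_k\delta_l\,\Sigma^{\nu}_{kl} + \E[\eta_t^2]$. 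Averaging over $x^*_{t-\omega,i}\sim\mathbb{P}(x_{t-\omega,i})$ and using $\E[(x^*_{t-\omega,i})^2]=\Sigma^{\nu}_{ii}$ gives $\CErrwi = \delta_i^2\,\Sigma^{\nu}_{ii} + \sum_{k,l\neq i}\delta_k\delta_l\,\Sigma^{\nu}_{kl} + \E[\eta_t^2]$. Subtracting the two risks, the term $\delta_i^2\Sigma^{\nu}_{ii}$, the $(k,l\neq i)$ quadratic form, and the irreducible noise term $\E[\eta_t^2]$ all cancel, leaving $\CErrwi - \SErrw = -2\delta_i\sum_{k\neq i}\delta_k\,\Sigma^{\nu}_{ik}$; substituting back $\delta_k = \CompMat^{\omega}_{ik}-\hat{\CompMat}^{\omega}_{ik}$ and taking absolute values yields the stated identity.

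The step I expect to be the main obstacle is the characterization of the interventional distribution: establishing that under $\dowi{i}$ the non-intervened coordinates of $z=y_{t-\omega}$ keep their \emph{unconditional} observational joint law (not the law conditioned on $\{x_{t-\omega,i}=x^*_{t-\omega,i}\}$) and that the post-intervention innovations agree with the observational ones, so that the $\E[\eta_t^2]$ terms cancel exactly. This is where causal sufficiency and the absence of contemporaneous effects are genuinely used, and it is the sole source of divergence between the ``do'' and ``see'' distributions. Everything else --- the $\omega$-fold unrolling of the companion matrix, the zero-mean/uncorrelatedness cancellations, and the zero-padding that puts $\CompMat^{\omega}$ and $\hat{\CompMat}^{\omega}$ on the common index set $[d\nu]$ --- is routine bookkeeping.
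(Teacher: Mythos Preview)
Your proposal is correct and follows essentially the same route as the paper: pass to the VAR(1) lift, unroll $\omega$ steps, express both risks as quadratic forms in the coefficient-error vector $\delta = A^{\omega}_{i:}-\hat A^{\omega}_{i:}$ against a second-moment matrix ($\Sigma^{\nu}$ observationally, $\Gamma$ interventionally), and subtract. The only cosmetic difference is that the paper isolates your ``main obstacle'' as a standalone lemma---showing, via the no-directed-path criterion of Peters et al., that $\Gamma$ agrees with $\Sigma^{\nu}$ except that the off-diagonal entries of row and column $i$ are zeroed---whereas you argue the same fact inline; the paper then writes the difference compactly as a $(\Gamma-\Sigma)$-quadratic form before reading off the stated identity.
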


 Building on Lemma~\ref{lemma:diff_G_S_VAR}, we establish that the condition number of the autocovariance matrix of the underlying process controls causal generalizability from the \textit{observational} to \textit{interventional distributions}.

\begin{proposition}[\textbf{Stability Controls Causal Generalization (VAR)}]
\label{prop:stability_control} Let $\myCurls{x_t}_{t \in \mathbb{Z}}$ follow a VAR(q) process for some $q \in \mathbb{N}$. For any VAR(p) model,
 \begin{equation}
     \abs{\CErrwi - \SErrw} \leq  (2\kappa(\Sigma^{\nu }) - 1) (\SErrw - \sigma^2_{\epsilon}),
\end{equation}
where $\kappa(\Sigma^{\nu})$ denotes the condition number of the autocovariance matrix $\Sigma^{\nu}$. Further, one can construct processes where equality holds upto a small constant factor.
\end{proposition}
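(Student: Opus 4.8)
The plan is to read the proposition off Lemma~\ref{lemma:diff_G_S_VAR} by viewing both the statistical and the averaged–interventional risk as the \emph{same} quadratic form in the estimation-error vector $v\in\R^{d\nu}$, $v_k:=A^{\omega}_{ik}-\hat{A}^{\omega}_{ik}$, evaluated against two different second-moment matrices. Writing the $\omega$-step identity $y_t=A^{\omega}y_{t-\omega}+\sum_{j=0}^{\omega-1}A^{j}e_{t-j}$ for the VAR(1) lift, the $i$-th residual $x_t-\hat{f}(\mathbf{x}^n_{t-\omega})$ splits as $v^{\top}y_{t-\omega}$ plus an innovation term that is uncorrelated with $y_{t-\omega}$ and has variance at least $\sigma^2_{\epsilon}$ (its $j=0$ summand is $\epsilon_{t,i}$). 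Hence $\SErrw=v^{\top}\Sigma^{\nu}v+(\text{innovation variance})$, so
\begin{equation*}
\SErrw-\sigma^2_{\epsilon}\;\ge\;v^{\top}\Sigma^{\nu}v\;\ge\;\lambda_{\min}(\Sigma^{\nu})\,\norm{v}^2 .
\end{equation*}
Under the average atomic intervention of Definition~\ref{def:causal_error_ts}, $x_{t-\omega,i}$ is replaced by an independent copy with the same marginal, so the second-moment matrix of the lifted regressor changes from $\Sigma^{\nu}$ to $M'$, obtained from $\Sigma^{\nu}$ by setting the off-diagonal entries of row and column $i$ to zero while keeping the $(i,i)$ entry; the innovation term is untouched. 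Therefore $\CErrwi-\SErrw=v^{\top}M'v-v^{\top}\Sigma^{\nu}v=-2v_i\sum_{k\ne i}v_k\Sigma^{\nu}_{ik}$, which is exactly Lemma~\ref{lemma:diff_G_S_VAR}.

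The second step is a spectral localization of $M'$. Its spectrum is $\{\Sigma^{\nu}_{ii}\}$ together with the spectrum of the principal submatrix $(\Sigma^{\nu})_{-i}$; by the Rayleigh characterization $\Sigma^{\nu}_{ii}=e_i^{\top}\Sigma^{\nu}e_i\in[\lambda_{\min}(\Sigma^{\nu}),\lambda_{\max}(\Sigma^{\nu})]$ (with $e_i$ the $i$-th standard basis vector), and by Cauchy interlacing the eigenvalues of $(\Sigma^{\nu})_{-i}$ also lie in that interval. Hence both $v^{\top}\Sigma^{\nu}v$ and $v^{\top}M'v$ lie in $[\lambda_{\min}(\Sigma^{\nu})\norm{v}^2,\lambda_{\max}(\Sigma^{\nu})\norm{v}^2]$, so
\begin{equation*}
\abs{\CErrwi-\SErrw}=\abs{v^{\top}M'v-v^{\top}\Sigma^{\nu}v}\;\le\;\bigl(\lambda_{\max}(\Sigma^{\nu})-\lambda_{\min}(\Sigma^{\nu})\bigr)\norm{v}^2 .
\end{equation*}
Dividing by the lower bound on $\SErrw-\sigma^2_{\epsilon}$ from the first step yields $\abs{\CErrwi-\SErrw}\le(\kappa(\Sigma^{\nu})-1)(\SErrw-\sigma^2_{\epsilon})$, which a fortiori gives the stated bound with constant $2\kappa(\Sigma^{\nu})-1$; one can also reach $2\kappa(\Sigma^{\nu})-1$ more crudely by applying the triangle inequality together with $\abs{v_k}\le\norm{v}$ and $\abs{\Sigma^{\nu}_{ik}}\le\lambda_{\max}(\Sigma^{\nu})$ directly to the right-hand side of Lemma~\ref{lemma:diff_G_S_VAR}.

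For the matching lower bound I would take $d=1$, $\nu=2$, i.e.\ an AR(2) model over a truly AR(1) process, so $\Sigma^{2}=\left(\begin{smallmatrix}\gamma_0&\gamma_1\\\gamma_1&\gamma_0\end{smallmatrix}\right)$ with eigenvectors $(1,\pm1)$, eigenvalues $\gamma_0\pm\gamma_1$, and $\kappa(\Sigma^{2})=(\gamma_0+\gamma_1)/(\gamma_0-\gamma_1)$; letting the AR(1) coefficient tend to $1$ drives $\gamma_1/\gamma_0\to1$ and $\kappa\to\infty$. Taking $\omega=1$ and an estimator whose error $v=(v_1,v_2)$ is aligned with the minimal eigenvector ($v_1=-v_2$) gives $\SErrw-\sigma^2_{\epsilon}=v^{\top}\Sigma^{2}v=2(\gamma_0-\gamma_1)v_1^2$ while $\abs{\CErrwi-\SErrw}=2\abs{v_1v_2}\gamma_1=2\gamma_1 v_1^2$, so their ratio equals $\gamma_1/(\gamma_0-\gamma_1)=(\kappa(\Sigma^{2})-1)/2$; thus the upper bound is tight up to a factor of order one.

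The computations are short; the two places that need care are (i) the precise form of the first step---that the averaged-interventional second-moment matrix is indeed $M'$ and that the innovation contribution is identical in $\SErrw$ and $\CErrwi$, which is essentially internal to the proof of Lemma~\ref{lemma:diff_G_S_VAR}---and (ii) in the tightness construction, checking that the chosen process is genuinely weakly stationary and the estimator admissible while tracking the order-one constant. I expect (ii), exhibiting an explicit ill-conditioned stationary VAR whose autocovariance geometry is aligned with the estimation error, to be the main obstacle.
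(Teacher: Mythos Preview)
Your proof is correct and follows the same route as the paper: both proofs represent $\CErrwi-\SErrw$ as $v^{\top}(\Gamma-\Sigma^{\nu})v$ (the paper sums this over the $d$ output rows), use Rayleigh and Cauchy interlacing to localize the spectrum of the interventional second-moment matrix, and bound the denominator by $\lambda_{\min}(\Sigma^{\nu})\norm{v}^2$; the tightness AR(2) construction with $v$ aligned with the minimal eigenvector is identical, down to the ratio $(\kappa-1)/2$.

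The one substantive difference is your spectral localization of $M'$. You note that $M'$ is block diagonal---a $1\times 1$ block $\Sigma^{\nu}_{ii}$ and the principal submatrix $(\Sigma^{\nu})_{-i}$---so $\lambda_{\max}(M')\le\lambda_{\max}(\Sigma^{\nu})$ directly. The paper instead writes $\Gamma=\Gamma'_1+\Gamma'_2$ as a sum of two PSD matrices and uses subadditivity of $\lambda_{\max}$, obtaining only $\lambda_{\max}(\Gamma)\le 2\lambda_{\max}(\Sigma^{\nu})$; this is where the extra factor of $2$ in $2\kappa-1$ enters. Your argument therefore actually delivers the sharper constant $\kappa(\Sigma^{\nu})-1$, which the paper's own tightness example shows is optimal up to a factor $2$. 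This is a genuine (if small) improvement over the paper's proof, not a gap.
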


{\looseness=-1 The result states that the difference in expected causal and statistical errors is controlled by the \textit{condition number} of the autocovaraince matrix of size $\max \myCurls{p,q}$. It also states that without incorporating additional information, one cannot obtain a much tighter bound which is also verified by our experiments in Section \ref{sec:experiments}. The condition number of the autocovariance matrix can get arbitrarily large as the process gets closer to the boundary of the stability domain. This result therefore shows that even for very simple classes of forecasting models, causal interpretations can get challenging. We later provide a detailed interpretation of this result and provide an explicit bound on $\kappa(\Sigma_{\nu})$ in terms of the stability parameter for AR(p) models (Corollary \ref{corr:Population_diff_ar}).}

Proposition \ref{prop:stability_control} allows us to employ generalization bounds for time-series \parencite{yu1994rates, meir2000nonparametric, mohriRademacher, mcdonald2017nonparametric} to derive finite-sample \textit{causal generalization bounds} for VAR models. In particular, we utilize Rademacher complexity bounds for generalization in time-series under mixing conditions \parencite{mohriRademacher} to derive Theorem \ref{thm:main}.

\begin{theorem}[\textbf{Finite sample bounds for VAR(p) models}]
\label{thm:main}
  Let $\mathcal{F}$ denote the family of all VAR models of dimension $d$ and order $p$. For any $n > \max \left \{p,q \right \}\in \N$, let $\mu, m > 0$ be integers such that $2 \mu m = n$ and $ \delta > 2 (\mu - 1) \rho^m$ for a fixed constant $0 < \rho < 1$ determined by the underlying process. Let $\myCurls{x_1, x_2, \cdots x_n} \in \mathbb{R}^d$ be a finite sample drawn from a VAR(q) process. Then, simultaneously for every $f \in \mathcal{F}$, under the square loss truncated at $M$, with probability at least $1-\delta$,
    \begin{equation}
  \label{eq:thm_main}
     \CErrwi   \leq  \zeta \hat{\mathcal{S}}_{\omega} + \zeta \widehat{\mathfrak{R}}_{\mu}(\mathcal{F}) + 3\zeta M \sqrt{\frac{\log \frac{4}{\delta'}}{2 \mu}}
  \end{equation}
 where $\zeta = 2\kappa (\Sigma^{\nu})$, $\delta' = \delta - 2 (\mu - 1) \rho^m$, and $\widehat{\mathfrak{R}}_{\mu}(\mathcal{F})$ denotes the empirical Rademacher complexity of $\mathcal{F}$.
\end{theorem}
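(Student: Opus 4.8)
The plan is to chain together three facts: (i) the deterministic bound from Proposition~\ref{prop:stability_control} that relates the causal error to the statistical error, (ii) an off-the-shelf uniform-convergence (Rademacher) bound for $\beta$-mixing time series to pass from the population statistical risk $\SErrw$ to the empirical statistical risk $\hat{\mathcal S}_\omega$ over the class $\mathcal F$, and (iii) the observation that a stationary, stable VAR($q$) process is geometrically $\beta$-mixing, which is what licenses step (ii) with the mixing rate $\rho^m$ appearing in the hypotheses. Concretely, I would first rewrite Proposition~\ref{prop:stability_control} as the one-sided inequality
\[
\CErrwi \;\le\; 2\kappa(\Sigma^{\nu})\,\SErrw \;-\; (2\kappa(\Sigma^{\nu}) - 1)\,\sigma^2_\epsilon \;\le\; \zeta\,\SErrw,
\]
using $\zeta = 2\kappa(\Sigma^{\nu}) \ge 1$ and $\sigma^2_\epsilon \ge 0$; this reduces the problem to bounding $\SErrw$ uniformly over $\mathcal F$.

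Next I would invoke the block-independence / mixing generalization bound of \textcite{mohriRademacher}. The sample of length $n$ is split into $2\mu$ blocks of length $m$ (so $2\mu m = n$), alternating blocks are discarded, and the surviving $\mu$ blocks are coupled to an i.i.d.\ block sequence at a total variation cost of $(\mu - 1)\beta(m)$, with $\beta(m) \le c\,\rho^m$ for the process's mixing coefficient. On the i.i.d.\ blocks one applies the standard Rademacher bound for the loss class: with probability at least $1 - \delta'$, simultaneously for all $f \in \mathcal F$,
\[
\SErrw \;\le\; \hat{\mathcal S}_\omega \;+\; \widehat{\mathfrak R}_\mu(\mathcal F) \;+\; 3M\sqrt{\tfrac{\log(4/\delta')}{2\mu}},
\]
where the square loss is truncated at $M$ so the loss is bounded (needed for McDiarmid) and $\widehat{\mathfrak R}_\mu$ is the empirical Rademacher complexity over the $\mu$ retained points. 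Undoing the coupling converts confidence $1 - \delta'$ into $1 - \delta$ with $\delta = \delta' + 2(\mu-1)\rho^m$, i.e.\ $\delta' = \delta - 2(\mu-1)\rho^m$, which is positive exactly under the stated hypothesis $\delta > 2(\mu-1)\rho^m$. Multiplying this high-probability bound through by $\zeta$ and combining with the deterministic step yields \eqref{eq:thm_main}.

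The two steps that need genuine care, rather than citation, are: establishing that the VAR($q$) process is geometrically $\beta$-mixing with an explicitly controllable rate $\rho < 1$ — this follows from stability ($|\lambda_i(A)| < 1$) plus a mild regularity assumption on the innovation density, and I would either cite the standard result (e.g.\ via \textcite{mohriRademacher} or the literature on mixing of linear processes) or sketch it from the geometric decay of $A^k$; and handling the truncation at $M$ together with the fact that the target of the uniform bound is the \emph{statistical} risk while the quantity of ultimate interest is the \emph{causal} risk — here the only subtlety is that Proposition~\ref{prop:stability_control} is a population statement, so it must be applied to $\SErrw$ \emph{before} the empirical-process step, not after, and the factor $\zeta$ must multiply every term including the complexity and deviation terms. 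I expect the main obstacle to be bookkeeping the constants through the block decomposition (the factor $3$, the $\sqrt{2\mu}$ versus $\sqrt{n}$, and the $\log(4/\delta')$) so that they match the cited bound exactly; the conceptual content is entirely in Proposition~\ref{prop:stability_control} plus the known mixing behaviour of stable VAR models.
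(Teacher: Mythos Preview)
Your proposal is correct and follows essentially the same route as the paper: apply Proposition~\ref{prop:stability_control} to bound $\CErrwi$ by $\zeta\,\SErrw$, then invoke the Rademacher bound of \textcite{mohriRademacher} for $\beta$-mixing processes using that stable VAR processes are geometrically $\beta$-mixing (the paper cites \textcite{mokkadem1988mixing} for this). Your write-up is in fact more detailed than the paper's own proof, which is a two-line sketch; your careful accounting of the block decomposition, the origin of $\delta' = \delta - 2(\mu-1)\rho^m$, and the order in which the population-level Proposition must be applied are all correct and match the intended argument.
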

Our causal generalization bound in Theorem \ref{thm:main} suggests that, given sufficiently many samples, the true causal error can be guaranteed to be close to empirical statistical error if our VAR models come from a class with a small Rademacher complexity, particularly when the process is associated with a small stability parameter.
%
%

We now focus on providing a detailed interpretation of our results. First, we take a minor detour to present a technical result (Lemma \ref{lemma:coef_as_schur}) which is useful both in deriving some of our main results as well as in interpreting them.
\begin{lemma}[\textbf{Expressing powers of a companion matrix using symmetric polynomials}]
\label{lemma:coef_as_schur}
For a companion matrix $A$ with distinct eigenvalues, for any $k \in [p]$, the $(1, k)$th element of $A^j$, can be expressed using Schur polynomials of the eigenvalues $\lambda = \myCurls{\lambda_1, \lambda_2, \cdots \lambda_p}$ of $A$, that is, $ A^j_{1, k} = S_{j,k}(\lambda)$, where $S_{j,k}(\lambda)$ refers to the Schur polynomial indexed by $K =  \{j, 1, \cdots {k-1 \textrm{ times}} \cdots, 1, 0, \cdots, 0  \}$.
\end{lemma}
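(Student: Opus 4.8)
The plan is to compute the generating function $\sum_{j\ge 0}(A^j)_{1,k}\,t^j$ in closed form and recognize it as the generating function of the hook Schur polynomials. Throughout I treat the scalar companion matrix, i.e.\ $A\in\R^{p\times p}$ with first row $(a_1,\dots,a_p)$ and $1$'s on the sub-diagonal (the only case the statement needs, since $k\in[p]$ and $A$ has $p$ eigenvalues); its characteristic polynomial is $\chi(\lambda)=\lambda^{p}-a_1\lambda^{p-1}-\cdots-a_p=\prod_{i=1}^{p}(\lambda-\lambda_i)$, so comparing coefficients gives $a_l=(-1)^{l-1}e_l(\lambda)$ with $e_l$ the $l$-th elementary symmetric polynomial of the eigenvalues.

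First I would use the resolvent: for small $|t|$, $(\mathbb{I}-tA)^{-1}=\sum_{j\ge 0}t^j A^j$, so $\sum_{j\ge 0}(A^j)_{1,k}\,t^j$ is the $(1,k)$ entry of $(\mathbb{I}-tA)^{-1}$, which by Cramer's rule is $(-1)^{1+k}$ times the $(k,1)$ minor of $\mathbb{I}-tA$, divided by $\det(\mathbb{I}-tA)$. The denominator is the reciprocal characteristic polynomial, $\det(\mathbb{I}-tA)=t^{p}\chi(1/t)=\prod_{i=1}^{p}(1-\lambda_i t)=\sum_{n=0}^{p}(-1)^n e_n(\lambda)\,t^n$. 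For the numerator, the banded sparsity of the companion matrix (all $1$'s on a single sub-diagonal) makes the required $(k,1)$ minor of $\mathbb{I}-tA$ essentially lower triangular, and it evaluates to a rescaled truncation of the reciprocal characteristic polynomial, of the form $t^{1-k}\sum_{l\ge k}(\pm\,a_l)\,t^{l}$. Dividing by $\prod_i(1-\lambda_i t)$ and expanding $\prod_i(1-\lambda_i t)^{-1}=\sum_{m\ge 0}h_m(\lambda)\,t^m$ (the complete homogeneous symmetric polynomials), the coefficient of $t^{j}$ becomes a fixed $e$-linear combination of $h_j,h_{j+1},\dots,h_{j+k-1}$, which is precisely the (dual) Jacobi--Trudi expansion of the Schur polynomial of the hook partition $(j,1^{k-1})$, i.e.\ $S_{j,k}(\lambda)$; see \parencite{macdonald1998symmetric}. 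In the degenerate range $j=0$, $k\ge 2$ this matches because the index $(0,1^{k-1})$, once sorted, has a repeated part and the Schur polynomial vanishes, in agreement with $(A^{0})_{1,k}=0$.

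As a cross-check, and an alternative that sidesteps the minor computation, both $j\mapsto(A^j)_{1,k}$ and $j\mapsto S_{j,k}(\lambda)$ satisfy the same order-$p$ linear recurrence in $j$ with coefficients $a_1,\dots,a_p$: for the matrix side this is Cayley--Hamilton, $A^{j}=a_1A^{j-1}+\cdots+a_pA^{j-p}$; for the Schur side it follows from $h_n=a_1h_{n-1}+\cdots+a_ph_{n-p}$ (Newton's identity, using $a_l=(-1)^{l-1}e_l$) together with the fact that every $S_{j,k}$ is the same linear combination of $k$ consecutive $h$'s. It then suffices to match a finite block of initial values, a direct computation. I expect the main obstacle to be the bookkeeping in the numerator step: evaluating the $(k,1)$ minor of $\mathbb{I}-tA$ exactly and tracking the signs that connect $a_l$ to $e_l(\lambda)$, so that the resulting $h$-combination lines up with the hook Jacobi--Trudi formula rather than a sign-twisted variant. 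The distinctness of the eigenvalues is used to keep the resolvent / partial-fraction bookkeeping clean and to ensure one is manipulating genuine Schur polynomials in $p$ variables.
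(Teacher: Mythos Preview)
Your proposal is correct and takes a genuinely different route from the paper. The paper diagonalizes $A=V\Lambda V^{-1}$ with $V$ the Vandermonde matrix in the eigenvalues (this is where distinctness enters), writes out $V^{-1}$ explicitly in terms of the elementary symmetric polynomials $e_{k-1}(\lambda\setminus\lambda_i)$, multiplies out to get $(A^{\omega})_{1,k}=(-1)^{k-1}\sum_i\alpha_i\lambda_i^{p+\omega-1}e_{k-1}(\lambda\setminus\lambda_i)$, and then recognizes this sum as the Laplace expansion of a generalized Vandermonde determinant divided by the ordinary one, i.e.\ the bialternant definition of the hook Schur polynomial. You instead compute the generating function $\sum_{j\ge0}(A^{j})_{1,k}\,t^{j}=\bigl((\mathbb{I}-tA)^{-1}\bigr)_{1,k}$ by Cramer's rule, obtain the cofactor as $\sum_{l\ge k}a_{l}\,t^{\,l-k+1}$ (your minor claim is correct for $k\ge 2$; for $k=1$ the cofactor is simply $1$, giving $(A^{j})_{1,1}=h_{j}=s_{(j)}$ directly), expand the denominator as $\sum_{m}h_{m}t^{m}$, and match the resulting $e$--$h$ combination with the hook Jacobi--Trudi identity $s_{(j,1^{k-1})}=\sum_{r=0}^{k-1}(-1)^{r}h_{j+r}\,e_{k-1-r}$ via Newton's relation. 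Your Cayley--Hamilton recurrence alternative is also a valid independent argument. The paper's route has the advantage of delivering the bialternant expression, which the paper exploits downstream to read off the $\omega$-dependence of $(A^{\omega})_{1,k}$; your route has the advantage that it never actually needs distinct eigenvalues (the resolvent, $e_l$, $h_m$, and Jacobi--Trudi are all insensitive to multiplicities), so you in fact prove slightly more than the stated hypothesis requires. On signs, both treatments agree: the precise identity is $(A^{j})_{1,k}=(-1)^{k-1}s_{(j,1^{k-1})}$, and the paper's appendix accordingly states the result for $\lvert A^{j}_{1,k}\rvert$.
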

Lemma \ref{lemma:coef_as_schur} shows that the coefficients of the powers of a companion matrix can be fully characterized using symmetric Schur polynomials of its eigenvalues. 
A good overview of these polynomials can be found in \textcite{chaugule2019schur}. An advantage of expressing the coefficients using symmetric Schur polynomials is that these polynomials have been a subject of extensive research in combinatorics and an equivalence between several alternate definitions has been established. To name a few, Cauchy's bialternant expression, \parencite{cauchy1815memoire,jacobi1841functionibus}, the combinatorial formula \parencite{macdonald1998symmetric} or Jacobi–Trudi identity \parencite{jacobi1841functionibus} are all equivalent ways to define Schur polynomials. It is therefore possible and often beneficial to choose the definition that yields the most useful notion for the context. We utilize this connection to interpret our results. First, for easier interpretation, we simplify Lemma \ref{lemma:diff_G_S_VAR} to the following result for scalar AR models.
\begin{corollary}[\textbf{\textbf{Difference in Causal and Statistical errors (AR)}}]
\label{corr:diff_c_s_ar}
Let $\myCurls{x_t}_{t\in \mathbb{Z}}$ follow an AR(q) process. Then, for any AR(p) model with parameters $\hat{A}$,
\begin{equation}
\label{eq:diff_caus_stat_ar}
    \abs{\CErrw - \SErrw} = 2 \bigg| (A^{\omega}_{11} - \hat{A}^{\omega}_{11}) \sum \limits_{k=2}^{\nu} (A^{\omega}_{1k} - \hat{A}^{\omega}_{1k}) \gamma_{k-1} \bigg|,
\end{equation}

where, for any $k \in \mathbb{N}$, $\gamma_{k}$ denotes the autocovariance of $\myCurls{x_t}_{t \in \mathbb{Z}}$ with lag $k$. $A$ and $\widehat{A}$ are the corresponding companion matrices of the model and estimated parameters as defined in Lemma \ref{lemma:diff_G_S_VAR}.
\end{corollary}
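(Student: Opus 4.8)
The corollary is the scalar ($d=1$) specialisation of Lemma~\ref{lemma:diff_G_S_VAR}, so the plan is simply to instantiate that lemma and rewrite its matrix notation in the language of autocovariances.

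Setting $d=1$ in Lemma~\ref{lemma:diff_G_S_VAR}, the only admissible component index is $i=1$, we have $d\nu = \nu$, and the companion matrices $A,\hat A$ become the $\nu\times\nu$ companion matrices of the true AR($q$) parameters $\{A'_1,\dots,A'_\nu\}$ and of the model's AR($p$) parameters, respectively, under the convention from the lemma that coefficients of order exceeding $q$ (resp.\ $p$) are set to zero so that both matrices have the common size $\nu = \max\{p,q\}$ and $A^\omega-\hat A^\omega$ is well defined. With these substitutions the left factor $(A^\omega_{11}-\hat A^\omega_{11})$ and the coefficients $(A^\omega_{1k}-\hat A^\omega_{1k})$ carry over verbatim, and the sum $\sum_{k\neq i}^{d\nu}$ becomes $\sum_{k=2}^{\nu}$.

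The one substantive step is to identify the autocovariance entries $\Sigma^\nu_{1k}$. By definition $\Sigma^\nu = \E[(y_t^\nu-\E y_t^\nu)(y_t^\nu-\E y_t^\nu)^T]$ with $y_t^\nu = (x_t,x_{t-1},\dots,x_{t-\nu+1})^T$, so its $(1,k)$ entry equals $\mathrm{Cov}(x_t,x_{t-k+1})$, which by weak stationarity is the lag-$(k-1)$ autocovariance $\gamma_{k-1}$. Plugging $\Sigma^\nu_{1k}=\gamma_{k-1}$ into Lemma~\ref{lemma:diff_G_S_VAR} gives exactly \eqref{eq:diff_caus_stat_ar}.

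Because this is a direct specialisation, there is no real obstacle beyond bookkeeping; the only point deserving care is the zero-padding when $p\neq q$, i.e.\ making sure $A$ and $\hat A$ are read as companion matrices of the common size $\nu$. For completeness one could also give a self-contained derivation bypassing the lemma: write $x_t = \sum_{k=1}^\nu A^\omega_{1k}\,x_{t-\omega-k+1} + (\text{noise})$ from the VAR(1) lift, use the analogous expression with $\hat A$ for $\hat f(\mathbf{x}^n_{t-\omega})$ (valid once $n\geq\nu$), expand both $\SErrw$ and the averaged interventional error as quadratic forms $b^T\Sigma^\nu b + \sigma_\epsilon^2$ and $b_1^2\gamma_0 + \sum_{k,l\geq2}b_kb_l\gamma_{|k-l|}+\sigma_\epsilon^2$ in $b_k:=A^\omega_{1k}-\hat A^\omega_{1k}$, and observe that independently randomising $x_{t-\omega}$ deletes precisely the cross terms $2b_1\sum_{k\geq2}b_k\gamma_{k-1}$; this is just the $d=1$ instance of the proof of Lemma~\ref{lemma:diff_G_S_VAR} and needs only second-moment stationarity and uncorrelatedness of the noise.
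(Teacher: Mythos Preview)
Your proposal is correct and follows essentially the same approach as the paper: the paper's own proof consists of the single sentence that the corollary ``directly follows from Lemmas~\ref{lemma:cov_intervene_form} and~\ref{lemma:diff_G_S_var},'' i.e., specialise the VAR difference lemma to $d=1$ and read off the entries of the (interventional vs.\ observational) autocovariance matrix. Your write-up is in fact more explicit than the paper's, spelling out the zero-padding to common size $\nu$ and the identification $\Sigma^{\nu}_{1k}=\gamma_{k-1}$, and your optional self-contained derivation is exactly the $d=1$ instance of the computation the paper carries out in the proof of Lemma~\ref{lemma:diff_G_S_var}.
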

Lemma \ref{lemma:diff_G_S_VAR} identifies factors that control causal generalizability. We now describe them.

\textbf{Correlations control causal generalizability.} Recall our motivating example of the two highly correlated time-series where the casual and statistical errors diverge. Intuitively, one would therefore expect that large correlations among time series potentially induce large differences between observational and interventional distributions. The quantitative dependence of causal generalizability on the correlation structure of the process is, however, less obvious. Lemma \ref{lemma:diff_G_S_VAR} confirms the intuition and shows that correlations between the intervened time-series $x_{t-\omega, i}$ across both the components and time instances in $\mathbf{x}_{t-\omega}$ control generalizability from observational to the interventional distributions.

\textbf{High-dimensional and higher-order processes can hurt generalization.} For high-dimensional processes it is not unlikely to have strong correlations across components, which may obscure causal relations in the same way as strong correlations across time does for univariate processes. Lemma \ref{lemma:diff_G_S_VAR} also supports this intuition and shows that strong correlations across components as well as time instances play a role. With increasing order or dimension of the processes, larger orders of covariances across time and dimensions could entail poor causal generalizability.

{\looseness=-1     
\textbf{Dependence on $\omega$.} The dependence of the error on $\omega$ arises through the elements of the matrix power $A^k$. A simple computation shows that, even for an AR(2) model, the dependence of these coefficients on the model parameters is asymmetric and highly intricate. However, using the Cauchy's bialternant formulation of Schur polynomials, 
we have that for any AR(p) model, the coefficients $A^{\omega}_{1k}$ can be expressed as
    $\displaystyle A^{\omega}_{1k} = (-1)^{k+1} \frac{\sum_{i=1}^p \lambda_i^{p + \omega - 1} e_k(\lambda_i)}{\det \big \vert {\big\{\lambda_k^{ p - k' }\big\}_{k,k' \in [p]} \big \vert}}$\,,
where $e_k(\lambda_i)$ refers to the elementary symmetric polynomial of order $k$ and with variables $\myCurls{\lambda_1, \cdots \lambda_{i-1}, \lambda_{i+1}, \cdots, \lambda_p}$.  While this is not the most interpretable definition per se, the dependence of the coefficients on $\omega$ is easily understood and it is easy to verify that if the underlying model as well as the estimated model are both stable ($|\lambda|< 1$), the coefficients and hence the difference in errors exponentially decays with interventions arbitrarily in the past of the target variable and if either of the process is not stable  ($|\lambda|> 1$), the difference can indeed diverge.}

Proposition \ref{prop:stability_control} allows us to obtain a high-level perspective on causal generalizability. It states that the condition number of the autocovariance matrix controls causal generalizability. Both the maximum and the minimum eigenvalue of the autocovariance matrix (and hence the condition number) can be used as a measure of stability and hence determine the strength of correlation of the underlying process \parencite{basu2015regularized, melnyk2016estimating}. As the process gets closer to the boundary of stability domain, the autocovariance matrix gets singular and hence the condition number of the auto-covariance matrix can get arbitrarily large. Proposition \ref{prop:stability_control}, therefore, can be interpreted as if the underlying process gets closer to the boundary of the stability domain the causal and statistical errors can diverge. 

For intuition, let us revisit our motivating example from the introduction with strongly correlated observations in an $AR(p)$ process.
Let, without loss of generality $p=q$. 
Introducing the vectors $a := (a_1,a_2, \dots,a_p)$ and  $\hat{a} := (\hat{a}_1,\hat{a}_2, \dots,\hat{a}_p)$ and the covariance matrix $\Sigma_p = \Sigma_{\max\{p,q\}}$. Then the quotient between causal and statistical error for 
predicting one time step ahead i.e. $(\omega=1)$ reads:
\begin{equation}\label{eq:quotient}  
  \frac{ \CErrw}{\SErrw} = \frac{(\hat{a} - a )^T (\hat{a} - a ) + \sigma^2_{\epsilon} }{ (\hat{a} - a )^T \Sigma_p (\hat{a} - a ) + \sigma^2_{\epsilon}},
\end{equation}    
Where we have assumed $X_t$ to have unit variance without loss of generality.    
The quotient is maximized if $(\hat{a}-a)$ is a multiple of the eigenvector to the smallest eigenvalue of  $\Sigma_p$. This aligns with the intuition that causal loss diverges when the auto-covariance matrix gets singular. Moreover, we see that the vector $(\hat{a}-a)$ can be large with little observable effect when it mainly consists of eigenvectors with small eigenvalues of $\Sigma_p$. In the extreme case, if the minimum eigenvalue of the autocovariance matrix is $0$, it is possible to arbitrarily deviate from the true model parameters along the direction of the corresponding eigenvector which can significantly affect the causal error without affecting the statistical error at all.
For an $AR(2)$ process, for instance, we obtain
$\Sigma_p = \left(\begin{array}{cc} 1 & a_1/(1-a_2) \\ a_1/(1-a_2) & 1 \end{array}\right)$,
which becomes singular for $a_1= \pm (1-a_2)$ which indeed is the boundary of the stability domain (see for example,  \textcite{lutkepohl2009econometric}). This is the limit in Section \ref{sec:intro}  where
$X_t =\pm X_{t-1}$. The eigenvector for eigenvalue $0$ reads
$(1,\mp 1)$. Accordingly, the quotient \eqref{eq:quotient} diverges
when $\hat{a}$ differs from $a$ by $(1,\mp 1)$. 

This further highlights that even for simple classes of forecasting models and with simplifying assumptions such as causal sufficiency, causal risks may even diverge from statistical risks. To show this formally, by means of Lemma \ref{lemma:coef_as_schur}, we can derive an explicit upper bound on the condition number of the autocovariance matrix $\kappa(\Sigma_{ \max \myCurls{p,q}})$ for AR(p) models and arrive at Corollary~\ref{corr:Population_diff_ar}.

\begin{corollary}[\textbf{Stability Controls Causal Generalization (AR)}]
\label{corr:Population_diff_ar}
Consider an AR(q) process, such that eigenvalues of its companion matrix satisfy $\abs{\lambda} < \delta < 1$. For any AR(q) model $f$,
 \begin{align}
 \label{eq:stability_control_ar}
     \abs{\CErrwi  - \SErrw } & \leq K_p \SErrw(f) {\nu(1 + \delta)^{2 \nu}}/{(1 - \delta^2)},
\end{align}
where $K_p$ is some finite constant that depends on the order $p$ of the underlying process.
\end{corollary}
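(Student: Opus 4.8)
The plan is to deduce Corollary~\ref{corr:Population_diff_ar} from Proposition~\ref{prop:stability_control} by producing an \emph{explicit} bound on the condition number $\kappa(\Sigma^{\nu})$ of the autocovariance matrix. In the scalar setting with $p=q$ we have $\nu=\max\{p,q\}=p$, and Proposition~\ref{prop:stability_control} already gives $\abs{\CErrwi-\SErrw}\le (2\kappa(\Sigma^{\nu})-1)(\SErrw-\sigma^2_\epsilon)\le 2\kappa(\Sigma^{\nu})\,\SErrw(f)$, where the last step uses $\kappa\ge 1$ and $\SErrw\ge\sigma^2_\epsilon\ge 0$. So it suffices to show $2\kappa(\Sigma^{\nu})\le K_p\,\nu(1+\delta)^{2\nu}/(1-\delta^2)$, which I would do by bounding $\lambda_{\max}(\Sigma^{\nu})$ from above and $\lambda_{\min}(\Sigma^{\nu})$ from below separately.

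For the smallest eigenvalue I would pass to the spectral density of the AR$(q)$ process, $f_X(\theta)=\tfrac{\sigma^2_\epsilon}{2\pi}\abs{\phi(e^{i\theta})}^{-2}$, where $\phi(z)=\prod_{i=1}^{p}(1-\lambda_i z)$ is the characteristic polynomial factored over the eigenvalues $\lambda_i$ of the companion matrix. Since $\abs{\lambda_i}<\delta$ for every $i$, we get $\abs{\phi(e^{i\theta})}\le(1+\delta)^{p}$ uniformly in $\theta$, hence $2\pi f_X(\theta)\ge\sigma^2_\epsilon(1+\delta)^{-2p}$. Writing $v^{*}\Sigma^{\nu}v=\int_{-\pi}^{\pi}\abs{V(\theta)}^{2}f_X(\theta)\,d\theta$ for the trigonometric polynomial $V(\theta)=\sum_j v_j e^{ij\theta}$ with $\tfrac{1}{2\pi}\int\abs{V(\theta)}^{2}d\theta=\norm{v}^{2}$, the min--max characterization of eigenvalues then yields $\lambda_{\min}(\Sigma^{\nu})\ge\sigma^2_\epsilon(1+\delta)^{-2\nu}$.

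For the largest eigenvalue I would use $\lambda_{\max}(\Sigma^{\nu})\le\operatorname{tr}(\Sigma^{\nu})=\nu\gamma_0$ and then control $\gamma_0=\operatorname{Var}(x_t)$ through Lemma~\ref{lemma:coef_as_schur}: the causal moving-average representation $x_t=\sum_{k\ge0}\psi_k\epsilon_{t-k}$ has $\psi_k=\CompMat^{k}_{1,1}=S_{k,1}(\lambda)$, which is the complete homogeneous symmetric polynomial $h_k(\lambda_1,\dots,\lambda_p)$, so $\gamma_0=\sigma^2_\epsilon\sum_{k\ge0}h_k(\lambda)^{2}$. Because $h_k$ has nonnegative coefficients, $\abs{h_k(\lambda)}\le h_k(\delta,\dots,\delta)=\binom{k+p-1}{p-1}\delta^{k}$, and summing the resulting series gives $\gamma_0\le\sigma^2_\epsilon\,c_p/(1-\delta^2)$ for a finite $c_p$ depending only on $p$ (the combinatorial prefactors and the excess powers of $(1-\delta)^{-1}$ being absorbed into $c_p$ since $\delta<1$). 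Combining the three estimates gives $\kappa(\Sigma^{\nu})=\lambda_{\max}/\lambda_{\min}\le c_p\,\nu(1+\delta)^{2\nu}/(1-\delta^2)$; substituting into the consequence of Proposition~\ref{prop:stability_control} above with $K_p:=2c_p$ closes the argument. (One could equivalently start from the AR specialization in Corollary~\ref{corr:diff_c_s_ar} and Lemma~\ref{lemma:diff_G_S_VAR} if one prefers to bypass Proposition~\ref{prop:stability_control}.)

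The main obstacle is the third step: pinning down the advertised $\delta$-dependence of $\gamma_0$ (equivalently of $\lambda_{\max}(\Sigma^{\nu})$) while keeping the constant clean. The crude estimate $\gamma_0\le\sigma^2_\epsilon(\sum_k\abs{\psi_k})^{2}\le\sigma^2_\epsilon(1-\delta)^{-2p}$ is too lossy to reproduce the stated form factor-by-factor, so one must handle the symmetric-polynomial sums a bit more carefully, or simply fold the mismatch into $K_p$, which is legitimate since $p$ is held fixed. A secondary technicality is that Lemma~\ref{lemma:coef_as_schur} is stated for companion matrices with distinct eigenvalues; coincident roots are handled by invoking continuity of $\kappa(\Sigma^{\nu})$, of $\gamma_0$, and of the $h_k$ in the model parameters and passing to the limit.
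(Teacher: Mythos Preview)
Your overall strategy matches the paper's exactly: invoke Proposition~\ref{prop:stability_control}, then bound $\kappa(\Sigma^{\nu})$ by controlling $\lambda_{\min}$ and $\lambda_{\max}$ separately. The technical lemmas you use differ slightly. For $\lambda_{\min}$, you factor the AR polynomial directly as $\phi(z)=\prod_i(1-\lambda_i z)$ and bound $\abs{\phi(e^{i\theta})}\le(1+\delta)^{p}$; the paper instead bounds the coefficients $\abs{a_k}\le\binom{p}{k}\delta^{k}$ via the elementary symmetric polynomials (a special case of Lemma~\ref{lemma:coef_as_schur}), sums them via the binomial theorem to get $1+\sum_k\abs{a_k}\le(1+\delta)^{p}$, and then appeals to a spectral-density inequality of \textcite{basu2015regularized}. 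Both routes land on $\lambda_{\min}\ge\sigma^2_\epsilon(1+\delta)^{-2\nu}$; yours is the more direct of the two. For $\lambda_{\max}$, you use the trace bound $\lambda_{\max}\le\nu\gamma_0$, while the paper uses Gershgorin on the Toeplitz matrix to get $\lambda_{\max}\le 2\sum_{i=0}^{\nu-1}\abs{\gamma_i}$ and then bounds each $\abs{\gamma_k}$; in both cases the work reduces to controlling the MA coefficients $\psi_k=h_k(\lambda)$.

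On the point you flag as the main obstacle: you are right that the honest estimate $\abs{h_k(\lambda)}\le\binom{k+p-1}{p-1}\delta^{k}$ produces extra powers of $(1-\delta)^{-1}$ when summed, and your fallback of absorbing them into $K_p$ is \emph{not} legitimate, since $K_p$ must be independent of $\delta$. The paper sidesteps this by asserting the uniform bound $\abs{\psi_i}=\abs{A^{i}_{11}}\le p^{p}\delta^{i}$ in its proof of the autocovariance lemma, which is what yields the clean $1/(1-\delta^{2})$ factor; if you adopt that bound (or any estimate of the form $\abs{\psi_i}\le C_p\,\delta^{i}$ with $C_p$ independent of $i$) in place of the binomial one, your trace argument closes immediately with $\gamma_0\le C_p^{2}\sigma^2_\epsilon/(1-\delta^{2})$ and the stated corollary follows.
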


{\looseness=-1 The bound in Corollary \ref{corr:Population_diff_ar} is elegant due to its simplicity and generality. However, the cost of generality of the bound that relies only on the stability parameter is clearly that it cannot explain the variations in behavior exhibited by individual processes with the same stability parameter. For instance, consider an AR(2) model with parameters $a_1$ and $a_2$ with $a_2 \approx 0$ so that it is essentially an AR(1) model. Then, it is easy to verify that $\lambda_2 \approx 0$. The combinatorial definition of the Schur polynomials \parencite{macdonald1998symmetric} allows us to express the coefficients as follows: $ A^{\omega}_{11} = \sum_{i=0}^{\omega} \lambda_1^{\omega - i} \lambda_2^{i}, \quad A^{\omega}_{12} = \sum_{i=1}^{\omega - 1} \lambda_1^{\omega - i} \lambda_2^{i}.$ { Combining this with Corollary \ref{corr:diff_c_s_ar}, it is easy to see that if the estimated model is also close to AR(1), then the coefficients $A^{\omega}_{12}$ and $\widehat{A}^{\omega}_{12}$ and hence the difference in statistical and causal errors is close to $0$. The bound in (\ref{eq:stability_control_ar}) which relies on the stability parameter does not capture this. For tighter bounds that utilize additional information about the spectrum of the companion matrix, we can exploit the connection to Schur polynomials to arrive at the following bound.}
\begin{equation*}
    \label{eq:tight_ar_polynomial_bounds}
    \abs{\CErrwi  - \SErrw } \leq K_{p,q} \max \myCurls{\delta, \widehat{\delta}}^{\omega}\sum \limits_{k=2}^{\nu} \big(S_{\omega k}^{\lambda} - S_{\omega k}^{\hat{\lambda}}\big) \gamma_{k-1},
\end{equation*}
where $K_{p,q}$ is a constant that depends on $p,q$, $\delta$ and $\widehat{\delta}$ are the stability parameters of the true and estimated processes respectively and $\lambda$ and $ \widehat{\lambda}$ denote the set of eigenvalues of $A$ and $\widehat{A}$ respectively.}

\section{Simulations}
\label{sec:experiments}
\begin{figure*}[!htb]
    \centering
    \includegraphics[width=0.80\textwidth]{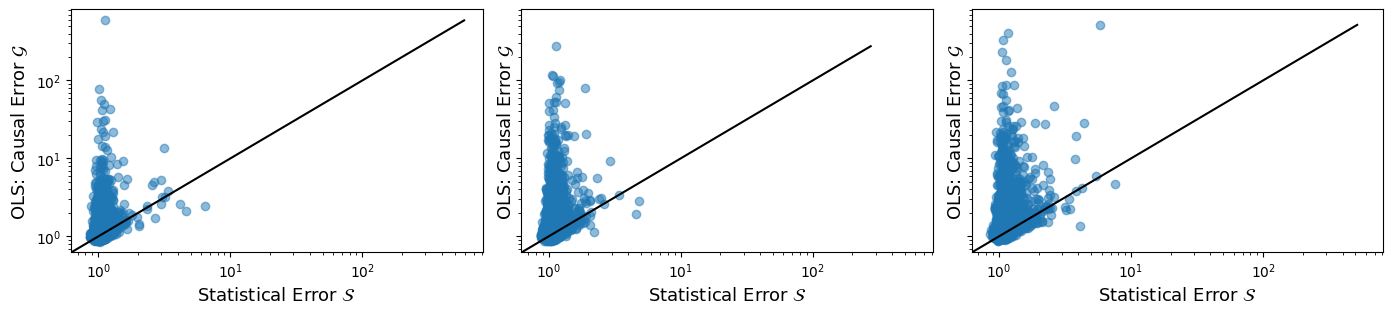}
    \caption{The causal error $\mathcal{G}$ versus the statistical error $\mathcal{S}$ for AR($p$) processes with $p=3, 5, 7$.}
    \label{fig:error}
\end{figure*}

\begin{figure*}[!htb]
    \centering
    \includegraphics[width=0.80\textwidth]{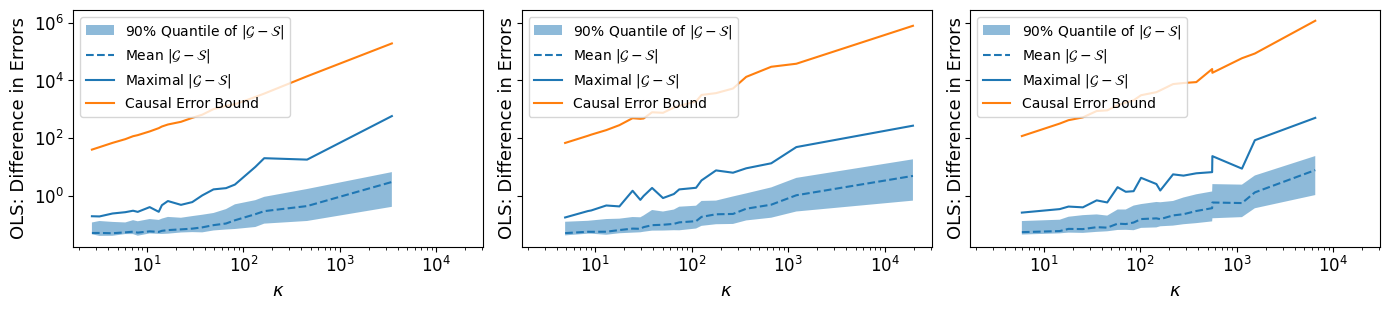}
    \includegraphics[width=0.80\textwidth]{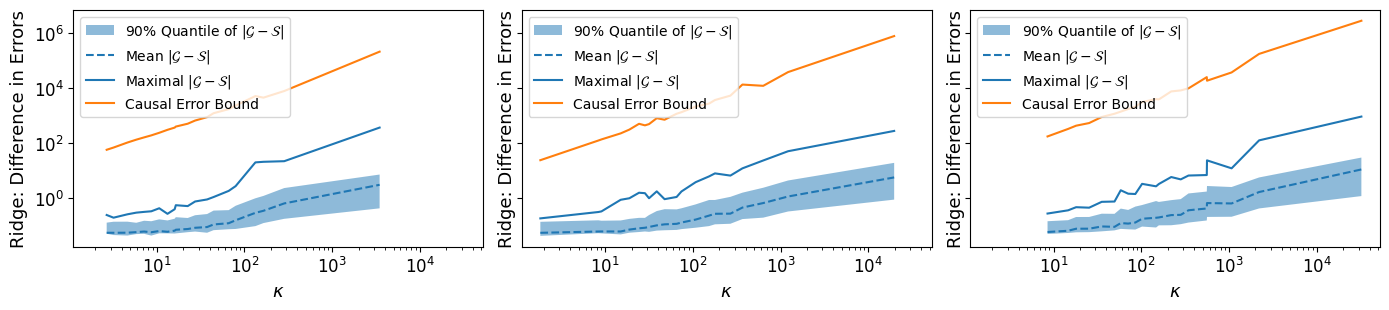}
    \caption{The maximal difference between statistical error $\mathcal{S}$ and causal error $\mathcal{G}$ as well as an estimate for the generalization bound in Theorem \ref{thm:main} for increasing condition number $\kappa$ for process orders $p=3,5,7$ (from left to right). The maximum is taken over 500 datasets with the closest $\kappa$. Our theoretical bounds (orange) closely match the empirical evaluations up to constant factors (blue).}
    \label{fig:corr_vs_err}
\end{figure*}

To verify the practical behavior of causal and statistical risks, we provide some simple simulations to study the errors of different estimators under AR processes.
For each presented plot, we draw parameters for 10,000 stationary $\mathit{AR}(p)$ processes using rejection sampling. We draw the coefficients of each process independently and uniformly from $[-2, 2]$ and reject sets of parameters that yield a non-stationary process. 
For each process, we draw a training sample with 100 timesteps and a test sample with 1000 timesteps.
For all figures in the main paper we set $\omega=1$.
To estimate the coefficients we use Ordinary Least Squares (OLS). 
In Appendix \ref{sec:additional_simulations} we provide additional plots with hidden confounder, as well as varying order, sample size, $\omega$ and other estimators: Ridge, Lasso, and Elastic Net regressors.
OLS minimizes the empirical statistical error, that is, $\sum_{y_i, \hat{y}_i} (y_i-\hat{y}_i)^2 $, where $\hat{y}_i$ denotes the model prediction with estimated parameters $\hat{a}$.

In line with our theoretical results, we find that even for simple scalar AR processes of small orders, the causal error of the estimators is often several times larger than the statistical error (see Figure \ref{fig:error}).
In Figure \ref{fig:corr_vs_err} we sorted the randomly drawn datasets by their autocorrelation (measured by the condition number $\kappa$ of the autocorrelation matrix) and split the sorted list into buckets of 500 dataset. For each we calculated the maximum, mean and 90\% quantile of the difference in causal and statistical error for the OLS and Ridge estimators. The plots corresponding to the other estimators are provided in Appendix \ref{sec:additional_simulations} We can see that upto constant factors, our theoretical finite sample causal generalization bound matches the difference in causal and statistical risks observed empirically.

\section{Experiments on Real Data} \label{sec:real_exp}
\begin{figure*}[!htb]
        \includegraphics[width=0.33\textwidth]{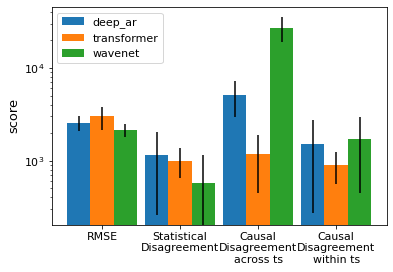}
         \includegraphics[width=0.33\textwidth]{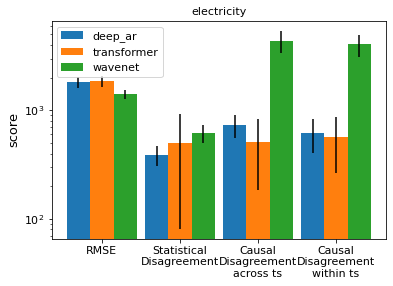}
         \includegraphics[width=0.33\textwidth]{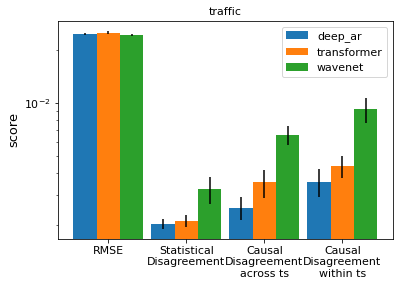}
     \caption{\label{fig:real_exp} Results of the evaluation of three different deep neural network architectures on the m4hourly, electricity, and traffic datasets. The ``RMSE`` is computed comparing prediction on the observational data against the ground truth. The disagreement from Def.~\ref{def:disagreement} compares the root-mean-square deviation between the predictions of two models of the same architecture on the observational data (``Statistical Disagreement``)  and interventional distributions (``Causal Disagreement Across TS`` sampling interventions from all of time-series  and ``Causal Disagreement Within TS`` sampling interventions from prior points within the time series). The results are averaged over 5 runs of training and evaluation and include standard deviation in black.}
\end{figure*}

\noindent\textbf{Data. }
We conduct experiments on three different datasets: m4 hourly \parencite{m4}, electricity \parencite{UCI}, and traffic \parencite{UCI}. The m4 hourly dataset includes timeseries from a diverse set of sources. The m4 dataset has a hourly frequency and a prediction length of 48. The traffic dataset records the occupancy rates of car lanes on freeways in the San Francisco Bay Area and the electricity dataset records the electricity consumption of 370 customers hourly.
To create an interventional distribution without a generative model, for each time series we replace the last time step prior to the evaluation window by sampling at random either from all time-series at that time step (referred to as {\it across-ts}) or from previous values of the same time series (referred to as {\it within-ts}). 

\noindent\textbf{Models. } We include three popular deep neural network architectures in our evaluation.
DeepAR consists of an RNN that takes the previous time steps as inputs and predicts the parameters of an auto-regressive model~\parencite{deepar}.
Wavenet is a hierarchical CNN developed for speech-to-text~\parencite{wavenet}. 
Transformer is an attention-based deep neural network widely applied to NLP tasks including translation~\parencite{transformer}. 
For all these models we use AutoGluonTS's default hyperparameters. 

The experiments were conducted using GluonTS~\parencite{gluonts} with default hyperparameters on instances with 4 virtual CPUs and a 2.9 GHz processor. The code for reproducing all the experiments can be found at \url{https://github.com/amazon-research/causal-forecasting} 

\noindent\textbf{Metrics. }
For the observational distribution, we compute the root-mean-square error (RMSE) comparing average prediction for each time point with the ground truth in the evaluation set. 
For the interventional distribution we are lacking ground truth. Therefore, we train two separate models and compute their disagreement.

\begin{definition}\label{def:disagreement}
The disagreement is the average root-mean-square deviation of the mean forecasts of two models. The average is taken over a set of time-series. If the time-series come from the original dataset, we call it the statistical disagreement. If they come from one of the interventional datasets, we call it causal disagreement and specify the type of intervention as across time-series or within time-series.  
\end{definition}
This disagreement is a measure of uncertainty introduced by the randomness in the training and evaluation procedure. Here, however, we use it to approach the causal risk, that we cannot compute directly. If the disagreement is high on the interventional distribution at least one of the models must have a high causal risk. For comparison, we also included this disagreement measure for examples from the observational distribution. Finally, to explore the relationship between causal forecasting error and uncertainty, we also compute the width of the 80\% prediction interval for both the observational and interventional distribution.
\begin{definition}\label{def:pred_width}
The  80\% prediction width of a forecasting model is the absolute distance between the 0.9 quantile and 0.1 quantile of the forecast distribution. It is averaged over a set of time-series that can come from the observational or the interventional distritibutions.
\end{definition}

\noindent\textbf{Limitations. } The dataset and models have clear shortcomings. Likely, the dataset is not causally sufficient. Also, we did not tune the models. Moreover, we are lacking samples from the marginal distribution for the interventions and groundtruth on what happens under these interventions. 
Nevertheless, we hope to get a sense for how popular deep learning networks can behave on real data for relevant prediction tasks under interventions.

\textbf{Results.} Figure~\ref{fig:real_exp} shows the results of the metrics when we evaluate the models on the datsets for both observation and interventional distributions. We see that the causal disagreement between two models of the same architecture and hyper-parameters can be much higher than their disagreement on the observational distribution. While there are only smaller differences in the statistical risk between the model architectures, their causal disagreement differs more.
Overall, the the causal disagreement can be high, which implies high causal risk, but it varies across datasets and model architectures. Wavenet's  disagreement is an order of magnitude larger when sampling interventions from other time-series. For transformer models their interventional disagreement is close to the observational one.

\noindent\textbf{Uncertainty. }
\begin{table}
\centering
\resizebox{0.49\textwidth}{!}{%
\begin{tabular}{|c||c | c | c|}
\hline
Model  & observ. & across-ts interv. &
within-ts interv. \\
\hline
DeepAR & 940.0 $\pm$ 126.2 & 1329.2 $\pm$ 187.5 & 953.1 $\pm$ 124.2 \\
wavenet & 1253.9 $\pm$ 96.6 & 3444.7 $\pm$ 649.4 & 1612.7 $\pm$ 257.7 \\
transformer & 1259.3 $\pm$ 139.3 & 1355.1 $\pm$ 129.6 & 1255.7 $\pm$ 139.3 \\
\hline
\end{tabular}}
\caption{80\% prediction width for the m4 dataset, see Def.~\ref{def:pred_width}, for observational and interventional forecasts. Averaged over 5 runs with std.}

\label{table:uncertainty}
\end{table}

When we compare the width of the 80\% interval of predictions in Table~\ref{table:uncertainty} (m4 dataset) and Table~\ref{table:uncertainty_supp} (electricity and traffic datasets) we see that this uncertainty measure is higher for the  interventional distribution compared to the observational one. Moreover, directionally it relates to the causal disagreement across models.
Unlike the disagreement that requires a second model to be trained, this uncertainty measure is readily available from the predicted forecasts.

\begin{table*}
\centering
\resizebox{0.99\textwidth}{!}{%
\begin{tabular}{|c||c | c | c|| c | c | c|}
\hline
Dataset & \multicolumn{3}{|c|}{electricity} & \multicolumn{3}{|c|}{traffic}  \\
\hline
Model  &  observ. &  across-ts interv. &  within-ts interv. &  observ. &  across-ts interv. &  within-ts interv.  \\
\hline
DeepAR & 381.550 $\pm$ 21.647 & 449.781 $\pm$ 27.536 & 375.632 $\pm$ 20.851 & 0.0282 $\pm$ 0.0015 & 0.0288 $\pm$ 0.0017 & 0.0294 $\pm$ 0.0018  \\
wavenet & 470.691 $\pm$ 15.886 & 799.307 $\pm$ 65.722 & 588.469 $\pm$ 39.911 & 0.0246 $\pm$ 0.0003 & 0.0279 $\pm$ 0.0003 & 0.0299 $\pm$ 0.0003 \\ 
transformer & 413.174 $\pm$ 31.243 & 575.946 $\pm$ 35.456 & 407.372 $\pm$ 29.073 & 0.0282 $\pm$ 0.0023 & 0.0312 $\pm$ 0.0031 & 0.0328 $\pm$ 0.0033 \\
\hline
\end{tabular}}
\caption{80\% prediction width for observational and interventional forecasts on electricity and traffic datasets. Averaged over 5 runs with std.}

\label{table:uncertainty_supp}
\end{table*}


The causal disagreement can be high for some models which implies a high causal risk. This cautions against the use of statistical deep learning models to forecast what will happen under interventions. The difference we observe in causal disagreement across models motivates further development of specific model architectures suitable for causal forecasting. For existing models, the uncertainty measure considering the width of the prediction interval can be an indicator for causal risk.  
 


%
%
%

\section{Related Work}

\label{sec:related_work}
{
Our work intersects with domain adaption, RL, and treatment effect estimation, reviewed separately below.

\textbf{Domain Adaptation.} The literature closest to our setting is that of learning theory for domain adaptation, in particular, for covariate shift. Theoretical analysis of domain adaptation when labelled samples from the source distribution and unlabelled samples from the target distribution are generated i.i.d was initiated by \textcite{ben2007analysis}, who provided VC bounds for binary classification under covariate shifts based on a \textit{discrepancy measure} $d_{\mathcal{F}}$ between source and target distributions that depends on the hypothesis class $\mathcal{F}$ and is estimable from finite samples. \textcite{mansour2009domain} extended the work to the context of regression in the i.i.d setting by adapting the discrepancy measure for more general loss functions and by providing tighter, data-dependent Rademacher bounds. Despite the i.i.d assumption, the results in \textcite{mansour2009domain} are perhaps the most relevant to our setting. We can utilize one of the main results from \textcite[Theorem 8]{mansour2009domain} which does not rely on the i.i.d assumption to arrive at the following population-level bound  for our setting: $\abs{\CErrwi(f, f^*) - \SErrw(f, f^*)} \leq \sup_{f ,f' \in \mathcal{F}} \abs{\CErrwi(f, f') - \SErrw(f, f')}$. These bounds are non-informative in our context since they do not incorporate structural knowledge of the class of interventional distributions under a VAR model.

\textbf{Estimation of Treatment Effects.} A related problem is that of estimating treatment effects in the potential outcomes framework \parencite{hill2006interval, shi2019adapting}, where the goal is to estimate the effects of binary-valued treatments from observational data under a multivariate confounding model. Our setting is more general in that variables in the multivariate process can take a continuum of interventions and play a multiplicity of roles --- each variable plays the role of treatment, confounder, and the target variable. Of particular relevance is the work of \textcite{shalit2017estimating, johansson2020generalization}, who prove generalization error bounds on estimating individual-level treatment effects in terms of standard generalization error and a distance measure between the treated and control distributions. This result is similar to domain adaptation bounds in \textcite{ben2007analysis, mansour2009domain} and may be interpreted as causal learning theory in the sense of our paper.}

\textbf{Reinforcement Learning.} The ratio of observational versus interventional densities in our setting play a similar role as the state density ratio in off-policy evaluation in reinforcement learning(RL) 
\parencite{bennett2021off}. In RL, however, the clear separation between the state of actions and the state space acted on admits techniques that we do not see for our problem, e.g., deconfounding \parencite{hatt2021sequential}, or learning representations of the history that are independent of the actions \parencite{bica20}, which overcomes the problem of high inverse probability weightings \parencite{Lim2018}.

\vspace{-1mm}
\section{Discussion and Conclusion}
\label{sec:discussion}
{\looseness=-1
Our work highlights that even for very simple models and even under simplifying assumptions such as causal sufficiency, causal and statistical errors can diverge.
It emphasizes the need for providing guarantees for causal generalization in a similar vein as providing guarantees for statistical learning. To this end, we initiate a first analysis in this direction by introducing a framework for {causal learning theory} for forecasting and providing conditions under which one can guarantee generalization in the causal sense for the class of VAR models. We hope that this work inspires more theoretical work that allows certifying the validity of the causally interpreting forecasting models. 

Our theoretical as well as empirical results challenge the causal interpretation of forecasting models used in practice which are typically far more complex. Our experiments show that causal disagreement can be high for some models which implies a high causal risk. This cautions against the use of statistical deep learning models for causal forecasting. The difference we observe in causal disagreement across models motivates further development of specific model architectures suitable for causal forecasting. For existing models, the uncertainty measure considering the width of the prediction interval can be an indicator for causal risk.}

\section{Acknowledgments}
This work has been supported by the German Research Foundation (Research Training Group GRK
2428) and the Baden-Wurttemberg Stiftung (Eliteprogram for Postdocs project “Clustering large ¨
evolving networks”). The authors thank the International Max Planck Research School for Intelligent Systems (IMPRS-IS) for supporting Leena Chennuru Vankadara.

\printbibliography

\clearpage

\appendix

\onecolumn
\aistatstitle{Supplementary to Causal Forecasting}

\section{Background}

\label{sec:background}
\textbf{Notation.} We recall the notation and some key definitions here for the reader's convenience.
 For any stochastic process $\myCurls{x_t}_{t\in \mathbb{Z}} \in \mathbb{R}^d$, we use $\mathbf{x}^n_{t-\omega} = \myCurls{x_{t-\omega-n+1}, \cdots, x_{t-\omega - 1}, x_{t-\omega}} $ to denote the \textit{set} of $x_{t-\omega}$ and the $n-1$ variables in the past of $x_{t-\omega}$. We distinguish this from $y_t^n$ which denotes the \textit{vector} $\begin{pmatrix} x_{t}, x_{t-1} , \cdots, x_{t-n+1}\end{pmatrix}^T \in \R^{nd}$. When it is clear from context, to reduce cumbersome notation, we simply use $y_t$. For any random variable $x$, $\mathbb{E}[x]$ denotes its expectation. For any matrix $A$, we use $A_{i:}$ and $A_{:j}$ to denote the $i$th row and $j$th column of $A$ respectively. We use $A^j_{1k}$ to denote the $(1, k)$th element of $A^j$. For any vector $x_t$ at time $t$, we use $x_{t,i}$ to denote the $i$th element of $x_t$. We use $\lambda_{\max}(A), \lambda_{\min}(A), \kappa(A)$  to denote the maximum and minimum eigenvalues and the condition number of $A$ respectively, where $\kappa(A) = \lambda_{\max}(A) / \lambda_{\min}(A)$. $\mathbb{I}_{p}$ denotes the identity matrix of size $p$,  $\mathbb{N}, \mathbb{Z}$ denote the set of natural numbers and integers respectively and $[n]$ denotes the set $\myCurls{1, 2, \cdots n}$. 
\begin{definition}[\textbf{Vector Autoregressive Model}]
\label{def:VAR}
A vector autoregressive model (VAR(p)) of dimension $d$ and order $p$ is defined as 
 \begin{equation}
 \label{eq:var_1}
     x_t = A_1 x_{t-1} + A_2 x_{t-2} + \cdots A_P x_{t-p} + \epsilon_t, \;  
 \end{equation}
 where $x_t \in \R^d$ is a vector-valued time-series, for all $i \in [p]$, $A_i \in \R^{d \times d}$ are the coefficients of the VAR model, and $\epsilon_t \in \R^{d}$ denotes the noise vector such that $ \E[\epsilon_t] = 0$ and $\E[\epsilon_t \epsilon_{t+h}^T] =  \Sigma_{\epsilon} \textrm{ if} \; h = 0$ and $0 \textrm{ otherwise}.$ For some $\sigma_{\epsilon}^2 > 0$, we simply set $\Sigma_{\epsilon} = \sigma^2_{\epsilon} \mathbbm{I}$ for enhanced readability. Our results can be easily generalized to arbitrary covariance matrices by means of the spectral properties ($\lambda_{\min}, \lambda_{\max}$) of $\Sigma_{\epsilon}$. 
%
 %
\end{definition}
\begin{definition}[\textbf{Weak Stationarity}]
	\label{def:weak_stationarity}
		A stochastic process $\{x_t\}_{t \in \mathbb{Z}}$ is weakly stationary if the mean and the covariance of the process does not change over time, that is, for all $t, \tau \in \mathbb{Z}$
	\begin{equation} 
		\begin{aligned}
			\E[x_t] &= \E[x_{t+\tau}],
			\; \;  
			\mathbb{C}_{x}(t, t+\tau) = \mathbb{C}_{x}(0, \tau),
		\end{aligned}
	\end{equation}
	where $ \mathbb{C}_{x}(t, t+\tau) = \E[(x_t - \E[x_t])(x_{t+\tau} - \E[x_{t+\tau}])] $ denotes the autocovariance function.
\end{definition}
The autocovariance matrix of $\myCurls{x_t}_{t \in \mathbb{Z}}$ plays a central role in our results and analysis. For any $n \in \N$, we use $\Sigma_{n}$ to denote the autocovariance matrix of size $n$ defined as $ \E [(y^n_{t} - \E[y^n_{t}]) (y^n_{t} - \E[y^n_{t}])^T]$.

It is often quite convenient to rewrite a VAR model of order $p$ in Equation (\ref{eq:var_1}) as a VAR(1) model, $ y_t = A y_{t-1} + e_t$, where $y_t \in \mathbb{R}^{dp}, e_t \in \mathbb{R}^{dp}$ are defined as $y_t = \begin{pmatrix} x_{t}, x_{t-i} , \cdots, x_{t-p+1}\end{pmatrix}^T$, $e_t = \begin{pmatrix} \epsilon_t, 0, \cdots, 0 \end{pmatrix}^T$, and $A \in \mathbb{R}^{dp \times dp}$ is a \textit{(multi) companion matrix} defined as: 
    \begin{align}
    \label{eq:var_1_def_supp}
        A = \begin{pmatrix}
        A_1 & A_2 & \cdots & A_{p-1} & A_p \\
        I & 0 & \cdots & 0 & 0 \\
        0 & I & \cdots & 0 & 0 \\
        \vdots &\vdots &\cdots &\vdots &\vdots \\
        0 & 0 & \cdots & I & 0
        \end{pmatrix}.
    \end{align}
%
%
%
The eigenvalues of the multi-companion matrix $A$ fully characterize the stability and stationarity of the VAR process. For a VAR(p) process to be weakly stationary, the eigenvalues of $A$, which satisfy 
\begin{equation}
\textrm{det} \abs{\mathbb{I}_d \lambda^p - A_1 \lambda^{p-1} - A_2 \lambda^{p-2} - \cdots - A_p} = 0,
\end{equation}
are constrained to not lie on the unit circle. If the magnitude of the eigenvalues are $\abs{\lambda_i} < 1$ for all $i \in [dp]$, then the underlying process is stable, that is, its values do not diverge \parencite{lutkepohl2013vector}.

\begin{figure}[!htp]
    \centering
    \begin{tikzpicture}[x=1.5cm,y=1.0cm]
    \node[latent] (x) {$x_t$} ; %
    \node[latent, left=of x] (x1) {$x_{t-1}$} ; %
    \node[latent, above=of x1] (x2) {$x_{t-2}$} ; %
    \node[latent, left=of x1] (x3) {$x_{t-3}$} ; %
    \node[latent, above=of x3] (x4) {$x_{t-4}$} ; %
    \node[latent, left=of x3] (x5) {$x_{t-5}$} ; %
    \node[latent, above=of x5] (x6) {$x_{t-6}$} ; %
    
    \node[draw=none] (q) [left=of x5] { }; %
    \node[draw=none] (r) [left=of x6] { }; %

    \edge {x1} {x}  ; %
    \edge {x2} {x}  ; %
    \edge {x2} {x1} ; %
    \edge {x3} {x1} ; %
    \edge {x3} {x2} ; %
    \edge {x4} {x2} ; %
    \edge {x4} {x3} ; %
    \edge {x5} {x3} ; %
    \edge {x5} {x4} ; %
    \edge {x6} {x4} ; %
    \edge {x6} {x5} ; %
    
    \edge[dashed]{q}{x5}
    \edge[dashed]{r}{x6}

\end{tikzpicture}
    \caption{Causal DAG of an AR(2) model}
    \label{fig:my_label1}
\end{figure}
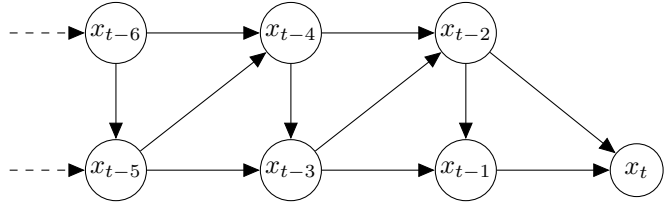
\begin{figure}[!htp]
    \centering
    \begin{tikzpicture}[x=1.5cm,y=1.0cm]
    \node[latent] (x) {$x_t$} ; %
    \node[latent, left=of x] (x1) {$x_{t-1}$} ; %
    \node[latent, above=of x1] (x2) {$x_{t-2}$} ; %
    \node[latent, left=of x1] (x3) {$x_{t-3}$} ; %
    \node[obs, above=of x3] (x4) {$x_{t-4}$} ; %
    \node[latent, left=of x3] (x5) {$x_{t-5}$} ; %
    \node[latent, above=of x5] (x6) {$x_{t-6}$} ; %
    
    \node[draw=none] (q) [left=of x5] { }; %
    \node[draw=none] (r) [left=of x6] { }; %
    \node[draw=none] (i) [above=of x4] { }; %

    \edge {x1} {x}  ; %
    \edge {x2} {x}  ; %
    \edge {x2} {x1} ; %
    \edge {x3} {x1} ; %
    \edge {x3} {x2} ; %
    \edge {x4} {x2} ; %
    \edge {x4} {x3} ; %
    \edge {x5} {x3} ; %
    \edge[red] {x5} {x4} ; %
    \edge[red] {x6} {x4} ; %
    \edge {x6} {x5} ; %
    
    \edge[dashed]{q}{x5}
    \edge[dashed]{r}{x6}
    \edge[dotted]{i}{x4}
    \path (i) -- (x4) node [midway,right](TextNode){$do(x_{t-4} = x*_{t-4})$};

\end{tikzpicture}
    \caption{Graphical representation of the effect of an intervention $do(x_{t-4} = x*_{t-4})$ on an AR(2) model. Incoming edges into $x_{t-4}$ are removed in the new DAG which are in red.}
    \label{fig:my_label2}
\end{figure}
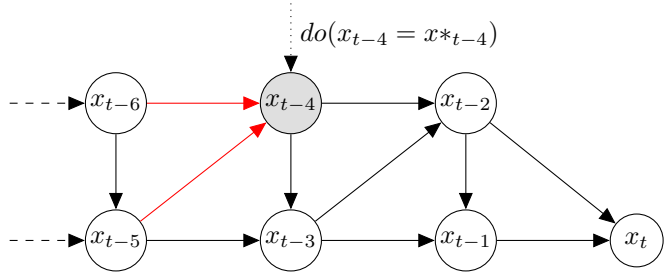

\begin{definition}[\textbf{Empirical Rademacher Complexity}]
Given a finite sample $X = \myCurls{x_1, x_2, \cdots, x_n} \in \mathbb{R}^d$, the empirical Rademacher complexity of a hypothesis class $\mathcal{F}$ of functions $f:\mathbb{R}^d \rightarrow \mathbb{R}$ is defined as:
\begin{equation*}
    \hat{\mathfrak{R}(\mathcal{F})} = \frac{2}{n} \mathbb{E}_{\sigma} \left [ \sup \limits_{f \in \mathcal{F}} \vert \sum\limits_{i=1}^n \sigma_i f(x_i) \vert \right ],
\end{equation*}
\end{definition}
where $\sigma = (\sigma_1, \sigma_2, \cdots, \sigma_n)$ and for all $i\in[n]$, $\sigma_i$ are independent random variables drawn from the Rademacher distribution, that is, a uniform distribution over $\myCurls{-1, +1}.$

\section{Proofs of Main Results}
\begin{lemma}[\textbf{Expressing powers of a companion matrix using symmetric polynomials}]
\label{lemma:coef_as_schur_supp}
For a companion matrix $A$ with distinct eigenvalues and for any $k \in [p]$, the $(1, k)$th element of $A^{\omega}$, can be expressed as a Schur polynomial of the eigenvalues $\lambda = \myCurls{\lambda_1, \lambda_2, \cdots \lambda_p}$ of $A$. in particular, $ \abs{A^{\omega}_{1, k}} = S_{\mu_{\omega, k}, \lambda}$ where $S_{\mu_{\omega, k}, \lambda}$ refers to the Schur polynomial over $\lambda$ indexed by $\{\omega, 1, \cdots {k-1 \textrm{ times}} \cdots, 1, 0, \cdots, 0  \}$.
\end{lemma}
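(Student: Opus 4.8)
The plan is to diagonalize the companion matrix $A$, write the $(1,k)$ entry of $A^{\omega}$ as a ratio of two determinants, and recognize that ratio as Cauchy's bialternant expression for the Schur polynomial indexed by the hook partition $\mu_{\omega,k} = (\omega, 1^{k-1}, 0^{p-k})$.

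First I would use the fact that a companion matrix with distinct eigenvalues $\lambda_1, \dots, \lambda_p$ is diagonalized by the Vandermonde-type matrix $V$ with $V_{ab} = \lambda_b^{\,p-a}$: its $b$-th column $(\lambda_b^{p-1}, \dots, \lambda_b, 1)^T$ is the eigenvector for $\lambda_b$, as one checks using the characteristic equation $\lambda_b^{p} = A_1\lambda_b^{p-1}+\cdots+A_p$. Then $A^{\omega} = V\Lambda^{\omega}V^{-1}$ gives
\[
  A^{\omega}_{1,k} \;=\; \sum_{i=1}^{p} \lambda_i^{\,p-1+\omega}\,(V^{-1})_{i,k}.
\]
Next I would expand the inverse via cofactors, $(V^{-1})_{i,k} = (-1)^{i+k} M_{k,i}/\det V$, where $M_{k,i}$ is the minor of $V$ obtained by deleting row $k$ and column $i$, and $\det V$ is the Vandermonde determinant of the $\lambda_i$'s. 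Substituting yields $A^{\omega}_{1,k} = (\det V)^{-1}\sum_i (-1)^{i+k}\lambda_i^{p-1+\omega}M_{k,i}$.

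The key step is to recognize this alternating sum as a single determinant. Let $W^{(k)}$ be the $p\times p$ matrix whose first row is $(\lambda_1^{p-1+\omega}, \dots, \lambda_p^{p-1+\omega})$ and whose remaining rows are the rows of $V$ other than row $k$, kept in their original order. Laplace expansion of $\det W^{(k)}$ along the first row, together with the observation that deleting row $k$ from $V$ leaves exactly the same rows as deleting the first row from $W^{(k)}$, shows $\sum_i (-1)^{i+k}\lambda_i^{p-1+\omega}M_{k,i} = (-1)^{k+1}\det W^{(k)}$, hence $A^{\omega}_{1,k} = (-1)^{k+1}\det W^{(k)}/\det V$. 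Finally, with $\delta=(p-1,\dots,1,0)$ one has $\det V = a_{\delta}(\lambda)$, and a short bookkeeping of exponents shows that the rows of $W^{(k)}$ carry exponents $\bigl(\omega+p-1,\ p-1,\dots,p-k+1,\ p-k-1,\dots,1,0\bigr) = \mu_{\omega,k}+\delta$, so $\det W^{(k)} = a_{\mu_{\omega,k}+\delta}(\lambda)$. Cauchy's bialternant formula then gives $A^{\omega}_{1,k} = (-1)^{k+1} S_{\mu_{\omega,k},\lambda}$, and taking absolute values yields the claim.

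The main obstacle is purely bookkeeping: getting the omitted exponent ($p-k$) and the overall sign $(-1)^{k+1}$ exactly right, and making sure the rows of $W^{(k)}$ appear in an order matching $\mu_{\omega,k}+\delta$ entry by entry, since any spurious reordering would flip the sign. A convenient sanity check is the case $k=1$, where $\mu_{\omega,1}=(\omega)$ and the identity reduces to the classical fact that the impulse-response coefficients of an AR$(p)$ process are the complete homogeneous symmetric polynomials $h_{\omega}(\lambda) = S_{(\omega),\lambda}$. An alternative route, if one wishes to avoid cofactor manipulations, is to verify that both sides satisfy the characteristic linear recurrence of $A$ in $\omega$ with matching initial conditions, using the Jacobi--Trudi form of the hook Schur function $S_{\mu_{\omega,k}}$; but the bialternant argument is the most direct and is the one the later results appeal to anyway.
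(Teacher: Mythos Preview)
Your argument is correct and follows the same overall arc as the paper's proof: diagonalize $A$ by the Vandermonde matrix, express $A^{\omega}_{1,k}$ through the entries of $V^{-1}$, reassemble the resulting sum as a Laplace expansion of a generalized Vandermonde determinant, and invoke the bialternant formula. The difference is only in the intermediate step: the paper pulls in an explicit closed form for the Vandermonde inverse (entries $(-1)^{k-1}\alpha_i\,e_{k-1}(\lambda\setminus\lambda_i)$ with $\alpha_i=\prod_{j\neq i}(\lambda_i-\lambda_j)^{-1}$), then separately identifies $e_{k-1}(\lambda\setminus\lambda_i)$ as the Schur polynomial $S_{\mu_k,\lambda\setminus\lambda_i}$ via the combinatorial definition, and only then performs the Laplace expansion. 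You bypass this detour by working with the generic cofactor formula $(V^{-1})_{i,k}=(-1)^{i+k}M_{k,i}/\det V$ and recognizing the alternating sum directly as the first-row expansion of $\det W^{(k)}$. Your route is shorter and avoids the external reference for the explicit inverse; the paper's route has the side benefit of making the elementary-symmetric-polynomial expression $A^{\omega}_{1,k}=(-1)^{k-1}\sum_i \alpha_i\lambda_i^{p+\omega-1}e_{k-1}(\lambda\setminus\lambda_i)$ visible, which it later quotes in the main text when discussing the dependence on $\omega$.
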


\begin{proof}

For convenience, we use the notation $\lambda$ and $\lambda / \lambda_i$ to denote the sets $\myCurls{\lambda_1, \lambda_2, \cdots, \lambda_p}$ and $\myCurls{\lambda_1, \lambda_2, \cdots, \lambda_{i-1}, \lambda_{i+1}, \cdots, \lambda_p}$ respectively.

Assuming that the eigenvalues $\lambda = \myCurls{\lambda_i}_{i=1}^p$ of a companion matrix $A$ are distinct, it can be diagonalized as $A = V \Lambda V^{-1}$, where $\Lambda = \text{diag}(\lambda_1, \cdots, \lambda_p)$ is the diagonal matrix of eigenvalues of $A$ and $V$ is a vandermonde matrix  \parencite{brand1964companion} given by 

\begin{equation}
\label{matrix:vandermonde}
V_{\lambda} = {\begin{pmatrix}
 \lambda_1^{p-1} & \lambda_2^{p-1}  & \cdots & \lambda_p^{p-1} \\
   \lambda_1^{p-2} & \lambda_2^{p-2}  & \cdots & \lambda_p^{p-2} \\
    \vdots & \vdots  & \vdots & \vdots \\
    \lambda_1 & \lambda_2  & \cdots & \lambda_p \\
    1 & 1  & \cdots & 1 \\
    \end{pmatrix}}.
\end{equation}

For any $i \in [p]$, let $e_k(\lambda / \lambda_i)$ denote the elementary symmetric polynomial of order $k$ with variables in $\lambda / \lambda_i$ and let 
\begin{equation}
    \alpha_i = \frac{1}{\prod \limits_{j \neq i} (\lambda_i - \lambda_j)}.
\end{equation}
The inverse of the Vandermonde matrix $V$ can then be explicitly computed \parencite{el2003explicit} to obtain
\begin{equation}
\label{matrix:vandermonde_inverse}
V^{-1} = {\begin{pmatrix}
 \alpha_1  & - \alpha_1 e_1(\lambda / \lambda_1)  & \cdots & (-1)^{p-1} \alpha_1 e_{p-1}(\lambda / \lambda_1) \\
   \alpha_2  & - \alpha_2 e_1(\lambda / \lambda_2)  & \cdots & (-1)^{p-1} \alpha_2 e_{p-1}(\lambda / \lambda_2) \\
    \vdots & \vdots  & \vdots & \vdots \\
    \alpha_p  & - \alpha_p e_1( \lambda / \lambda_p)  & \cdots & (-1)^{p-1} \alpha_p e_{p-1}( \lambda / \lambda_p)
    \end{pmatrix}},
\end{equation}

Using the diagonalization of $A$, we can compute its power $A^{\omega}$ as
\begin{equation}
    A^{\omega} = V \Lambda^{\omega} V^{-1}
\end{equation}
and the coefficients $A^{\omega}_{1k}$ can be computed as 
\begin{equation*}
    (-1)^{k-1} \sum \limits_{i=1}^p \alpha_i \lambda_i^{p + \omega - 1} e_{k-1}(\lambda / \lambda_i)
\end{equation*}

\textbf{Claim. $\abs{A^{\omega}_{1k}}$ is the Schur polynomial} $S_{\myCurls{\omega, 1, 1, \cdots \; k-1 \textrm{times} \cdots \; 1, \; 0, 0, \cdots, 0}}$

For any $\mu = \myCurls{\mu_1, \mu_2, \cdots, \mu_p}$ such that $\mu_1 \geq \mu_2 \geq \cdots \geq \mu_p$ consider the generalized Vandermonde matrix $V_{\mu, \lambda}$ defined as
\begin{equation}
\label{matrix:vandermonde_gen}
V_{\mu, \lambda} = {\begin{pmatrix}
 \lambda_1^{p-1 + \mu_1} & \lambda_2^{p-1  + \mu_1}  & \cdots & \lambda_p^{p-1  + \mu_1} \\
   \lambda_1^{p-2  + \mu_2} & \lambda_2^{p-2 + \mu_2}  & \cdots & \lambda_p^{p-2 + \mu_2} \\
    \vdots & \vdots  & \vdots & \vdots \\
    \lambda_1^{1 + \mu_{p-1}} & \lambda_2^{1 + \mu_{p-1}}   & \cdots & \lambda_p^{1 + \mu_{p-1}}  \\
   \lambda_1^{\mu_{p}} & \lambda_2^{\mu_{p}}   & \cdots & \lambda_p^{\mu_{p}} \\
    \end{pmatrix}}.
\end{equation}

The Bilaternant formulation defines Schur polynomial $S_{\mu, \lambda}$ as

\begin{equation}
    \label{eq:bialternant_schur}
    S_{\mu, \lambda} = \frac{\textrm{det} ({V_{\mu, \lambda}})}{\textrm{det} ({V_{\lambda}})}.
\end{equation}

It can be shown that the determinant of the vandemonde matrix $V_{\lambda}$ can be given as 
\begin{equation}
    \label{eq:vandermonde_determinant}
    \textrm{det}({V_{\lambda}}) = \prod_{1 \mathop \leq i \mathop < j \mathop \leq n}  ({\lambda_i - \lambda_j}).
\end{equation}

A proof of this statement can be found in most standard texts on Matrix analysis, for example, see \textcite{horn2012matrix}.

For any $i, k \in [p]$, consider the generalized Vandermonde matrix $V_{\mu_{k}, \lambda / \lambda_i}$, where $\mu_k = \myCurls{1, 1, \cdots \; k-1 \textrm{times} \cdots \; 1, \; 0, 0, \cdots, 0}$. That is,

\begin{equation}
\label{matrix:vandermonde_gen_1}
V_{\mu_{k}, \lambda / \lambda_i} = {\begin{pmatrix}[1.5]
 \lambda_1^{p - 1} & \lambda_2^{p - 1}  & \cdots & \lambda_{i-1}^{p - 1} & \lambda_{i+1}^{p - 1} & \cdots & \lambda_p^{p - 1} \\
 \lambda_1^{p - 2} & \lambda_2^{p - 2}  & \cdots & \lambda_{i-1}^{p - 2} & \lambda_{i+1}^{p - 2} & \cdots & \lambda_p^{p - 2} \\
    \vdots & \vdots  & \cdots & \vdots & \vdots  & \cdots & \vdots \\
 \lambda_1^{p-(k-1)} & \lambda_2^{p-(k-1)}  & \cdots & \lambda_{i-1}^{p-(k-1)} & \lambda_{i+1}^{p-(k-1)} & \cdots & \lambda_p^{p-(k-1)} \\
 \lambda_1^{p-(k+1)} & \lambda_2^{p-(k+1)}  & \cdots & \lambda_{i-1}^{p-(k+1)} & \lambda_{i+1}^{p-(k+1)} & \cdots & \lambda_p^{p-(k+1)} \\
    \vdots & \vdots  & \cdots & \vdots & \vdots  & \cdots & \vdots \\
   1 & 1    & \cdots & 1 & 1 & \cdots & 1 \\
    \end{pmatrix}}.
\end{equation}

From \eqref{eq:bialternant_schur}, we know that 

$$
\textrm{det}(V_{\mu_{k}, \lambda / \lambda_i}) = \textrm{det}(V_{\lambda / \lambda_i}) S_{\mu_k, \lambda / \lambda_i},
$$

where $S_{\mu_k, \lambda / \lambda_i}$ is the Schur polynomial of variables $\lambda / \lambda_i$ indexed by $\mu_k = \myCurls{1, 1, \cdots \; k-1 \textrm{times} \cdots \; 1, \; 0, 0, \cdots, 0}$. Using a combinatorial definition of a Schur polynomial as a summation over semi-standard representations over a Young's Tableaux (see \textcite{macdonald1998symmetric} for an exposition), it is easy to verify that 
\begin{equation}
    \label{eq:elementary_as_schur}
    S_{\mu_k, \lambda / \lambda_i} = e_{k-1}(\lambda / \lambda_i).
\end{equation}

Therefore, combining \eqref{eq:vandermonde_determinant} and \eqref{eq:elementary_as_schur} we can write 

$$
\textrm{det}(V_{\mu_{k}, \lambda / \lambda_i}) = \textrm{det}(V_{\lambda / \lambda_i}) e_{k-1}(\lambda / \lambda_i) = e_{k-1}(\lambda / \lambda_i) \prod \limits_{ \substack{ 1 \leq l < l' \leq p \\ l, l' \neq i}} (\lambda_l - \lambda_{l'}) 
$$

Now, observe that we can rewrite $A^{\omega}_{1k}$ as 
\begin{align*}
    A^{\omega}_{1k} &= (-1)^{k-1} \sum \limits_{i=1}^p \alpha_i \lambda_i^{p + \omega - 1} e_{k-1}(\lambda /\lambda_i), \\
    &= (-1)^{k-1} \sum \limits_{i=1}^p (-1)^{i+1} \lambda_i^{p + \omega - 1} e_{k-1}(\lambda /\lambda_i)  \prod \limits_{ \substack{ 1 \leq l < l' \leq p \\ l, l' \neq i}} (\lambda_l - \lambda_{l'}) /  \textrm{det}({V_{\lambda}}), \\
    &= (-1)^{k-1} \sum \limits_{i=1}^p (-1)^{i+1} \lambda_i^{p + \omega - 1} \textrm{det}(V_{\mu_{k}, \lambda / \lambda_i}) / \textrm{det}({V_{\lambda}}).
\end{align*}

Finally, letting $\mu_{\omega, k} = \myCurls{\omega, 1, 1, \cdots \; k-1 \textrm{times} \cdots \; 1, \; 0, 0, \cdots, 0}$, consider the generalized Vandermonde matrix $V_{\mu_{\omega, k}, \lambda}$ given by

\begin{equation}
\label{matrix:vandermonde_gen_2}
V_{\mu_{\omega, k}, \lambda} = {\begin{pmatrix}[1.5]
 \lambda_1^{p -1 + \omega} & \lambda_2^{p -1 + \omega} & \cdots & \lambda_p^{p -1 + \omega} \\
\lambda_1^{p - 1} & \lambda_2^{p - 1} & \cdots & \lambda_p^{p - 1} \\
 \lambda_1^{p - 2} & \lambda_2^{p - 2} & \cdots & \lambda_p^{p - 2} \\
    \vdots & \vdots  & \cdots & \vdots \\
 \lambda_1^{p-(k-1)} & \lambda_2^{p-(k-1)} & \cdots & \lambda_p^{p-(k-1)} \\
 \lambda_1^{p-(k+ 1)} & \lambda_2^{p-(k+ 1)} & \cdots & \lambda_p^{p-(k+ 1)} \\
    \vdots & \vdots  & \cdots & \vdots \\
   1 & 1    & \cdots & 1 \\
    \end{pmatrix}}.
\end{equation}

 Using the Laplace expansion to compute the determinant along the first row of $V_{\mu_{\omega, k}, \lambda}$ and observing that for any $i \in [p]$, the minor of $V_{\mu_{\omega, k}, \lambda}(1, i)$ is given by $\textrm{det}(V_{\mu_{k}, \lambda / \lambda_i})$, we have

$$\sum \limits_{i=1}^p (-1)^{i+1} \lambda_i^{p + \omega - 1} e_{k-1}(\lambda_i) \prod \limits_{ \substack{ 1 \leq l < l' \leq p \\ l, l' \neq i}} (\lambda_l - \lambda_{l'}) = \textrm{det}(V_{\mu_{\omega, k}, \lambda})$$ and once again by invoking the bialternant formulation for Schur polynomials, we have

\begin{equation*}
  \abs{ A^{\omega}_{1k}} = \sum \limits_{i=1}^p \alpha_i \lambda_i^{p + \omega - 1} e_{k-1}(\lambda_i)  = \frac{\textrm{det}(V_{\mu_{\omega, k}, \lambda})}{\textrm{det}(V_{\lambda})} = S_{\mu_{\omega, k}, \lambda}.
\end{equation*}
\end{proof}

\begin{lemma}[\textbf{Form of Interventional Autocovariance matrix}]
\label{lemma:cov_intervene_form}
Consider a vector-valued time series $\myCurls{x_t}_{t \in \Z} \in \R^d$, following a VAR(q) process with autocovariance matrix of size $nd \times nd$ denoted by $\Sigma_n$. Consider simultaneous atomic interventions on components $\myCurls{l_1, l_2, \cdots, l_r} \subset [d]$ of $x_{t-\omega}$, that is, consider the intervention $do(x_{t-\omega, l_1 } = x*_{t-\omega, l_1 }, \cdots, x_{t-\omega, l_r } = x*_{t-\omega, l_r })$. Then, the autocovariance matrix of size $nd \times nd$ ($\Gamma'_n$) of the corresponding joint interventional distribution, denoted by $\prob_{do_{\omega}}(\mathbf{x}_{t-\omega}^n)$ is given by 
\begin{equation}
    \Gamma'_n({i,j}) = \begin{cases} 
    0 & \textrm{if } i \neq j, i = l_m, \; j = l_m \; \forall m  \in [r] \\
    x*_{t - \omega, l_m}^2 & \textrm{if } i = j = l_m \; \forall m  \in [r] \\
    \Sigma_n({i,j}) & \textrm{otherwise}
    \end{cases}.
\end{equation}

Moreover, let $$\Gamma_n = \mathbb{E}_{\myCurls{x*_{t-\omega, l_m}}_{m \in [r]} \sim \prod \limits_{m \in [r]} \prob(x_{t-\omega}, l_m) } \Gamma'_n.$$

Then,

\begin{equation}
     \Gamma_n({i,j}) = \begin{cases} 
    0 & \textrm{if } i \neq j, i = l_m, \; j = l_m \; \forall m  \in [r] \\
    \Sigma_n({i,j}) & \textrm{otherwise}
    \end{cases}.
\end{equation}
The autocovariance matrix of the interventional distribution under simultaneous interventions on consecutive time-steps can be analogously obtained.
\end{lemma}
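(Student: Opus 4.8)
The plan is to reduce the claim to the elementary behaviour of second moments under Pearl's do-operator applied to the structural-equation reading of the VAR($q$) model. First I would fix conventions: I treat the ``autocovariance matrix'' throughout as the matrix of second moments (rather than the centred covariance), which agrees with the definition of $\Sigma_n$ in the observational case because a stable VAR with zero-mean noise has mean $(\Identity{d}-A_1-\cdots-A_q)^{-1}\ZeroVec=\ZeroVec$. Under causal sufficiency and absence of contemporaneous effects, I would then read the VAR($q$) recursion as an SCM whose DAG has one node per scalar coordinate $x_{s,i}$, with $x_{s,i}$ a deterministic function of $\myCurls{x_{s-k}}_{k\in[q]}$ and an exogenous noise $\epsilon_{s,i}$ (see Figure~\ref{fig:my_label2}). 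The intervention $do(x_{t-\omega,l_1}=x^{*}_{t-\omega,l_1},\dots,x_{t-\omega,l_r}=x^{*}_{t-\omega,l_r})$ then produces the mutilated SCM in which the structural equations of the $r$ intervened coordinates are replaced by the corresponding constants and every other structural equation is left untouched.

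The key structural step is the observation that, relative to the intervened nodes, \emph{every} remaining coordinate appearing in $\mathbf{x}^{n}_{t-\omega}$ is a non-descendant: the coordinates $x_{s}$ with $s<t-\omega$ lie strictly in the past of $t-\omega$, and the coordinates $x_{t-\omega,j}$ with $j\notin\myCurls{l_1,\dots,l_r}$ share the time index $t-\omega$ but, by the absence of contemporaneous influences, are joined to the intervened nodes by no edge and hence---the DAG being time-ordered---by no directed path. By the standard invariance property of interventions \parencite{pearl2009causality}, the joint law of a set of non-descendants of the intervened variables is unchanged by the intervention; consequently the joint distribution of $\myCurls{x_s:s<t-\omega}\cup\myCurls{x_{t-\omega,j}:j\notin\myCurls{l_1,\dots,l_r}}$ under $\prob_{do_\omega}$ equals its observational law. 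In particular these coordinates stay mean-zero, and any pair of them has the same second moment under $\prob_{do_\omega}$ as under $\prob$, namely the matching entry of $\Sigma_n$.

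It then remains to assemble $\Gamma'_n$ entrywise. If neither $i$ nor $j$ is an intervened index, $\Gamma'_n(i,j)=\Sigma_n(i,j)$ by the previous step. If $i=l_m$ is intervened and $j$ is not, the $l_m$-coordinate of $\mathbf{x}^n_{t-\omega}$ is the constant $x^{*}_{t-\omega,l_m}$, so its second moment with the mean-zero $j$-coordinate is $x^{*}_{t-\omega,l_m}\cdot 0=0$, while its own second moment is $(x^{*}_{t-\omega,l_m})^{2}$; for two distinct intervened indices the entry is the deterministic product $x^{*}_{t-\omega,l_m}x^{*}_{t-\omega,l_{m'}}$. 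Finally, taking the expectation over $x^{*}_{t-\omega,l_m}\sim\prob(x_{t-\omega,l_m})$ drawn independently across $m$ changes only the entries carrying an intervened index: the diagonal ones become $\E[x_{t-\omega,l_m}^{2}]=\Sigma_n(l_m,l_m)$, restoring the observational value, and every off-diagonal entry touching an intervened index becomes $0$ (by independence and zero mean), which is exactly the stated form of $\Gamma_n$. The extension to interventions on several consecutive recent time steps follows the same template: one intervenes on entire vectors $x_{t-\omega},\dots,x_{t-\omega-s_0+1}$, the mutilated block is then the top-left $s_0 d\times s_0 d$ submatrix, and the non-descendant set is $\myCurls{x_s:s<t-\omega-s_0+1}$.

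I do not expect a genuine obstacle here: the statement is a direct application of the do-operator's mutilation rule followed by routine second-moment bookkeeping. The only points that need care are the non-descendant argument in the multivariate setting---this is precisely where the absence of contemporaneous influences enters---and keeping the centring conventions consistent, so that the diagonal entries read $(x^{*}_{t-\omega,l_m})^{2}$ before averaging and $\Sigma_n(l_m,l_m)$ after it.
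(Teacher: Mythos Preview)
Your proposal is correct and follows essentially the same approach as the paper: both arguments hinge on the fact that, by time-ordering and the absence of contemporaneous effects, every non-intervened coordinate in $\mathbf{x}^{n}_{t-\omega}$ is a non-descendant of the intervened nodes, so its (joint) law is unchanged by the intervention. The paper phrases this via Propositions~6.12 and~6.14 of \textcite{peters2017elements} (no directed path $\Rightarrow$ no total causal effect $\Rightarrow$ independence under the interventional distribution) and then asserts the conclusion without spelling out the entrywise second-moment calculation; your version makes exactly that bookkeeping explicit, and also correctly flags the case of two \emph{distinct} intervened indices, whose product is deterministic before averaging and only vanishes after the independent, zero-mean expectation.
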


\begin{proof}[\textbf{Proof of Lemma \ref{lemma:cov_intervene_form}.}]
Note that due to time ordering and since instantaneous effects are not modelled by a VAR model, there is no directed path from any of the variables $x_{t-\omega, l_1 }, x_{t-\omega, l_2 }, \cdots, x_{t-\omega, l_r }$ to $\mathbf{x}^n_{t-\omega - 1}$ as well as to variables in $\myCurls{x_{t-\omega, 1}, x_{t-\omega, 2}, \cdots, x_{t-\omega, d}} / x_{t-\omega, l_1 }, x_{t-\omega, l_2 }, \cdots, x_{t-\omega, l_r }.$ \textcite[Proposition 6.14]{peters2017elements} provides graphical criterion for determining the existence of a total causal effect from a variable $x$ to a variable $y$ under interventions on $x$. Absence of a directed path from $x$ to $y$ implies there is no total causal effect from $x$ to $y$ and from Proposition 6.12 of \textcite{peters2017elements}, we know that $x \indep y$ under the corresponding interventional distribution. As a consequence of these Propositions, we have our desired result.

 
\end{proof}


\begin{lemma}[\textbf{Difference in Causal and Statistical error (VAR(p))}]
\label{lemma:diff_G_S_var}
Consider a vector-valued time series $\myCurls{x_t}_{t \in \Z} \in \R^d$, following a VAR(q) process with model parameters $\myCurls{A_1, A_2, \cdots A_q}$. Assuming $n > \max \myCurls{p,q}$, for any VAR(p) model $f$ with parameters $\{\widehat{A}_1, \widehat{A}_2, \cdots \widehat{A}_p\}$, 
\begin{equation}
    \abs{\mathcal{G}_{do_{\omega}}(f) - \mathcal{S}(f)} = \sum \limits_{i = 1}^d (A^{\omega}_{i:} - \widehat{A}^{\omega}_{i:})^T (\Gamma - \Sigma) (A^{\omega}_{i:} - \widehat{A}^{\omega}_{i:}),
\end{equation}
\end{lemma}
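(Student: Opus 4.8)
The plan is to express \emph{both} $\mathcal{S}(f)$ and $\CErrw(f)$ in one and the same closed form --- a sum over output coordinates of a quadratic form in the parameter gap $A^{\omega}-\widehat{A}^{\omega}$ against the relevant second-moment matrix, plus an \emph{identical} irreducible noise term --- and then subtract, so the noise cancels and only the difference of the two second-moment matrices survives. The technical engine is the VAR(1) companion rewriting of Section~\ref{sec:clt} together with Lemma~\ref{lemma:cov_intervene_form}.

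First I would set $\nu:=\max\{p,q\}$ and, exactly as in Lemma~\ref{lemma:diff_G_S_VAR}, pad the companion matrices of the true VAR($q$) process and of the fitted VAR($p$) model with zero blocks so that $A,\widehat{A}\in\R^{d\nu\times d\nu}$ and both act on the common stacked state $y:=y^{\nu}_{t-\omega}=(x_{t-\omega},\dots,x_{t-\omega-\nu+1})^{T}$, which is available from $\mathbf{x}^{n}_{t-\omega}$ since $n>\nu$; one checks that $y^{\nu}_t=A y^{\nu}_{t-1}+e_t$ still holds after padding. Iterating $\omega$ steps gives $x_t=[A^{\omega}y]_{1:d}+\xi$ with $\xi:=\sum_{j=0}^{\omega-1}[A^{j}e_{t-j}]_{1:d}$, where $[\cdot]_{1:d}$ is the first block of size $d$; the VAR($p$) model's $\omega$-step forecast, obtained by iterating its one-step map, is $\widehat{f}(\mathbf{x}^{n}_{t-\omega})=[\widehat{A}^{\omega}y]_{1:d}$. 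Hence the $i$-th residual is $x_{t,i}-\widehat{f}_i=(A^{\omega}_{i:}-\widehat{A}^{\omega}_{i:})\,y+\xi_i$. Squaring, summing over $i\in[d]$ and taking the observational expectation, the cross terms vanish because $\xi$ is centered and involves only innovations after time $t-\omega$ whereas $y$ involves only innovations up to $t-\omega$ (whiteness of the noise), and using that the process is centered so $\E[yy^{T}]=\Sigma$, we obtain
\[
\mathcal{S}(f)=\sum_{i=1}^{d}(A^{\omega}_{i:}-\widehat{A}^{\omega}_{i:})\,\Sigma\,(A^{\omega}_{i:}-\widehat{A}^{\omega}_{i:})^{T}+\sum_{i=1}^{d}\E[\xi_i^{2}].
\]
Then I would rerun the same expansion under the intervention: since intervening on coordinates of $x_{t-\omega}$ only removes the incoming edges at time $t-\omega$, the recursion $x_t=[A^{\omega}y]_{1:d}+\xi$ holds with the \emph{same} $A$ and the law of $\xi$ is untouched, so only the law of $y$ changes; after averaging the intervention value over its marginal, $y$ stays centered and $\E[yy^{T}]=\Gamma$ by Lemma~\ref{lemma:cov_intervene_form}, giving the same display with $\Sigma$ replaced by $\Gamma$. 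Subtracting, the $\sum_i\E[\xi_i^{2}]$ terms cancel and the claimed identity (up to absolute values) drops out.

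The step I expect to be the crux --- and the only non-mechanical one --- is the justification that the noise contribution is \emph{literally the same} in the observational and interventional worlds and that the averaged interventional second-moment matrix of $y$ is precisely the matrix $\Gamma$ of Lemma~\ref{lemma:cov_intervene_form}; this is where the structural/causal reasoning (no directed path from the intervened variables into their own past because VAR models carry no instantaneous effects, hence future innovations and the non-intervened past are unaffected) does the real work. A secondary bookkeeping point is that padding $A$ and $\widehat{A}$ to the common dimension $d\nu$ does not change the iterated $\omega$-step forecasts --- the VAR($p$) model genuinely ignores lags deeper than $p$ --- and that $n>\nu$ indeed suffices to reconstruct $y$ from the observed window; everything beyond these points is a routine second-moment computation.
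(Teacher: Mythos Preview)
Your proposal is correct and follows essentially the same approach as the paper: both arguments pad the true and fitted companion matrices to common size $d\nu$, unroll the VAR(1) recursion $\omega$ steps to write $x_t=[A^{\omega}y_{t-\omega}]_{1:d}+\zeta_t$, expand the squared residual into a quadratic form in $A^{\omega}_{i:}-\widehat{A}^{\omega}_{i:}$ plus the noise term $\E[\zeta_{t,i}^{2}]$, and then argue that under the intervention the structural equations downstream of $t-\omega$ (and hence the law of $\zeta_t$) are unchanged so only the second-moment matrix of $y_{t-\omega}$ flips from $\Sigma$ to $\Gamma$ via Lemma~\ref{lemma:cov_intervene_form}. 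The paper routes the interventional computation through the invariance $\mathbb{P}_{do_{\omega}}(x_t\mid\mathbf{x}_{t-\omega}^{q})=\mathbb{P}(x_t\mid\mathbf{x}_{t-\omega}^{q})$ whereas you phrase it as ``the recursion and the law of $\xi$ are untouched,'' but these are the same causal-sufficiency observation.
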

\begin{proof}[\textbf{Proof of Lemma \ref{lemma:diff_G_S_var}}]

Let $A$ denote the multi-companion matrix corresponding to the true VAR(q) process with  model parameters $\myCurls{{A}_1, {A}_2, \cdots, {A}_q}$ of the form described in (\ref{eq:var_1_def_supp}) with the first $d$ rows populated by $\{A'_1, A'_2, \cdots A'_{\max \myCurls{p,q}}\}$, where $A'_l$ is defined as $A_l$ for all $l \leq q$ and as $\ZeroVec_{d \times d}$ for all $l > q$. Define $\widehat{A}^{(\max \myCurls{p,q})}$ analogously as the multi-companion matrix corresponding to parameters $\myCurls{\widehat{A}_1, \widehat{A}_2, \cdots, \widehat{A}_p}$ of the estimated VAR(p) model $f$ obtained independently from some statistical estimation procedure $\mathcal{E}$. 

 
 Using (\ref{eq:var_1}) recursively, we can write 
 \begin{equation}
     y_t^{(\max \myCurls{p,q})} = A^{\omega} y_{t-\omega}^{(\max \myCurls{p,q})} + A^{\omega} e_{t-\omega + 1}^{(\max \myCurls{p,q})} + A^{\omega - 1} e_{t-{\omega}+ 2}^{(\max \myCurls{p,q})} + \cdots + A e_{t-1}^{(\max \myCurls{p,q})} + e_t^{(\max \myCurls{p,q})}
 \end{equation}
 To reduce cumbersome notation, we let $\zeta_t = A^{\omega} e_{t-\omega + 1} + A^{\omega - 1} e_{t-{\omega}+ 2} + \cdots + A e_{t-1} + e_t \in \mathbb{R}^{dp}$ and write 
 \begin{equation}
 \label{eq:2}
     Y_t = A^{\omega} y_{t-\omega} + \zeta_t.
 \end{equation}
 
 Let $\hat{x}_{t}$ denote the prediction of the target variable $x_t$ corresponding to the estimated model $f$. Then, Statistical error $\mathcal{O}_{\omega}$ defined with respect to the squared norm can be computed as follows:

 \begin{align*}
     \mathcal{O}_{\omega} &= \mathbb{E}_{\prob(\mathbf{x}_{t - \omega}^n, x_t)}[\norm{x_t - \hat{x}_t}^2] \\
     &= \sum \limits_{i = 1}^d \mathbb{E} [x_{t,i} - \hat{x}_{t,i}]^2  && \textrm{(Subscript omitted for convenience)} \\
     &= \sum \limits_{i = 1}^d \mathbb{E} [A^{\omega}_{i, :} y_{t - \omega} + \zeta_{t, \omega, i} - \hat{A}^{\omega}_{i, :} y_{t - \omega}]^2 \\
     &= \sum \limits_{i = 1}^d {(A^{\omega}_{i:} - \hat{A}^{\omega}_{i:})^T \mathbb{E}[y_{t-{\omega}} y_{t - \omega}^T] (A^{\omega}_{i:} - \widehat{A}^{\omega}_{i:}) + \E[\zeta_{t, \omega, i}^2]} && (\textrm{$\E[x_{t-i} \epsilon_t^T] = 0, \; \forall i \in \N$ }) \label{seq4}\\
     &= \sum \limits_{i = 1}^d (A^{\omega}_{i:} - \widehat{A}^{\omega}_{i:})^T \Sigma_{\max \myCurls{p,q}} (A^{\omega}_{i:} - \widehat{A}^{\omega}_{i:}) + \E[\zeta_{t, \omega, i}^2]
 \end{align*}

Similarly,
  \begin{align}
      \mathcal{G}_{do_{\omega}}&= \mathbb{E}_{\prob_{do_{\omega}}(\mathbf{x}_{t - \omega}^n, x_t)}(\norm{x_t - \hat{x}_t}^2) \\
      &= \sum \limits_{i = 1}^d \mathbb{E}_{\prob_{do_{\omega}}(\mathbf{x}_{t - \omega}^n)\prob_{do_{\omega}}(x_t \vert \mathbf{x}_{t - \omega}^n)}\big [x_{t,i} - \hat{x}_{t,i} \big ]^2 \label{ceq1}\\
      &= \sum \limits_{i = 1}^d \mathbb{E}_{\prob_{do_{\omega}}(\mathbf{x}_{t - \omega}^n)\prob_{do_{\omega}}(x_t \vert \mathbf{x}_{t - \omega}^n)} \big [x_{t,i}^2 + \hat{x}_{t,i}^2 - 2 x_{t,i} \hat{x}_{t,i}  \big ]^2 \\
      &= \sum \limits_{i = 1}^d \mathbb{E}_{\prob_{do_{\omega}}(\mathbf{x}_{t - \omega}^n)\prob_{do_{\omega}}(x_t \vert \mathbf{x}_{t - \omega}^q)} \big [x_{t,i}^2 + \hat{x}_{t,i}^2 - 2 x_{t,i} \hat{x}_{t,i}  \big ]^2 \\
      &= \sum \limits_{i = 1}^d \mathbb{E}_{\prob_{do_{\omega}}(\mathbf{x}_{t - \omega}^n)\prob(X_t \vert \mathbf{x}_{t - \omega}^q)} \big [x_{t,i}^2 + \hat{x}_{t,i}^2 - 2 x_{t,i} \hat{x}_{t,i}  \big ]^2 \label{ceq2}\\
      &= \sum \limits_{i = 1}^d \mathbb{E}_{\prob_{do_{\omega}}(\mathbf{x}_{t - \omega}^q)} \big [\mathbb{E}_{\prob(x_t \vert \mathbf{x}_{t - \omega}^q)} [x_{t,i}^2] + (\widehat{A}^{\omega}_{i:})^T y_{t-\omega})^2 - 2 \mathbb{E}_{\prob(x_t \vert \mathbf{x}_{t - \omega}^q)}[x_{t,i}] (\widehat{A}^{\omega}_{i:})^T y_{t-\omega}  \big ]^2 \\
      &= \sum \limits_{i = 1}^d \mathbb{E}_{\prob_{do_{\omega}}(\mathbf{x}_{t - \omega}^n)} \big [((A^{\omega}_{i:})^T y_{t-\omega} + \zeta_t)^2 + (\widehat{A}^{\omega}_{i:})^T y_{t-\omega})^2 - 2 ((A^{\omega}_{i:})^T y_{t-\omega}) ((\widehat{A}^{\omega}_{i:})^T y_{t-\omega})  \big ]^2 \\
      &= \sum \limits_{i = 1}^d \myBracs{(A^{\omega}_{i:} - \widehat{A}^{\omega}_{i:})^T \mathbb{E}_{do_{\omega}}(y_{t-{\omega}} y_{t - \omega}^T ) (A^{\omega}_{i:} - \widehat{A}^{\omega}_{i:}) + \E(\zeta_{t, i}^2)} \hspace{2em} (\textrm{$\E(x_{t-i} \epsilon_t^T) = 0, \; \forall i \in \N$ }) \label{ceq4}\\
      &= \sum \limits_{i = 1}^d (A^{\omega}_{i:} - \widehat{A}^{\omega}_{i:})^T \Gamma'_{\max \myCurls{p,q}} (A^{\omega}_{i:} - \widehat{A}^{\omega}_{i:}) + \E(\zeta_{t, i}^2) \label{ceq5}
 \end{align}

To see why Equation (\ref{ceq2}) holds, note that the structural equations that specify the dependence of $x_t$ on $\mathbf{x}_{t-\omega}^q$ remain unchanged under interventions on $x_{t-\omega}$ and therefore the conditional distributions remain unchanged under these interventions.

Therefore, $$\mathbb{E}_{x*_{t-\omega} \sim \prob(x_{t-\omega})}\mathbb{E}_{\prob_{do_{\omega}}(\mathbf{x}_{t - \omega}^n, x_t)}(\norm{x_t - \hat{x}_t}^2) = \sum \limits_{i = 1}^d (A^{\omega}_{i:} - \widehat{A}^{\omega}_{i:})^T \Gamma_{\max \myCurls{p,q}} (A^{\omega}_{i:} - \widehat{A}^{\omega}_{i:}) + \E(\zeta_{t, i}^2),$$

where $\Gamma$ can be obtained using Lemma \ref{lemma:cov_intervene_form}.

\end{proof}

\begin{corollary}[\textbf{\textbf{Difference in Causal and Statistical errors (AR)}}]
\label{corr:diff_c_s_ar_supp}
Let $\myCurls{x_t}$ follow an AR(q) process. Then, for any AR(p) model $f$ with parameters $\myCurls{\widehat{a}_1, \widehat{a}_2, \cdots, \widehat{a}_p}$,
\begin{equation}
    \abs{\CErrw(f) - \SErrw(f)} = 2 \bigg| (A^{\omega}_{1,1} - \widehat{A}^{\omega}_{1,1}) \sum \limits_{k=2}^{\max \myCurls{p,q}} (A^{\omega}_{1,k} - \widehat{A}^{\omega}_{1,k}) \gamma_{k-1} \bigg|,
\end{equation}

where, for any $k \in \mathbb{N}$, $\gamma_{k}$ denotes the autocovariance of $\myCurls{x_t}$ with lag $k$. $A$ and $\widehat{A}$ are the corresponding companion matrices of the model and estimated parameters as defined in Lemma \ref{lemma:diff_G_S_var}.
\end{corollary}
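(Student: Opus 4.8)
The plan is to obtain the corollary as the scalar ($d=1$) specialization of Lemma~\ref{lemma:diff_G_S_var}, combined with the explicit description of the interventional autocovariance matrix in Lemma~\ref{lemma:cov_intervene_form}. Setting $d=1$, the outer sum over $i\in[d]$ in Lemma~\ref{lemma:diff_G_S_var} collapses to a single term; the companion matrices $A$ and $\widehat{A}$ are $\nu\times\nu$ with $\nu=\max\{p,q\}$, and writing $v:=A^{\omega}_{1:}-\widehat{A}^{\omega}_{1:}\in\R^{\nu}$ we get $\CErrw(f)-\SErrw(f)=v^{T}(\Gamma-\Sigma)v$, where $\Sigma=\Sigma_{\nu}$ is the autocovariance matrix of the process (so $\Sigma_{jk}=\gamma_{|j-k|}$, in particular $\Sigma_{1k}=\gamma_{k-1}$) and $\Gamma$ is the (averaged) interventional autocovariance matrix for the atomic intervention on $x_{t-\omega}$. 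The noise terms $\E[\zeta_t^2]$ are common to both the statistical and causal errors and have already cancelled in Lemma~\ref{lemma:diff_G_S_var}.

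Next I would pin down the structure of $\Gamma-\Sigma$. In the VAR(1) reformulation the stacked vector is $y_{t-\omega}=(x_{t-\omega},x_{t-\omega-1},\dots,x_{t-\omega-\nu+1})^{T}$, so intervening on $x_{t-\omega}$ is an intervention on its first coordinate. Applying Lemma~\ref{lemma:cov_intervene_form} with $r=1$ and $l_{1}=1$, the matrix $\Gamma$ agrees with $\Sigma$ on the diagonal and everywhere outside the first row and first column, while every off-diagonal entry in the first row and first column is replaced by zero. Hence $\Gamma-\Sigma$ is the symmetric matrix whose only nonzero entries are $(\Gamma-\Sigma)_{1k}=(\Gamma-\Sigma)_{k1}=-\gamma_{k-1}$ for $k=2,\dots,\nu$.

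It then remains to expand the quadratic form: using the above structure together with symmetry, $v^{T}(\Gamma-\Sigma)v=-2\,v_{1}\sum_{k=2}^{\nu}v_{k}\,\gamma_{k-1}$, and substituting $v_{1}=A^{\omega}_{1,1}-\widehat{A}^{\omega}_{1,1}$ and $v_{k}=A^{\omega}_{1,k}-\widehat{A}^{\omega}_{1,k}$ and taking absolute values yields the claimed identity. The argument is essentially bookkeeping; the only point that requires care is confirming that the intervened variable $x_{t-\omega}$ occupies the first coordinate of $y_{t-\omega}$ (so that it is exactly the first row and column of $\Gamma$ and $\Sigma$ that are affected) and that the lag convention makes $\Sigma_{1k}=\gamma_{k-1}$. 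Beyond this indexing check I do not expect any genuine obstacle, since the statement is a direct consequence of Lemma~\ref{lemma:diff_G_S_var} and Lemma~\ref{lemma:cov_intervene_form}.
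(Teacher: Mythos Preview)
Your proposal is correct and follows exactly the route the paper takes: the paper's own proof simply states that the corollary ``directly follows from Lemmas~\ref{lemma:cov_intervene_form} and~\ref{lemma:diff_G_S_var},'' and your argument is precisely the $d=1$ specialization of Lemma~\ref{lemma:diff_G_S_var} combined with the structure of $\Gamma-\Sigma$ from Lemma~\ref{lemma:cov_intervene_form}, with the bookkeeping spelled out. There is nothing to add.
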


\begin{proof}[\textbf{Proof of Corollary \ref{corr:diff_c_s_ar_supp}}]

Corollary $1$ directly follows from Lemmas \ref{lemma:cov_intervene_form} and \ref{lemma:diff_G_S_var}. 
\end{proof}

\begin{proposition}[\textbf{Stability Controls Causal Generalization (VAR)}]
\label{prop:stability_control_supp}
 Consider a VAR(q) process. Assuming $n > \max \myCurls{p,q}$, for any VAR(p) model $f$,
 \begin{equation}
     \abs{\CErrwi(f) - \SErrw(f)} \leq  2 \kappa(\Sigma_{\max \myCurls{p,q}}) (\SErrw(f) - \sigma^2_{\epsilon}),
\end{equation}
where $\kappa(\Sigma_{\max \myCurls{p,q}})$ denotes the condition number of the autocovariance matrix $\Sigma_{\max \myCurls{p,q}}$.
\end{proposition}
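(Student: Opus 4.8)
My plan is to feed the exact identity of Lemma~\ref{lemma:diff_G_S_var} into a single operator-norm estimate and then relate everything back to the extreme eigenvalues of $\Sigma := \Sigma_{\max\{p,q\}}$. Set $\Delta_i := A^{\omega}_{i:} - \widehat{A}^{\omega}_{i:}$ for $i \in [d]$. Lemma~\ref{lemma:diff_G_S_var} (with the interventional autocovariance $\Gamma$ supplied by Lemma~\ref{lemma:cov_intervene_form}) gives $\CErrwi(f) - \SErrw(f) = \sum_{i=1}^{d}\Delta_i^{\top}(\Gamma - \Sigma)\Delta_i$, so the triangle inequality together with the variational characterization $\abs{x^{\top}Mx}\le\norm{M}_{\mathrm{op}}\norm{x}^2$ for symmetric $M$ yields $\abs{\CErrwi(f) - \SErrw(f)} \le \norm{\Gamma - \Sigma}_{\mathrm{op}}\sum_{i=1}^{d}\norm{\Delta_i}^2$. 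It then remains to bound the two factors separately.

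\textbf{Bounding $\norm{\Gamma - \Sigma}_{\mathrm{op}}$.} Here I would invoke the structure of $\Gamma$ from Lemma~\ref{lemma:cov_intervene_form}: $\Gamma$ is $\Sigma$ with every off-diagonal entry lying in a row or column indexed by an intervened coordinate set to zero, all other entries unchanged. Reordering coordinates so the intervened index (or indices, call the set $S$) come last, $\Gamma$ becomes block diagonal with blocks $\Sigma_{S^{c}S^{c}}$ and the diagonal matrix $\mathrm{diag}(\Sigma_{SS})$. Cauchy interlacing gives $\lambda_{\max}(\Sigma_{S^{c}S^{c}}) \le \lambda_{\max}(\Sigma)$, and each surviving diagonal entry is a Rayleigh quotient $e_j^{\top}\Sigma e_j \le \lambda_{\max}(\Sigma)$; hence $\Gamma \succeq 0$ and $\lambda_{\max}(\Gamma) \le \lambda_{\max}(\Sigma)$. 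Consequently $\norm{\Gamma - \Sigma}_{\mathrm{op}} \le \norm{\Gamma}_{\mathrm{op}} + \norm{\Sigma}_{\mathrm{op}} = \lambda_{\max}(\Gamma) + \lambda_{\max}(\Sigma) \le 2\lambda_{\max}(\Sigma)$.

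\textbf{Bounding $\sum_i \norm{\Delta_i}^2$.} From the proof of Lemma~\ref{lemma:diff_G_S_var}, $\SErrw(f) = \sum_{i=1}^{d}\Delta_i^{\top}\Sigma\Delta_i + \sum_{i=1}^{d}\E[\zeta_{t,\omega,i}^2]$, and the noise term dominates $\sigma^2_\epsilon$ because the innovation $\epsilon_t$ alone contributes $\sigma^2_\epsilon$ while every other contribution to $\sum_i\E[\zeta_{t,\omega,i}^2]$ is nonnegative. Therefore $\SErrw(f) - \sigma^2_\epsilon \ge \sum_{i=1}^{d}\Delta_i^{\top}\Sigma\Delta_i \ge \lambda_{\min}(\Sigma)\sum_{i=1}^{d}\norm{\Delta_i}^2$, so $\sum_{i}\norm{\Delta_i}^2 \le (\SErrw(f) - \sigma^2_\epsilon)/\lambda_{\min}(\Sigma)$, using $\lambda_{\min}(\Sigma) > 0$, which holds for a weakly stationary VAR with nondegenerate innovations. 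Chaining the two estimates gives $\abs{\CErrwi(f) - \SErrw(f)} \le 2\lambda_{\max}(\Sigma)(\SErrw(f) - \sigma^2_\epsilon)/\lambda_{\min}(\Sigma) = 2\kappa(\Sigma)(\SErrw(f) - \sigma^2_\epsilon)$, which is the claim.

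\textbf{Main obstacle.} The only step that is more than bookkeeping is the inequality $\lambda_{\max}(\Gamma) \le \lambda_{\max}(\Sigma)$ --- i.e.\ that randomizing an intervened coordinate cannot increase the largest eigenvalue of the autocovariance matrix --- and its proof hinges on reading off the block-diagonal form of $\Gamma$ from Lemma~\ref{lemma:cov_intervene_form} and applying Cauchy interlacing. I also expect the constant to be improvable: exploiting $\Sigma \succeq 0$ one can show the cross-covariance off-diagonal block appearing in $\Sigma - \Gamma$ has spectral norm at most $\lambda_{\max}(\Sigma)$, which would sharpen $2\kappa(\Sigma)$ towards $\kappa(\Sigma)$ (or the $2\kappa(\Sigma) - 1$ of the main-text formulation), but this refinement is not needed for the stated bound.
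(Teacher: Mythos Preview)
Your proposal is correct and follows essentially the same route as the paper: start from the identity of Lemma~\ref{lemma:diff_G_S_var}, control the quadratic form in $\Gamma-\Sigma$ via the eigenvalues of $\Sigma$ (using the structure of $\Gamma$ from Lemma~\ref{lemma:cov_intervene_form} together with Cauchy interlacing and a Rayleigh-quotient argument), and then convert $\sum_i\norm{\Delta_i}^2$ back into $\SErrw(f)-\sigma_\epsilon^2$ through $\lambda_{\min}(\Sigma)$.

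The only notable technical differences are cosmetic. The paper splits into two cases according to the sign of $\Delta_j^{\top}(\Gamma-\Sigma)\Delta_j$ and bounds $\lambda_{\max}(\Gamma)\le 2\lambda_{\max}(\Sigma)$ by writing $\Gamma$ as a sum of two PSD matrices; this yields the slightly sharper constant $2\kappa(\Sigma)-1$. You instead avoid the case split by passing directly to $\norm{\Gamma-\Sigma}_{\mathrm{op}}$ and prove the tighter intermediate fact $\lambda_{\max}(\Gamma)\le\lambda_{\max}(\Sigma)$ from the block-diagonal form of $\Gamma$, but then lose that gain in the triangle inequality $\norm{\Gamma-\Sigma}_{\mathrm{op}}\le\norm{\Gamma}_{\mathrm{op}}+\norm{\Sigma}_{\mathrm{op}}$, landing on $2\kappa(\Sigma)$. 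Both arrive at the appendix statement; your treatment of the noise term (showing $\sum_i\E[\zeta_{t,\omega,i}^2]\ge\sigma_\epsilon^2$ rather than asserting equality) is in fact more careful than the paper's.
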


  
\begin{proof}
From Lemma \ref{lemma:diff_G_S_var}, it remains to prove that $$\abs{\sum \limits_{j = 1}^d (A^{\omega}_{j:} - \widehat{A}^{\omega}_{j:})^T (\Gamma - \Sigma) (A^{\omega}_{j:} - \widehat{A}^{\omega}_{j:})} \leq (2 \kappa(\Sigma) - 1) (\SErrw(f) - \sigma^2_{\epsilon}).$$

First, we show that
\begin{align}
    \abs{(A^{\omega}_{j:} - \widehat{A}^{\omega}_{j:})^T (\Gamma - \Sigma) (A^{\omega}_{j:} - \widehat{A}^{\omega}_{j:})} \leq (2 \lambda_{\max}(\Sigma) )\norm{A^{\omega}_{j:} - \widehat{A}^{\omega}_{j:}}^2.
\end{align}

\textbf{Case 1.} $(A^{\omega}_{j:} - \widehat{A}^{\omega}_{j:})^T (\Gamma - \Sigma) (A^{\omega}_{j:} - \widehat{A}^{\omega}_{j:}) \geq 0$.
\begin{align}
    \abs{(A^{\omega}_{j:} - \widehat{A}^{\omega}_{j:})^T (\Gamma - \Sigma) (A^{\omega}_{j:} - \widehat{A}^{\omega}_{j:})} &= (A^{\omega}_{j:} - \widehat{A}^{\omega}_{j:})^T (\Gamma - \Sigma) (A^{\omega}_{j:} - \widehat{A}^{\omega}_{j:}), \\
    &\leq (\lambda_{\max}(\Gamma) - \lambda_{min}(\Sigma))\norm{A^{\omega}_{j:} - \widehat{A}^{\omega}_{j:}}^2 \label{leq2}.
\end{align}
where (\ref{leq2}) holds by an application of Rayleigh's principle. We still need to show that $\lambda_{\max}(\Gamma) \leq 2 \lambda_{\max}(\Sigma).$

Without loss of generality, assume that $i=1$, that is the component of $x_{t-\omega}$ that is intervened upon is indexed by $1$. Note that, this merely simplifies notation and the following steps also hold simultaneous interventions on multiple components and consecutive time instances without any additional steps. 

Representing $\Sigma$ and $\Gamma$ in block matrix form, we have
\begin{equation}
     \Sigma = \begin{pmatrix}
    \Sigma_{11} & \Sigma_{12} \\ \Sigma_{21} & \Sigma_{22}
    \end{pmatrix}, \qquad \Gamma = \begin{pmatrix}
    \Gamma_{11} & \Gamma_{12} \\ \Gamma_{21} & \Gamma_{22}
    \end{pmatrix}.
\end{equation}

From Lemma \ref{lemma:cov_intervene_form}, we have $$\Gamma_{11} \in \mathbb{R}^{1 \times 1} = \sigma^2 = \mathbb{E}(X_t^2), \;  \Gamma_{12}^T = \Gamma_{21} \in \mathbb{R}^{1 \times d \max \myCurls{p,q} - 1} = 0, \; \textrm{and } \Gamma_{22} = \Sigma_{22}.$$

We can write $\Gamma$ as follows:
\begin{align}
    \Gamma = \Gamma'_1 + \Gamma'_2,
\end{align} 
where 
\begin{equation}
    \Gamma'_1 = \begin{pmatrix} \sigma^2 & \mathbf{0}_{1 \times d \max \myCurls{p,q} - 1}  \\
    \mathbf{0}_{d \max \myCurls{p,q} - 1 \times 1} &  \mathbf{0}_{d \max \myCurls{p,q} - 1 \times d \max \myCurls{p,q} - 1}
    \end{pmatrix}, 
\end{equation}
and 
\begin{equation}
    \Gamma'_2 = \begin{pmatrix} \mathbf{0}_{1 \times 1} & \mathbf{0}_{1 \times d \max \myCurls{p,q} - 1}  \\
    \mathbf{0}_{d \max \myCurls{p,q} - 1 \times 1} &  \Sigma_{22}
    \end{pmatrix}. 
\end{equation}

Since $\Gamma'_1$ and $\Gamma'_2$ are Hermitian matrices, $\lambda_{\max}(\Gamma) \leq \lambda_{\max}(\Gamma'_1) + \lambda_{\max}(\Gamma'_2).$ 

Observe that $\Gamma_2$ is a principal sub-matrix of $\Gamma$ obtained by deleting the first row and column, by Cauchy's interlacing theorem \parencite{fisk2005very}, we have
\begin{equation}
\label{eq:seq1}
    \lambda_{\max}(\Gamma'_2) \leq \lambda_{\max}(\Sigma).
\end{equation}

Note that, when we intervene simultaneously on multiple components and time instances, instead of setting the first row to $0$, the covariance matrix of the corresponding interventional distribution $\Gamma$ can be obtained by deleting the off-diagonal elements of the corresponding rows and columns.
It remains to show that $\sigma^2 \leq \lambda_{\max} (\Sigma).$ Note that
\begin{equation}
\label{eq:seq2}
    \lambda_{\max} (\Gamma'_2) = \sigma^2 = \Sigma_{11} = e_1^T \Sigma e_1 \leq \lambda_{\max}(\Sigma),
\end{equation}
where $e_i$ denotes the $i$th standard basis vector. Combining (\ref{eq:seq1}) and (\ref{eq:seq2}) we have
\begin{equation}
    \lambda_{\max}(\Gamma) \leq 2 \lambda_{\max}(\Sigma)
\end{equation}
and 
\begin{equation}
     \abs{(A^{\omega}_{j:} - \widehat{A}^{\omega}_{j:})^T (\Gamma - \Sigma) (A^{\omega}_{j:} - \widehat{A}^{\omega}_{j:})} \leq (2 \lambda_{\max}(\Sigma) - \lambda_{\min}(\Sigma)) \norm{A^{\omega}_{j:} - \widehat{A}^{\omega}_{j:}}^2.
\end{equation}

\textbf{Case 2.} $(A^{\omega}_{j:} - \widehat{A}^{\omega}_{j:})^T (\Gamma - \Sigma) (A^{\omega}_{j:} - \widehat{A}^{\omega}_{j:}) \leq 0$.
\begin{align}
    \abs{(A^{\omega}_{j:} - \widehat{A}^{\omega}_{j:})^T (\Gamma - \Sigma) (A^{\omega}_{j:} - \widehat{A}^{\omega}_{j:})} &= (A^{\omega}_{j:} - \widehat{A}^{\omega}_{j:})^T ( \Sigma - \Gamma) (A^{\omega}_{j:} - \widehat{A}^{\omega}_{j:}), \\
    &\leq (\lambda_{\max}(\Sigma) - \lambda_{\min}(\Gamma))\norm{A^{\omega}_{j:} - \widehat{A}^{\omega}_{j:}}^2.
\end{align}
Using the same arguments used in deriving upper bounds for $\lambda_{\max}(\Gamma)$, we can show that $\lambda_{\min}(\Gamma) \geq \lambda_{\min}(\Sigma)$. Therefore, we have
\begin{align}
    \abs{\mathcal{G}_{do_{\omega, i}}(f) - \mathcal{S}_{\omega}(f)} & \leq \sum \limits_{j \in [d]} (2 \lambda_{\max}(\Sigma) - \lambda_{\min(\Sigma)}) \norm{A^{\omega}_{j:} - \widehat{A}^{\omega}_{j:}}^2 \\
    & \leq (2 \lambda_{\max}(\Sigma) - \lambda_{\min(\Sigma)}) \sum \limits_{j \in [d]} \norm{A^{\omega}_{j:} - \widehat{A}^{\omega}_{j:}}^2 \\
    & \leq (2 \kappa(\Sigma) - 1) (\mathcal{S}_{\omega}(f) - \sigma^2_{\epsilon}) \label{eq:seq4}.
\end{align}
To see why (\ref{eq:seq4}) holds, observe that
\begin{align}
    \mathcal{S}_{\omega}(f) - \sigma^2_{\epsilon}&= \sum \limits_{j = 1}^d (A^{\omega}_{j:} - \widehat{A}^{\omega}_{j:})^T \Sigma (A^{\omega}_{j:} - \widehat{A}^{\omega}_{j:})  \\
    & \geq  \sum \limits_{j = 1}^d (A^{\omega}_{j:} - \widehat{A}^{\omega}_{j:})^T \Sigma (A^{\omega}_{j:} - \widehat{A}^{\omega}_{j:}) \\
    & \geq \sum \limits_{j = 1}^d \lambda_{\min}(\Sigma) \norm{A^{\omega}_{j:} - \widehat{A}^{\omega}_{j:}}^2 \label{eq:seq3}.
\end{align}

We now show that we can construct AR(2) processes such that the bound in Proposition 1 is tight upto a small constant factor. Consider an AR(2) process with true model parameters $a_1$ and $a_2$. The autocorrelation matrix $\Sigma_2$ of this process is given by $\Sigma_p = \begin{pmatrix} 1, \gamma \\ \gamma, 1 \end{pmatrix}$ where $\gamma = \frac{a_1}{1 - a_2}$. The eigenvalues of $\Sigma_2$ are given by $\lambda_1 = 1 + \gamma$ and $\lambda_2 = 1 - \gamma$ corresponding to eigenvectors $u_1$ and $u_2$ respectively. Without loss of generality assume $\gamma > 0$ which yields $\lambda_1 \geq \lambda_2$. Denote vectors $a = (a_1, a_2)$ and $\hat{a} = (\hat{a}_1, \hat{a}_2)$. Consider an AR(2) process with parameters $\hat{a}_1, \hat{a_2}$ such that $(a - \hat{a}) = u_2$. Then assuming $\omega = 1$, we have that 
\begin{align*}
    \frac{\mathcal{G}_{do_1} - \mathcal{S}_1}{\mathcal{S}_1 - \sigma^2_{\epsilon}} &= \frac{\norm{a - \hat{a}}^2 - (a - \hat{a})^T \Sigma (a - \hat{a})}{(a - \hat{a})^T \Sigma (a - \hat{a})} = \frac{\gamma }{1 - \gamma} = (\kappa(\Sigma) - 1)/2.
\end{align*}
As $a$ approaches the boundary of the stability domain, the process gets more strongly correlated and $\lambda_{\min}$ approaches $0$ and the relative difference in causal and statistical errors diverges.
\end{proof}

\begin{lemma}[\textbf{Bounds on $a_k$}]
\label{lemma:bound_on_ak}
For any AR(p) model such that the non-zero eigenvalues of the companion matrix are distinct and satisfy $\abs{\lambda} \leq \delta < 1,$
\begin{equation}
    \abs{a_k} \leq {{p} \choose {k}} \delta^k.
\end{equation}

\end{lemma}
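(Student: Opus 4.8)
The plan is to express the AR coefficients directly in terms of the eigenvalues of the companion matrix and then apply an elementary symmetric polynomial bound. Recall that for an AR($p$) model $x_t = a_1 x_{t-1} + \cdots + a_p x_{t-p} + \epsilon_t$, the eigenvalues $\lambda = \myCurls{\lambda_1, \dots, \lambda_p}$ of the companion matrix $A$ are precisely the roots of the characteristic polynomial
\begin{equation*}
\lambda^p - a_1 \lambda^{p-1} - a_2 \lambda^{p-2} - \cdots - a_p = \prod_{i=1}^p (\lambda - \lambda_i).
\end{equation*}
(If some eigenvalues vanish, the corresponding $\lambda_i$ are simply $0$; the identity is unaffected.) First I would expand the right-hand side using the standard relation $\prod_{i=1}^p(\lambda - \lambda_i) = \sum_{k=0}^p (-1)^k e_k(\lambda)\,\lambda^{p-k}$, where $e_k$ denotes the elementary symmetric polynomial of degree $k$ in the variables $\lambda_1, \dots, \lambda_p$.

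Matching the coefficient of $\lambda^{p-k}$ on both sides for $k \in [p]$ yields $-a_k = (-1)^k e_k(\lambda)$, that is, $a_k = (-1)^{k+1} e_k(\lambda_1, \dots, \lambda_p)$. Hence $\abs{a_k} = \abs{e_k(\lambda)}$, and it remains to bound the magnitude of the elementary symmetric polynomial. By the triangle inequality,
\begin{equation*}
\abs{a_k} = \Bigl\lvert \sum_{1 \le i_1 < \cdots < i_k \le p} \lambda_{i_1} \cdots \lambda_{i_k} \Bigr\rvert \le \sum_{1 \le i_1 < \cdots < i_k \le p} \abs{\lambda_{i_1}} \cdots \abs{\lambda_{i_k}} \le \binom{p}{k} \delta^k,
\end{equation*}
since the sum has exactly $\binom{p}{k}$ terms and each monomial $\abs{\lambda_{i_1}}\cdots\abs{\lambda_{i_k}}$ is at most $\delta^k$ by the hypothesis $\abs{\lambda} \le \delta$ (a monomial containing a zero eigenvalue is trivially $\le \delta^k$). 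This gives the claimed inequality.

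There is no substantial obstacle here; the only point requiring minor care is getting the signs right in the Vieta expansion (equivalently, remembering that the VAR recursion puts the $A_i$ with a plus sign, so the characteristic polynomial has $-a_i$ coefficients). The hypotheses that the non-zero eigenvalues are distinct is not actually needed for the bound and is merely inherited from the surrounding context; the argument goes through verbatim with repeated eigenvalues as well. I would present the proof in essentially the three steps above: (i) identify $a_k$ with $(-1)^{k+1} e_k(\lambda)$ via the characteristic polynomial, (ii) write $e_k$ as a sum of $\binom{p}{k}$ degree-$k$ monomials, (iii) bound each monomial by $\delta^k$.
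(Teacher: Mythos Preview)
Your proof is correct. The paper's own proof reaches the same identity $|a_k| = |e_k(\lambda_1,\dots,\lambda_p)|$ and bounds it identically via the triangle inequality, but it arrives at the identity by invoking Lemma~\ref{lemma:coef_as_schur} (the Schur polynomial representation of $A^{\omega}_{1,k}$) with $\omega=1$, recognizing that the Schur polynomial indexed by $(1,\dots,1,0,\dots,0)$ is precisely the elementary symmetric polynomial. Your route through Vieta's formulas and the characteristic polynomial is more direct and self-contained---it avoids the Schur machinery entirely and, as you note, does not need the distinct-eigenvalue hypothesis. The paper's route, on the other hand, fits the lemma into its broader framework where the $A^{\omega}_{1,k}$ for general $\omega$ are handled uniformly via Schur polynomials.
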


\begin{proof}[Proof of Lemma \ref{lemma:bound_on_ak}]
 
 From Lemma \ref{lemma:coef_as_schur_supp}, we know that 
 \begin{align*}
   \abs{a_{k}} &= \abs{S_{\myCurls{1, 1, \cdots k \; times \; 1, 0, \cdots, 0}}(\myCurls{\lambda_1, \lambda_2, \cdots, \lambda_p})} \\
    &= \abs{\sum \limits_{\myCurls{i_1 < i_2 < \cdots < i_k} \in [p]} \lambda_{i_1} \lambda_{i_2} \cdots \lambda_{i_k}} \\
    & \leq  \sum \limits_{\myCurls{i_1 < i_2 < \cdots < i_k} \in [p]} \abs{\lambda_{i_1} \lambda_{i_2} \cdots \lambda_{i_k}} && (\abs{x+y} \leq \abs{x} + \abs{y})\\
    & \leq \sum \limits_{\myCurls{i_1 < i_2 < \cdots < i_k} \in [p]} \delta^k && (\abs{\lambda_i} \leq \delta)\\
    &= {p \choose k} \delta^k.  
\end{align*}
 
\end{proof}

\begin{lemma}[\textbf{Bounds on $\gamma_k$}]
\label{lemme:bound_on_gamma}
For any stochastic process $\myCurls{x_t}_{t \in \Z}$ following an AR(p) model the non-zero eigenvalues of the companion matrix are distinct and satisfy $\abs{\lambda} \leq \delta < 1$
 $$\abs{\gamma_k} \leq \frac{C \sigma_{\epsilon}^2 \delta^k}{1 - \delta^2}$$
\end{lemma}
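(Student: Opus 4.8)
The plan is to express the lag-$k$ autocovariance $\gamma_k$ of a stationary AR($p$) process in terms of the companion matrix powers and the noise variance, and then bound each ingredient using the exponential decay of the eigenvalues. First I would recall the standard fact that for the VAR(1) representation $y_t = A y_{t-1} + e_t$ with $\E[e_t e_t^T] = \Sigma_e$ (here $\Sigma_e$ has $\sigma_\epsilon^2$ in its top-left corner and zeros elsewhere), the stationary covariance is $\Sigma_p = \sum_{j=0}^\infty A^j \Sigma_e (A^j)^T$, and the lag-$k$ autocovariance of the vector process is $\E[y_{t+k} y_t^T] = A^k \Sigma_p$. The scalar autocovariance $\gamma_k$ of $\{x_t\}$ is then the $(1,1)$ entry of $A^k \Sigma_p$, i.e. $\gamma_k = \sum_{j\ge 0} (A^{k+j})_{1\,\cdot}\,\Sigma_e\,(A^{j})_{1\,\cdot}^T = \sigma_\epsilon^2 \sum_{j\ge 0} A^{k+j}_{11} A^{j}_{11}$, using that $\Sigma_e$ only picks out the $(1,1)$ coordinate.

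Next I would control $|A^m_{11}|$ uniformly in $m$. By Lemma~\ref{lemma:coef_as_schur_supp}, $A^m_{11} = S_{\mu_{m,1},\lambda}$ is a Schur polynomial in the eigenvalues, which by the bialternant/combinatorial formula is a sum of monomials each of total degree $m$ in the $\lambda_i$'s; more concretely one can write $A^m_{11} = \sum_{i_1+\dots+i_p = m} \lambda_1^{i_1}\cdots\lambda_p^{i_p}$ (the complete homogeneous symmetric polynomial $h_m$). Since $|\lambda_i|\le\delta<1$, the number of such monomials is $\binom{m+p-1}{p-1}$, so $|A^m_{11}| \le \binom{m+p-1}{p-1}\delta^m$, which is a polynomial in $m$ times $\delta^m$. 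Plugging this into the series gives $|\gamma_k| \le \sigma_\epsilon^2 \sum_{j\ge 0} \binom{k+j+p-1}{p-1}\binom{j+p-1}{p-1}\delta^{k+2j} = \sigma_\epsilon^2 \delta^k \sum_{j\ge 0} \binom{k+j+p-1}{p-1}\binom{j+p-1}{p-1}\delta^{2j}$.

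The main obstacle is the bookkeeping on the remaining sum: I need to show $\sum_{j\ge 0} \binom{k+j+p-1}{p-1}\binom{j+p-1}{p-1}\delta^{2j} \le \frac{C}{1-\delta^2}$ for a constant $C$ depending only on $p$, uniformly in $k$. This is where the constant $C$ in the statement comes from, and the $k$-dependence must be absorbed. One clean way: since $\binom{k+j+p-1}{p-1} \le (k+j+p-1)^{p-1}/(p-1)! $ and the other binomial is polynomial in $j$, the summand is at most a polynomial of degree $2(p-1)$ in $(j,k)$ times $\delta^{2j}$; using $(k+j+p-1)^{p-1}\le 2^{p-1}((k)^{p-1}+(j+p-1)^{p-1})$ and then the identities $\sum_j j^r \delta^{2j} \le c_r/(1-\delta^2)$ for each fixed $r$, one gets a bound of the form $\frac{\mathrm{poly}(k)}{1-\delta^2}\,\delta^k$. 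Since $k^{p-1}\delta^k$ is bounded by a constant (depending on $p$) uniformly over $k\ge 0$ for any fixed $\delta<1$ — and more carefully one absorbs the polynomial factor into a slightly enlarged geometric rate or simply notes $\sup_k k^{p-1}\delta^{k/2}<\infty$ and pairs one factor of $\delta^{k/2}$ with it — the polynomial-in-$k$ factor collapses into the constant $C$, yielding $|\gamma_k| \le \frac{C\sigma_\epsilon^2 \delta^k}{1-\delta^2}$ as claimed. I would state the argument at the level of "each of these elementary sums converges and is $O(1/(1-\delta^2))$" rather than optimizing constants, since the lemma only asserts existence of such a $C$.
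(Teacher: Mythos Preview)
Your approach is essentially the same as the paper's: both use the infinite moving-average representation $x_t=\sum_{i\ge 0}A^i_{11}\epsilon_{t-i}$ to obtain $\gamma_k=\sigma_\epsilon^2\sum_{j\ge 0}A^{k+j}_{11}A^{j}_{11}$, bound $|A^m_{11}|$ via the Schur/complete-homogeneous identification from Lemma~\ref{lemma:coef_as_schur_supp}, and then sum the geometric series in $\delta^{2j}$.

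The only difference is in the combinatorial bookkeeping on $|A^m_{11}|$. The paper simply asserts $|A^m_{11}|\le p^p\delta^m$, which yields $C=K_p$ immediately; you instead count the $\binom{m+p-1}{p-1}$ monomials in $h_m$ and then absorb the polynomial-in-$k$ factor into the constant. Your accounting is in fact the more careful one: the number of monomials in $h_m(\lambda_1,\dots,\lambda_p)$ is $\binom{m+p-1}{p-1}$, which exceeds $p^p$ for large $m$, so the paper's crude bound is not literally correct. The price you pay is that your final constant $C$ depends on $\delta$ as well as $p$ (or, equivalently, you only get rate $\delta^{k/2}$ with a $p$-only constant), but since the lemma does not specify what $C$ may depend on, this is harmless.
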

\begin{proof}[\textbf{Proof of Lemma \ref{lemme:bound_on_gamma}}]
 Using the infinite-moving average representation of $X_t$ (See \textcite{brockwell1991time}), we have
 \begin{align}
     x_t &= \sum \limits_{i = 0}^{\infty} A^i_{11} \epsilon_{t-i} \\
     \abs{\mathbb{E}[x_l, x_{r}]} &= \abs{\mathbb{E}[(\sum \limits_{i_1 = 0}^{\infty} A^{i_1}_{11} \epsilon_{l-i_1})(\sum \limits_{i_2 = 0}^{\infty} A^{i_2}_{11} \epsilon_{r-i_2})]} \\
     &= \abs{\sum \limits_{i=0}^{\infty} A^{i}_{11} A_{11}^{i + \abs{l - r}}\mathbb{E}[\epsilon_t \epsilon_t^T]} \\
      &= \abs{\sigma^2_{\epsilon} \sum \limits_{i=0}^{\infty} A^{i}_{11} A_{11}^{i + \abs{l - r}}} \\
      & \leq K_p \delta^{\abs{l-r}} \sigma^2_{\epsilon} \sum \limits_{i=0}^{\infty}  \delta^{2i} \label{neq11}\\
      & \leq K_p  \sigma^2_{\epsilon}   \frac{\delta^{\abs{l-r}}}{1 - \delta^{2}}
 \end{align}
 
To see why (\ref{neq11}) holds observe that, from Lemma \ref{lemma:coef_as_schur_supp}, $$A^i_{11} = S_{\myCurls{i, 0, \cdots, 0}} \leq  \sum \limits_{\myCurls{i_1 \leq  i_2 \leq  \cdots \leq i_k} \in [p]} \abs{\lambda_{i_1} \lambda_{i_2} \cdots \lambda_{i_k}} \leq p^p \delta^i$$ 
\end{proof}

\begin{lemma}[\textbf{Lower Bounds on $\lambda_{\min}(\Sigma)$}] \label{lemma:lowe_bound_eig} For any stochastic process $\myCurls{x_t}_{t \in \Z}$ following an AR(p) model the non-zero eigenvalues of the companion matrix are distinct and satisfy $\abs{\lambda} \leq \delta < 1$
$$ \lambda_{\min}(\Sigma) \geq \frac{\sigma^2_{\epsilon}}{  (1 + \delta)^{2p}}$$
\end{lemma}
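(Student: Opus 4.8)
The plan is to pass to the spectral domain. First I would record the ingredients that come from the AR(p) structure. Since the non-zero eigenvalues $\lambda_1,\dots,\lambda_p$ of the companion matrix are distinct and satisfy $|\lambda_k|\le\delta<1$, the process admits the convergent MA($\infty$) representation $x_t=\sum_{i\ge 0}A^i_{11}\,\epsilon_{t-i}$ already used in the lemma bounding $\gamma_k$, with transfer function $\Psi(z)=\sum_{i\ge0}A^i_{11}z^i=1/\phi(z)$, where $\phi(z)=1-a_1z-\cdots-a_pz^p$. From the characteristic equation $\det|\mathbb{I}\lambda^p-A_1\lambda^{p-1}-\cdots-A_p|=0$, substituting $z=1/\lambda$ gives the factorization $\phi(z)=\prod_{k=1}^p(1-\lambda_k z)$; in particular $\Psi$ is analytic on a disk of radius $\min_k|\lambda_k|^{-1}>1$, so the MA coefficients decay geometrically and the series converges absolutely on the unit circle. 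Consequently the process has spectral density $f(\theta)=\frac{\sigma^2_{\epsilon}}{2\pi}\,|\phi(e^{-i\theta})|^{-2}$ and autocovariances $\gamma_k=\int_{-\pi}^{\pi}e^{ik\theta}f(\theta)\,d\theta$ (one can verify the normalization by checking that $\phi\equiv1$ returns $\gamma_0=\sigma^2_{\epsilon}$, $\gamma_k=0$).

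Second, I would express the Toeplitz quadratic form spectrally. For any $v=(v_1,\dots,v_n)^T\in\mathbb{R}^n$,
\[
v^T\Sigma_n v=\sum_{j,k=1}^n v_jv_k\,\gamma_{j-k}=\int_{-\pi}^{\pi}\Big|\sum_{k=1}^n v_k e^{-ik\theta}\Big|^2 f(\theta)\,d\theta\ \ge\ \Big(\inf_{\theta}f(\theta)\Big)\int_{-\pi}^{\pi}\Big|\sum_{k=1}^n v_k e^{-ik\theta}\Big|^2 d\theta=2\pi\,\|v\|^2\inf_{\theta}f(\theta),
\]
using Parseval's identity in the last step. Minimizing over $\|v\|=1$ gives $\lambda_{\min}(\Sigma_n)\ge 2\pi\inf_{\theta}f(\theta)$, uniformly in $n$. (Alternatively this is exactly the classical lower bound on the eigenvalues of a Toeplitz form generated by $2\pi f$, e.g. Grenander--Szeg\H{o}; I would cite that if a shorter route is preferred, but the above is self-contained.) Finally, the elementary estimate
\[
|\phi(e^{-i\theta})|=\prod_{k=1}^p|1-\lambda_k e^{-i\theta}|\le\prod_{k=1}^p\big(1+|\lambda_k|\big)\le(1+\delta)^p
\]
yields $\inf_{\theta}f(\theta)=\frac{\sigma^2_{\epsilon}}{2\pi}\big(\sup_{\theta}|\phi(e^{-i\theta})|^2\big)^{-1}\ge\frac{\sigma^2_{\epsilon}}{2\pi(1+\delta)^{2p}}$, and substituting into $\lambda_{\min}(\Sigma_n)\ge2\pi\inf_\theta f(\theta)$ gives $\lambda_{\min}(\Sigma_n)\ge\sigma^2_{\epsilon}/(1+\delta)^{2p}$.

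The only genuine work is justifying the spectral identity for $v^T\Sigma_n v$ together with the absolute convergence of the MA series on the unit circle; both are routine given $|\lambda_k|\le\delta<1$, since the coefficients $A^i_{11}$ (which are Schur polynomials of the eigenvalues by Lemma~\ref{lemma:coef_as_schur_supp}) decay geometrically, so the relevant sums and integrals may be interchanged freely. I expect the main obstacle to be purely bookkeeping: tracking the Fourier-transform normalization so that the $2\pi$ factors cancel correctly and the white-noise case returns $\Sigma_n=\sigma^2_{\epsilon}\mathbb{I}$. There is no substantive difficulty beyond that, and in particular the bound does not depend on the matrix size $n$.
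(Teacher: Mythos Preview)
Your proof is correct and follows the same overall architecture as the paper: both pass through the classical Toeplitz bound $\lambda_{\min}(\Sigma_n)\ge 2\pi\inf_\theta f(\theta)$ (the paper cites Brockwell for this; you re-derive it via Parseval or cite Grenander--Szeg\H{o}) and then upper-bound $\sup_\theta|\phi(e^{-i\theta})|^2$. The only genuine difference is in that last estimate. The paper bounds $|\phi(e^{-i\theta})|\le 1+\sum_{k=1}^p|a_k|$ via Proposition~\ref{prop:basu}, then invokes Lemma~\ref{lemma:bound_on_ak} (the Schur-polynomial bound $|a_k|\le\binom{p}{k}\delta^k$) and the binomial theorem to reach $(1+\delta)^p$. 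You instead use the factorization $\phi(z)=\prod_{k=1}^p(1-\lambda_k z)$ directly, giving $|\phi(e^{-i\theta})|\le\prod_k(1+|\lambda_k|)\le(1+\delta)^p$ in one line. Your route is shorter and avoids the detour through the coefficient bounds and Proposition~\ref{prop:basu}; the paper's route has the minor advantage of reusing machinery (Lemma~\ref{lemma:bound_on_ak}) already developed for other purposes.
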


\begin{proof}
First, note that $$ (1 + \sum \limits_{k=1}^p \abs{a_k}) \leq \sum \limits_{k=0}^p {p \choose k} \delta^k = (1 + \delta)^p  \text{(Binomial Theorem)}.$$

 Combining this with the results from Lemma \ref{lemma:min_eig_lower} and Proposition \ref{prop:basu}, we have 
\begin{align*}
 \lambda_{\min}(\Sigma) &\geq 2 \pi \inf \limits_{\omega} f(\omega) \geq \frac{\sigma^2_{\epsilon}}{ \nu_{\max}(\mathcal{A})} \geq \frac{\sigma^2_{\epsilon}}{  (1 + \sum \limits_{k=1}^p \abs{a_k})^2} \\
    \lambda_{\min}(\Sigma) &\geq \frac{\sigma^2_{\epsilon}}{  (1 + \sum \limits_{k=1}^p \abs{a_k})^2} \geq \frac{\sigma^2_{\epsilon}}{  (1 + \delta)^{2p}}.
\end{align*}
\end{proof}

\begin{lemma}[\textbf{Upper Bounds on $\lambda_{\max}(\Sigma)$}] \label{lemma:upper_bound_eig}
For any stochastic process $\myCurls{x_t}_{t \in \Z}$ following an AR(p) model the non-zero eigenvalues of the companion matrix are distinct and satisfy $\abs{\lambda} \leq \delta < 1$
$$ \lambda_{\max}(\Sigma) \leq 2K_{p} \sigma^2_{\epsilon} n \frac{1}{1 - \delta^{2}}$$
\end{lemma}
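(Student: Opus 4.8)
The plan is to bound the largest eigenvalue of the $n\times n$ autocovariance (Toeplitz) matrix $\Sigma=\Sigma_n$, whose $(i,j)$ entry is $\gamma_{|i-j|}$, by an elementary scalar quantity and then invoke Lemma~\ref{lemme:bound_on_gamma} to control the autocovariances $\gamma_k$.

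First I would use that $\Sigma_n$ is symmetric and positive semidefinite, being a genuine covariance matrix. Hence all its eigenvalues are nonnegative, so $\lambda_{\max}(\Sigma_n)\le\sum_{i=1}^n\lambda_i(\Sigma_n)=\operatorname{tr}(\Sigma_n)=n\gamma_0$. Alternatively---and this is the route that most transparently reproduces the stated constant---one bounds $\lambda_{\max}(\Sigma_n)$ by the maximal absolute row sum $\max_i\sum_{j=1}^n|\gamma_{|i-j|}|$ (Gershgorin, equivalently $\norm{\cdot}_2\le\norm{\cdot}_\infty$ for symmetric matrices), and then observes $\sum_{j=1}^n|\gamma_{|i-j|}|\le\sum_{k=-(n-1)}^{n-1}|\gamma_{|k|}|\le 2n\max_{0\le k\le n-1}|\gamma_k|$.

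Next I would apply Lemma~\ref{lemme:bound_on_gamma}: under the hypothesis that the non-zero eigenvalues of the companion matrix satisfy $\abs{\lambda}\le\delta<1$, we have $|\gamma_k|\le K_p\sigma^2_{\epsilon}\delta^k/(1-\delta^2)\le K_p\sigma^2_{\epsilon}/(1-\delta^2)$ for every $k\ge 0$, using $\delta^k\le 1$. Substituting into the row-sum bound gives $\lambda_{\max}(\Sigma_n)\le 2n\cdot K_p\sigma^2_{\epsilon}/(1-\delta^2)=2K_p\sigma^2_{\epsilon}n/(1-\delta^2)$, which is exactly the claim. (Via the trace route one even obtains the sharper $n\gamma_0\le K_p\sigma^2_{\epsilon}n/(1-\delta^2)$, which a fortiori implies the stated bound.)

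There is no genuine obstacle here; the only things to watch are the bookkeeping of the constant $K_p$ (it should be the same constant, or a fixed multiple of it, as in Lemma~\ref{lemme:bound_on_gamma}) and the choice not to exploit the decaying factor $\delta^k$: since the target bound is phrased with the factor $n$, the crude estimate $\sum_{k=0}^{n-1}\delta^k\le n$ suffices, though replacing it by $\sum_{k\ge 0}\delta^k=1/(1-\delta)$ would yield the stronger $n$-free bound $\lambda_{\max}(\Sigma_n)\le 2K_p\sigma^2_{\epsilon}/((1-\delta)(1-\delta^2))$ if ever needed elsewhere.
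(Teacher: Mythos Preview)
Your proposal is correct and follows essentially the same route as the paper: bound $\lambda_{\max}(\Sigma_n)$ by the maximal absolute row sum via Gershgorin, use the Toeplitz structure to reduce to $2\sum_{k=0}^{n-1}|\gamma_k|$, and then invoke Lemma~\ref{lemme:bound_on_gamma} together with the crude estimate $\sum_{k=0}^{n-1}\delta^k\le n$. Your additional remark that the trace bound $\lambda_{\max}(\Sigma_n)\le n\gamma_0$ already suffices (and is sharper by a factor of~$2$) is a nice observation not made in the paper.
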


\begin{proof}
 By Gershgorin's theorem \parencite{varga2010gervsgorin}, we can derive an upper bound on the maximum eigenvalue of $\Sigma_n$ as follows:  $$\lambda_{\max}(\Sigma_n) \leq \max_{i \in [n]} ( \Sigma_{ii} + \sum \limits_{j \neq i} \abs{\Sigma_{ij}}).$$
 Note that the autocovariance matrix of an AR process which is defined as $\Sigma_{i,j} = \gamma_{\abs{i-j}}$ (the autocovariance of lag $\abs{i-j}$) has a Toeplitz structure. Due to this Toeplitz structure of the autocovariance matrix, we can see that $$\lambda_{\max}(\Sigma_n) < 2\sum \limits_{i=1}^{n}\abs{\gamma_{i-1}} < 2K_p \sigma^2_{\epsilon} \sum \limits_{i=1}^{n} \frac{\delta^{i-1}}{1 - \delta^{2}} \leq 2K_{p} n \sigma^2_{\epsilon} \frac{1}{1 - \delta^{2}} $$
\end{proof}

\begin{corollary}[\textbf{Stability Controls Causal Generalization (AR(p))}]
\label{corr:Population_diff_ar_supp}
Consider an AR(q) process, such that eigenvalues of its companion matrix satisfy $\abs{\lambda} < \delta < 1$. For any AR(q) model $f$,
 \begin{align}
 \label{eq:stability_control_supp_ar}
     \abs{\CErrwi(f)  - \SErrw(f) } & \leq K_p \SErrw(f) \frac{\max \myCurls{p,q}(1 + \delta)^{2 \max \myCurls{p,q}}}{(1 - \delta^2)},
\end{align}
where $K_p$ is some finite constant that depends on the order $p$ of the underlying process.
\end{corollary}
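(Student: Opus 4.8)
The plan is to read the statement off as a direct consequence of Proposition~\ref{prop:stability_control_supp} once the condition number $\kappa(\Sigma_{\max\myCurls{p,q}})$ is controlled purely in terms of the stability parameter $\delta$. Write $\nu = \max\myCurls{p,q}$. Proposition~\ref{prop:stability_control_supp} already gives
\begin{equation*}
\abs{\CErrwi(f) - \SErrw(f)} \leq 2\kappa(\Sigma_{\nu})\bigl(\SErrw(f) - \sigma^2_{\epsilon}\bigr) \leq 2\kappa(\Sigma_{\nu})\,\SErrw(f),
\end{equation*}
where the second inequality uses $\sigma^2_{\epsilon} \geq 0$. So the whole task reduces to bounding $\kappa(\Sigma_{\nu}) = \lambda_{\max}(\Sigma_{\nu})/\lambda_{\min}(\Sigma_{\nu})$ for the true process.

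For the numerator I would invoke Lemma~\ref{lemma:upper_bound_eig} with the matrix size set to $n = \nu$, giving $\lambda_{\max}(\Sigma_{\nu}) \leq 2K_p\,\sigma^2_{\epsilon}\,\nu/(1-\delta^2)$. For the denominator I would invoke Lemma~\ref{lemma:lowe_bound_eig} applied to the true AR($q$) process, giving $\lambda_{\min}(\Sigma_{\nu}) \geq \sigma^2_{\epsilon}/(1+\delta)^{2q} \geq \sigma^2_{\epsilon}/(1+\delta)^{2\nu}$, where the last step uses $q \leq \nu$ and $1+\delta > 1$. Dividing, the noise variance cancels and
\begin{equation*}
\kappa(\Sigma_{\nu}) \leq 2K_p\,\frac{\nu\,(1+\delta)^{2\nu}}{1-\delta^2}.
\end{equation*}
Substituting this into the displayed bound and rolling the numerical factors together with the order-dependent constants of Lemmas~\ref{lemma:lowe_bound_eig} and~\ref{lemma:upper_bound_eig} into a single constant (still called $K_p$) yields exactly~\eqref{eq:stability_control_supp_ar}.

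There is essentially no hard step here: the substance is already contained in Proposition~\ref{prop:stability_control_supp} and in the two spectral lemmas, and this corollary is the bookkeeping that combines them. The points requiring care are all of that bookkeeping flavour. First, the size argument passed to Lemma~\ref{lemma:upper_bound_eig} must be $\nu$, the dimension relevant to Proposition~\ref{prop:stability_control_supp}, and not the ambient sample size; the linear dependence of $\lambda_{\max}$ on this size is precisely what produces the factor $\nu$ in the final bound. Second, one should observe that only $\kappa(\Sigma_{\nu})$ — a functional of the autocovariances of the true process up to lag $\nu-1$ — enters, so no additional hypothesis on the eigenvalues of the \emph{estimated} model is needed beyond what Proposition~\ref{prop:stability_control_supp} already uses (hence the statement only constrains the true process via $\abs{\lambda}<\delta<1$). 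Third, the constant $K_p$ of the statement silently absorbs the Gershgorin/Toeplitz constant from Lemma~\ref{lemma:upper_bound_eig}, the $\nu_{\max}$-type constant from Lemma~\ref{lemma:lowe_bound_eig}, and the factor $4$ arising from the two instances of ``$2$'' above.
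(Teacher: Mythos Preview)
Your proposal is correct and follows essentially the same route as the paper: invoke Proposition~\ref{prop:stability_control_supp}, then bound $\kappa(\Sigma_{\nu})$ via Lemma~\ref{lemma:upper_bound_eig} (with $n=\nu$) for $\lambda_{\max}$ and Lemma~\ref{lemma:lowe_bound_eig} for $\lambda_{\min}$, and absorb the constants into $K_p$. Your additional remarks on why only the true process's stability parameter enters, and on the bookkeeping of the matrix size and constants, are all accurate.
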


\begin{proof}[\textbf{Proof of Corollary \ref{corr:Population_diff_ar_supp}}]
From Proposition \ref{prop:stability_control_supp}, we already know that 
 \begin{equation}
     \abs{\CErrwi(f) - \SErrw(f)} \leq  2 \kappa(\Sigma_{\max \myCurls{p,q}}) (\SErrw(f) - \sigma^2_{\epsilon}),
\end{equation}
From Lemma \ref{lemma:lowe_bound_eig} and Lemma \ref{lemma:upper_bound_eig}, we have that $$\lambda_{\min}(\Sigma_{\max \myCurls{p,q}}) \geq \frac{\sigma^2_{\epsilon}}{  (1 + \delta)^{2p}}$$ and $$\lambda_{\max}(\Sigma_{\max \myCurls{p,q}}) \leq 2K_{p} \max \myCurls{p,q} \sigma^2_{\epsilon} \frac{1}{1 - \delta^{2}} $$. 

Combining these results, we have the desired result. 
\end{proof}

\begin{theorem}[\textbf{Finite sample bounds for VAR(p) models}]
\label{thm:main_supp}
  Let $\mathcal{F}$ denote the family of all VAR models of dimension $d$ and order $p$. For any $n > \max \left \{p,q \right \}\in \N$, let $\mu, m > 0$ be integers such that $2 \mu m = n$ and $ \delta > 2 (\mu - 1) \rho^m$ for a fixed constant $0 < \rho < 1$ determined by the underlying process. Let $\myCurls{x_1, x_2, \cdots x_n} \in \mathbb{R}^d$ be a finite sample drawn from a VAR(q) process. Then, simultaneously for every $f \in \mathcal{F}$, under the square loss truncated at $M$, with probability at least $1-\delta$,
    \begin{equation}
  \label{eq:thm_main_supp}
     \CErrwi   \leq  \zeta \hat{\mathcal{S}}_{\omega} + \zeta \widehat{\mathfrak{R}}_{\mu}(\mathcal{F}) + 3\zeta M \sqrt{\frac{\log \frac{4}{\delta'}}{2 \mu}}
  \end{equation}
 where $\zeta = 2 \kappa (\Sigma^{\nu})$, $\delta' = \delta - 2 (\mu - 1) \rho^m$, and $\widehat{\mathfrak{R}}_{\mu}(\mathcal{F})$ denotes the empirical Rademacher complexity of $\mathcal{F}$.
\end{theorem}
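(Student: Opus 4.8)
The plan is to derive \eqref{eq:thm_main_supp} by composing two ingredients: the deterministic, population-level inequality of Proposition~\ref{prop:stability_control_supp}, which converts the causal error into a constant multiple of the population statistical error, and a uniform-over-$\mathcal{F}$ generalization bound for $\beta$-mixing data, which converts the population statistical error into its empirical counterpart plus a complexity term. Because Proposition~\ref{prop:stability_control_supp} holds for \emph{every} $f\in\mathcal{F}$ with no probabilistic qualification, all of the randomness lives in the second ingredient; hence it suffices to establish the statistical generalization bound uniformly over $\mathcal{F}$ and then chain the two.

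\emph{From causal to statistical error.} Proposition~\ref{prop:stability_control_supp} (whose proof in fact yields the sharper constant $2\kappa(\Sigma^{\nu})-1$) gives, for every $f\in\mathcal{F}$,
\[
\CErrwi(f)\;\le\;\SErrw(f)+\bigl(2\kappa(\Sigma^{\nu})-1\bigr)\bigl(\SErrw(f)-\sigma^2_{\epsilon}\bigr).
\]
Since the one-step innovation variance is a floor on any forecast's risk, $\SErrw(f)\ge\sigma^2_{\epsilon}$ — immediate from the quadratic-form decomposition of $\SErrw$ established in the proof of Lemma~\ref{lemma:diff_G_S_var} — so the right-hand side is at most $2\kappa(\Sigma^{\nu})\,\SErrw(f)=\zeta\,\SErrw(f)$. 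Thus $\CErrwi(f)\le\zeta\,\SErrw(f)$ holds deterministically for all $f$.

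\emph{Statistical generalization under mixing, then chaining.} The stationary stable VAR($q$) process is geometrically $\beta$-mixing: iterating the companion form $y_t=Ay_{t-1}+e_t$ with $|\lambda_i|<1$ makes $\{y_t\}$ a geometrically ergodic Markov chain, so $\beta(m)\le\rho^m$ for a $\rho\in(0,1)$ governed by the spectral radius of $A$, which is the constant appearing in the statement. Truncating the square loss at $M$ makes the induced loss class bounded (and, being a contraction composed with a linear predictor, tractable for Rademacher analysis), so the block-based bound of \textcite{mohriRademacher} applies: partition the $n$ overlapping $\omega$-step training pairs extracted from $\{x_1,\dots,x_n\}$ into $2\mu$ contiguous blocks of length $m$ ($2\mu m=n$), keep the $\mu$ odd blocks, and replace them by an i.i.d.\ surrogate at a total-variation cost of at most $2(\mu-1)\rho^m$. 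On the surrogate, symmetrization and two bounded-differences inequalities give, simultaneously over $f\in\mathcal{F}$ and with probability at least $1-\delta'$,
\[
\SErrw(f)\;\le\;\hat{\mathcal{S}}_{\omega}(f)+\widehat{\mathfrak{R}}_{\mu}(\mathcal{F})+3M\sqrt{\tfrac{\log(4/\delta')}{2\mu}},
\]
with $\widehat{\mathfrak{R}}_{\mu}(\mathcal{F})$ the empirical Rademacher complexity over the $\mu$ retained blocks; transferring back to the true process, the same inequality holds with probability at least $1-\delta'-2(\mu-1)\rho^m=1-\delta$, and the hypothesis $\delta>2(\mu-1)\rho^m$ is exactly what makes $\delta'>0$. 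Multiplying this bound by $\zeta$ and combining with Step~1 yields $\CErrwi(f)\le\zeta\hat{\mathcal{S}}_{\omega}(f)+\zeta\widehat{\mathfrak{R}}_{\mu}(\mathcal{F})+3\zeta M\sqrt{\log(4/\delta')/(2\mu)}$ on the same event, which is \eqref{eq:thm_main_supp}.

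\emph{Main obstacle.} Step~1 and the final chaining are routine; the load-bearing part is the mixing argument. The delicate points are (i) certifying the geometric rate $\beta(m)\le\rho^m$ and pinning $\rho$ to the process — this needs a regularity assumption on the innovations beyond the white-noise condition (e.g.\ an absolutely continuous law) to invoke geometric ergodicity; (ii) the bookkeeping that turns the mixing coefficient into the confidence loss $2(\mu-1)\rho^m$ while correctly propagating the constant $3$ and the $4/\delta'$ through symmetrization and the union over the two bad events; and (iii) reconciling the \emph{truncated} loss of the generalization step with the \emph{untruncated} quadratic form underlying Proposition~\ref{prop:stability_control_supp} — one either checks that truncation is vacuous under the ambient boundedness or carries truncated versions of $\CErrwi$ and $\SErrw$ consistently through Step~1, using that truncation only decreases both errors. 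Note that $\widehat{\mathfrak{R}}_{\mu}(\mathcal{F})$ is left abstract; instantiating it for the VAR($p$) class (which would require controlling the parameter magnitudes) is not needed for the stated result.
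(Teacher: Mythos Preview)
Your proposal is correct and follows essentially the same route as the paper: invoke Proposition~\ref{prop:stability_control_supp} to bound $\CErrwi$ by $\zeta\,\SErrw$, appeal to geometric $\beta$-mixing of stable VAR processes \parencite{mokkadem1988mixing}, and then apply the Rademacher bound of \textcite[Theorem~1]{mohriRademacher} for mixing sequences to pass from $\SErrw$ to $\hat{\mathcal{S}}_{\omega}$. Your treatment is in fact more explicit than the paper's own proof, which is a two-line sketch; the caveats you flag about innovation regularity and the truncated-versus-untruncated loss are genuine subtleties that the paper does not address.
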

\begin{proof}[\textbf{Proof of Theorem \ref{thm:main_supp}}]

From Proposition , we already have that 
 \begin{equation}
     \abs{\CErrwi(f) - \SErrw(f)} \leq  (2 \kappa(\Sigma_{\max \myCurls{p,q}}) - 1) (\SErrw(f) - \sigma^2_{\epsilon}).
\end{equation}
 Additionally, processes that follow VAR models are known to be $\beta$ mixing and in particular, they are geometrically completely regular, that is, there exists some $0 < \rho < 1$ such that $\beta(k) = C \rho^{k}$ for some constant $C$, where $\beta(k)$ denotes the $\beta$ mixing coefficient of the process \parencite{mokkadem1988mixing}. Theorem \ref{thm:main_supp} then follows by applying Rademacher bounds \parencite[Theorem 1]{mohriRademacher} for generalization in time-series under mixing conditions.
\end{proof}

\section{Relative Interventions}

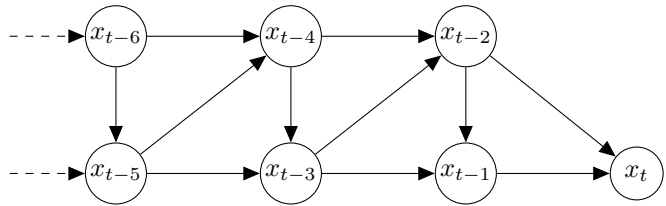
\begin{figure}[htp]
    \centering
    \begin{tikzpicture}[x=1.5cm,y=1.0cm]
    \node[latent] (x) {$x_t$} ; %
    \node[latent, left=of x] (x1) {$x_{t-1}$} ; %
    \node[latent, above=of x1] (x2) {$x_{t-2}$} ; %
    \node[latent, left=of x1] (x3) {$x_{t-3}$} ; %
    \node[latent, above=of x3] (x4) {$x_{t-4}$} ; %
    \node[latent, left=of x3] (x5) {$x_{t-5}$} ; %
    \node[latent, above=of x5] (x6) {$x_{t-6}$} ; %
    
    \node[draw=none] (q) [left=of x5] { }; %
    \node[draw=none] (r) [left=of x6] { }; %

    \edge {x1} {x}  ; %
    \edge {x2} {x}  ; %
    \edge {x2} {x1} ; %
    \edge {x3} {x1} ; %
    \edge {x3} {x2} ; %
    \edge {x4} {x2} ; %
    \edge {x4} {x3} ; %
    \edge {x5} {x3} ; %
    \edge {x5} {x4} ; %
    \edge {x6} {x4} ; %
    \edge {x6} {x5} ; %
    
    \edge[dashed]{q}{x5}
    \edge[dashed]{r}{x6}

\end{tikzpicture}
    \caption{Causal DAG of an AR(2) model}
    \label{fig:my_label3}
\end{figure}
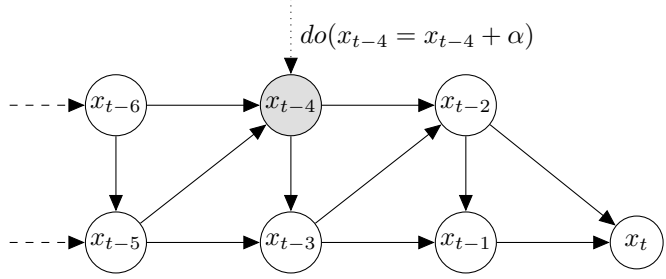
\begin{figure}[htp]
    \centering
    \begin{tikzpicture}[x=1.5cm,y=1.0cm]
    \node[latent] (x) {$x_t$} ; %
    \node[latent, left=of x] (x1) {$x_{t-1}$} ; %
    \node[latent, above=of x1] (x2) {$x_{t-2}$} ; %
    \node[latent, left=of x1] (x3) {$x_{t-3}$} ; %
    \node[obs, above=of x3] (x4) {$x_{t-4}$} ; %
    \node[latent, left=of x3] (x5) {$x_{t-5}$} ; %
    \node[latent, above=of x5] (x6) {$x_{t-6}$} ; %
    
    \node[draw=none] (q) [left=of x5] { }; %
    \node[draw=none] (r) [left=of x6] { }; %
    \node[draw=none] (i) [above=of x4] { }; %

    \edge {x1} {x}  ; %
    \edge {x2} {x}  ; %
    \edge {x2} {x1} ; %
    \edge {x3} {x1} ; %
    \edge {x3} {x2} ; %
    \edge {x4} {x2} ; %
    \edge {x4} {x3} ; %
    \edge {x5} {x3} ; %
    \edge {x5} {x4} ; %
    \edge {x6} {x4} ; %
    \edge {x6} {x5} ; %
    
    \edge[dashed]{q}{x5}
    \edge[dashed]{r}{x6}
    \edge[dotted]{i}{x4}
    \path (i) -- (x4) node [midway,right](TextNode){$do(x_{t-4} = x_{t-4} + \alpha)$};

\end{tikzpicture}
    \caption{Graphical representation of the effect of an intervention $do(x_{t-4} = x_{t-4} + \alpha)$ on an AR(2) model. Dependencies are retained.}
    \label{fig:my_label4}
\end{figure} 

Assume for simplicity $p=q$ and $d=1$. Let $A$ and $\widehat{A}$ denote the companion matrices corresponding to the true and estimated parameters respectively. Then, rewriting the VAR(p) model as a VAR(1) model, we have
\begin{equation}
    x_t = A^{\omega}_{11} x_{t - \omega} + A^{\omega}_{12} x_{t - \omega- 1} + \cdots + A^{\omega}_{1p} x_{t - \omega-p + 1} + A^{\omega - 1}_{11} \epsilon_{t - \omega + 1} + \cdots + A_{11} \epsilon_{t - 1} + \epsilon_t.
\end{equation}
 Let $\zeta_t = A^{\omega - 1}_{11} \epsilon_{t - \omega + 1} + \cdots + A_{11} \epsilon_{t - 1} + \epsilon_t.$ Then, Statistical error $S_{\omega}$ can be computed as 
 \begin{align}
     \mathbb{E}[x_t - \hat{x}_t]^2 & = \mathbb{E}[\sum \limits_{i=1}^p (A^{\omega}_{1i} - \widehat{A}^{\omega}_{1i})x_{t-\omega - i + 1} + \zeta_t^2] \\
     &= \sum \limits_{ij=1}^p (A^{\omega}_{1i} - \widehat{A}^{\omega}_{1i})(A^{\omega}_{1j} - \widehat{A}^{\omega}_{1j})\Sigma_{ij} + \mathbb{E}[\zeta_t^2] 
 \end{align}
 The causal error $\mathcal{G}_{do_{\omega}}$ due to the effect of an intervention $do(x_{t - \omega} = x_{t - \omega} + \alpha)$ can be computed as 
  \begin{align}
     \mathbb{E}_{do_{\omega}}[x_t - \hat{x}_t]^2 & = \mathbb{E}[\sum \limits_{i=1}^p (A^{\omega}_{1i} - \widehat{A}^{\omega}_{1i})x_{t-\omega - i + 1} + (A^{\omega}_{11} - \widehat{A}^{\omega}_{11}) \alpha +  \zeta_t^2] \\
     &= \sum \limits_{ij=1}^p (A^{\omega}_{1i} - \widehat{A}^{\omega}_{1i})(A^{\omega}_{1j} - \widehat{A}^{\omega}_{1j})\Sigma_{ij} + (A^{\omega}_{11} - \widehat{A}^{\omega}_{11})^2 \alpha^2 + \mathbb{E}[\zeta_t^2] \label{neq1}
 \end{align}
 To see why (\ref{neq1}) holds, recall that $\mathbb{E}[x_t] = 0, \mathbb{E}[\epsilon_t] = 0, \mathbb{E}[x_{t-i} \epsilon_t] = 0 \; \forall i \in \mathbb{N}.$
 \begin{lemma}[\textbf{\textbf{Difference in Causal and Statistical errors (AR) under Relative Interventions}}]
\label{corr:diff_c_s_ar_relative}
Let $\myCurls{X_t}$ follow an AR(q) process. Then, for any AR(p) model $f$ with parameters $\myCurls{\widehat{a}_1, \widehat{a}_2, \cdots, \widehat{a}_p}$,
\begin{equation}
    \CErrw(f) - \SErrw(f) =  (A^{\omega}_{1,1} - \widehat{A}^{\omega}_{1,1})^2 \alpha^2 ,
\end{equation}

where, $A$ and $\widehat{A}$ are the corresponding companion matrices of the model and estimated parameters.
\end{lemma}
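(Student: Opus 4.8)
The plan is to reduce the statement to the two expressions for $\SErrw(f)$ and $\CErrw(f)$ that are derived in the lines immediately preceding the lemma (the second of which is labelled \eqref{neq1}) and then simply subtract. First I would rewrite the AR($p$) model in its VAR($1$) companion form and unroll the recursion $\omega$ times to obtain $x_t = \sum_{i=1}^p A^{\omega}_{1i} x_{t-\omega-i+1} + \zeta_t$, where $\zeta_t = A^{\omega-1}_{11}\epsilon_{t-\omega+1} + \cdots + A_{11}\epsilon_{t-1} + \epsilon_t$ is independent of $\mathbf{x}^n_{t-\omega}$, and the forecast of $f$ is $\widehat{x}_t = \sum_{i=1}^p \widehat{A}^{\omega}_{1i} x_{t-\omega-i+1}$.

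The key observation, which is exactly what distinguishes a relative (shift) intervention from an atomic one, is that $do(x_{t-\omega} = x_{t-\omega}+\alpha)$ removes no incoming edges into $x_{t-\omega}$: the structural equations for $x_{t-\omega}$ and all of its ancestors are unchanged (in particular $x_{t-\omega-1},\dots,x_{t-\omega-p+1}$ lie in the past of $x_{t-\omega}$ and are untouched), while every descendant of $x_{t-\omega}$ — in particular $x_t$ — inherits the deterministic additive shift through the unrolled structural equations. Concretely, under the intervention the target becomes $x_t + A^{\omega}_{11}\alpha$ and the forecast becomes $\widehat{x}_t + \widehat{A}^{\omega}_{11}\alpha$, with $x_t,\widehat{x}_t$ the original observational random variables, so the residual under the intervention equals $(x_t - \widehat{x}_t) + (A^{\omega}_{11} - \widehat{A}^{\omega}_{11})\alpha$. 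I would state this carefully by reference to the VAR structural equations.

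Expanding the square and taking expectations, the cross term is $2(A^{\omega}_{11} - \widehat{A}^{\omega}_{11})\alpha\,\E[x_t - \widehat{x}_t]$, which vanishes since $x_t$ and $\widehat{x}_t$ are linear combinations of the zero-mean variables $x_{t-\omega-i+1}$ and the zero-mean uncorrelated innovations (using $\E[x_{t-i}]=0$, $\E[\epsilon_t]=0$, $\E[x_{t-i}\epsilon_t^T]=0$); no averaging over the intervention value is needed because $\alpha$ is a fixed constant. Hence $\CErrw(f) = \SErrw(f) + (A^{\omega}_{11} - \widehat{A}^{\omega}_{11})^2\alpha^2$, which is the claim, and it follows by subtracting the two displays preceding the lemma. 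The only mildly delicate point — and thus the main obstacle — is the bookkeeping establishing that the interventional distribution merely translates $x_{t-\omega}$ and all of its descendants by the deterministic amount $\alpha$ while leaving the joint law of $(x_{t-\omega-1},\dots,x_{t-\omega-p+1})$ and of $\zeta_t$ intact; once that is in place the remainder is a one-line computation.
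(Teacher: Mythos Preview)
Your proposal is correct and matches the paper's own argument essentially line for line: the paper also unrolls the recursion, writes the residual under $do(x_{t-\omega}=x_{t-\omega}+\alpha)$ as the observational residual plus the constant $(A^{\omega}_{11}-\widehat{A}^{\omega}_{11})\alpha$, expands the square, and kills the cross term via $\E[x_t]=\E[\epsilon_t]=\E[x_{t-i}\epsilon_t]=0$ (this is precisely the justification of \eqref{neq1}). Subtracting the two displayed expressions then gives the lemma, exactly as you outline.
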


\section{Other results}
\begin{proposition}\parencite[Proposition 2.2]{basu2015regularized}
\label{prop:basu}
Consider a (matrix-valued) polynomial $\mathcal{A}(z) = I_d - \sum \limits_{k=1}^p A_k z^k, x \in \C, \; p \in \N$, satisfying $det(\mathcal{A}(z)) \neq 0$ for all  $\abs{z} < 1$, $\mu_{\max}(\mathcal{A}) \leq \myBracs{1 + (\nu_{row} + \nu_{col})/2}^2$, where

    \begin{equation*}
        \nu_{row} = \sum \limits_{k=1}^p \max \limits_{1 \leq i \leq d} \sum \limits_{j=1}^d \abs{A_k(i,j)}, \quad \nu_{col} = \sum \limits_{k=1}^p \max \limits_{1 \leq i \leq d} \sum \limits_{i=1}^d \abs{A_k(i,j)}.
    \end{equation*}
    
\end{proposition}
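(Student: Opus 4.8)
The plan is to recognize $\mu_{\max}(\mathcal{A})$ as the squared spectral (operator) norm of the matrix polynomial $\mathcal{A}(z)$, maximized over the unit circle, and then control that norm by the triangle inequality combined with a classical interpolation inequality between induced matrix norms. In the Basu--Michailidis framework one has $\mu_{\max}(\mathcal{A}) = \max_{\abs{z}=1} \lambda_{\max}\myBracs{\mathcal{A}^*(z)\mathcal{A}(z)}$, and since $\mathcal{A}^*(z)\mathcal{A}(z)$ is Hermitian positive semidefinite, $\lambda_{\max}\myBracs{\mathcal{A}^*(z)\mathcal{A}(z)} = \norm{\mathcal{A}(z)}_2^2$, where $\norm{\cdot}_2$ denotes the spectral norm. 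Hence it suffices to prove the uniform bound $\norm{\mathcal{A}(z)}_2 \le 1 + (\nu_{row}+\nu_{col})/2$ for all $\abs{z}=1$, and the proposition follows by squaring.

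First I would apply the triangle inequality to $\mathcal{A}(z) = I_d - \SumLimits_{k=1}^p A_k z^k$. On the unit circle $\abs{z}^k = 1$, so
\[
\norm{\mathcal{A}(z)}_2 \le \norm{I_d}_2 + \SumLimits_{k=1}^p \abs{z}^k \norm{A_k}_2 = 1 + \SumLimits_{k=1}^p \norm{A_k}_2 .
\]
The key step is then the classical estimate $\norm{A_k}_2 \le \sqrt{\norm{A_k}_1\,\norm{A_k}_\infty}$ relating the spectral norm to the maximum absolute column sum $\norm{A_k}_1$ and the maximum absolute row sum $\norm{A_k}_\infty$. I would justify it directly: since $A_k^* A_k$ is Hermitian positive semidefinite, $\norm{A_k}_2^2 = \lambda_{\max}(A_k^*A_k) \le \norm{A_k^*A_k}_\infty \le \norm{A_k^*}_\infty\,\norm{A_k}_\infty = \norm{A_k}_1\,\norm{A_k}_\infty$, using that the largest eigenvalue of a Hermitian PSD matrix equals its spectral radius, that the spectral radius is dominated by any submultiplicative matrix norm, and that conjugate-transposition swaps row and column sums. (Equivalently, this is Riesz--Thorin interpolation of $\norm{\cdot}_{p\to p}$ at the midpoint of $1$ and $\infty$.) Applying AM--GM gives $\norm{A_k}_2 \le \tfrac12\myBracs{\norm{A_k}_1 + \norm{A_k}_\infty}$, and summing over $k$ yields $\SumLimits_{k=1}^p \norm{A_k}_2 \le \tfrac12\myBracs{\nu_{col}+\nu_{row}}$, since $\SumLimits_k \norm{A_k}_\infty = \nu_{row}$ and $\SumLimits_k \norm{A_k}_1 = \nu_{col}$ by definition.

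Combining the two displays gives $\norm{\mathcal{A}(z)}_2 \le 1 + (\nu_{row}+\nu_{col})/2 =: C$ for every $\abs{z}=1$; as $C \ge 0$, squaring preserves the inequality, and taking the maximum over the unit circle produces $\mu_{\max}(\mathcal{A}) \le C^2 = \myBracs{1 + (\nu_{row}+\nu_{col})/2}^2$, as claimed. The argument is short, and the only nontrivial ingredient is the interpolation inequality $\norm{A}_2 \le \sqrt{\norm{A}_1\,\norm{A}_\infty}$, which is where I expect the main (though routine) work to lie; everything else is the triangle inequality and AM--GM. I would also note that the stability hypothesis $\det(\mathcal{A}(z)) \ne 0$ for $\abs{z}<1$ is not used for this upper bound---it matters only for the companion lower estimate on $\mu_{\min}(\mathcal{A})$ and for identifying $\mathcal{A}$ with the inverse transfer function of the process---so the bound on $\mu_{\max}$ holds for arbitrary coefficient matrices $\myCurls{A_k}$.
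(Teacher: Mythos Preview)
Your argument is correct. Note, however, that the paper does not supply its own proof of this proposition: it is quoted verbatim as \parencite[Proposition~2.2]{basu2015regularized} in the ``Other results'' appendix and used as a black box in the proof of Lemma~\ref{lemma:lowe_bound_eig}. There is therefore nothing in the present paper to compare your proof against. For what it is worth, your route---triangle inequality on the unit circle, the interpolation bound $\norm{A}_2^2 \le \norm{A}_1\norm{A}_\infty$, and AM--GM---is exactly the argument Basu and Michailidis give in the cited reference, so you have reconstructed the original proof. Your closing remark that the stability hypothesis $\det(\mathcal{A}(z))\neq 0$ is irrelevant for the upper bound on $\mu_{\max}$ is also accurate.
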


\begin{lemma}[\textbf{Bounds on spectrum of $\Sigma$}]
\label{lemma:min_eig_lower}
Let $\myCurls{X_t}$ be a second-order stationary time series with spectral density $f(\omega)$ and let $\Sigma_{n}$ denote the autocorrelation matrix of size $n \times n$ given by $\Sigma_n(i,j) = \gamma_{\abs{i-j}} = \E(x_{t + i}, x_{t + j})$ for any $i, j \in \mathbb{Z}$. Then the extremal eigenvalues of $\Sigma$ are bounded as follows.
\begin{equation*}
    \label{eq:bounds_eigs}
    \lambda_{\min}(\Sigma_n) \geq 2 \pi \inf \limits_{\omega} f(\omega) \quad \textrm{and} \quad \lambda_{\max}(\Sigma_n) \leq 2 \pi \sup \limits_{\omega} f(\omega) \; \forall n \in \N
\end{equation*}
Furthermore, the bound holds uniformly for all $n \in \N$. See \textcite[Proposition 4.5.3]{brockwell1991time} for a proof of the Lemma.
\end{lemma}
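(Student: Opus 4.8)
The final statement is a classical Szeg\H{o}-type bound, and rather than quoting \textcite[Proposition~4.5.3]{brockwell1991time} I would give the short self-contained argument based on the spectral representation of the autocovariance sequence together with Parseval's identity. Recall that, by definition of the spectral density of a second-order stationary process, $\gamma_h=\int_{-\pi}^{\pi}e^{ih\omega}f(\omega)\,d\omega$ for every $h\in\Z$ (equivalently $f(\omega)=\tfrac{1}{2\pi}\sum_{h\in\Z}\gamma_h e^{-ih\omega}\ge 0$), so that $\Sigma_n$ is the $n\times n$ Hermitian Toeplitz matrix generated by $f$.

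The key step is to write the quadratic form of $\Sigma_n$ as a spectral integral: for any $v=(v_1,\dots,v_n)\in\C^{n}$,
\begin{equation*}
  v^{*}\Sigma_n v=\sum_{j,k=1}^{n}\bar v_j v_k\,\gamma_{j-k}=\int_{-\pi}^{\pi}f(\omega)\,\Bigl|\sum_{k=1}^{n}v_k e^{-ik\omega}\Bigr|^{2}\,d\omega .
\end{equation*}
Since $\int_{-\pi}^{\pi}\bigl|\sum_{k}v_k e^{-ik\omega}\bigr|^{2}\,d\omega=2\pi\sum_{k}|v_k|^{2}=2\pi\norm{v}^{2}$ by Parseval, bounding $f$ inside the integral by its infimum and supremum pointwise yields
\begin{equation*}
  2\pi\Bigl(\inf_{\omega}f(\omega)\Bigr)\norm{v}^{2}\ \le\ v^{*}\Sigma_n v\ \le\ 2\pi\Bigl(\sup_{\omega}f(\omega)\Bigr)\norm{v}^{2}.
\end{equation*}
I would then invoke the Rayleigh--Ritz variational characterization $\lambda_{\min}(\Sigma_n)=\min_{\norm{v}=1}v^{*}\Sigma_n v$ and $\lambda_{\max}(\Sigma_n)=\max_{\norm{v}=1}v^{*}\Sigma_n v$ to conclude $\lambda_{\min}(\Sigma_n)\ge 2\pi\inf_\omega f(\omega)$ and $\lambda_{\max}(\Sigma_n)\le 2\pi\sup_\omega f(\omega)$; as neither bound depends on $n$, they hold uniformly in $n\in\N$.

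There is no genuinely hard step here; the main (minor) obstacles are bookkeeping: fixing the $2\pi$ normalization in the definition of the spectral density consistently with the stated bound, checking that $f\ge 0$ is integrable (so the spectral integral and Parseval's identity apply, which is guaranteed since $\gamma_0<\infty$), and, if $f$ is not assumed continuous, replacing $\inf$ and $\sup$ by essential infimum and supremum (an almost-everywhere bound on $f$ suffices inside the integral). For the way this lemma is used downstream, one then only needs that a stable AR($p$)/VAR($p$) process has a spectral density bounded away from $0$ and $\infty$, which is exactly what feeds into Lemmas~\ref{lemma:lowe_bound_eig} and~\ref{lemma:upper_bound_eig}.
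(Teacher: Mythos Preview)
Your argument is correct. Note, however, that the paper does not actually supply its own proof of this lemma: the statement itself defers entirely to \textcite[Proposition~4.5.3]{brockwell1991time}. The self-contained derivation you give---writing $v^{*}\Sigma_n v$ as the spectral integral $\int_{-\pi}^{\pi}f(\omega)\bigl|\sum_k v_k e^{-ik\omega}\bigr|^2\,d\omega$, applying Parseval, and then Rayleigh--Ritz---is exactly the classical proof that appears in that reference, so there is no methodological divergence to discuss; you have simply unpacked the citation. Your remarks on the $2\pi$ normalization and on replacing $\inf/\sup$ by essential $\inf/\sup$ when $f$ is only $L^1$ are appropriate caveats and do not affect how the lemma is used downstream.
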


\section{Additional Experimental Results}
\label{sec:additional_simulations}
In section 5 we described experiments with simulated autoregressive processes.
Here, we provide additional plots from these experiments.

\subsection{Statistical and Causal Errors}
In the main paper we have seen that even in very simple AR models the causal error of an OLS regression estimator can be several times larger than its statistical error.
In Figures \ref{fig:errors}, \ref{fig:errors_hist} and \ref{fig:corr_vs_err_supp} we can see that this is also the case for OLS, Lasso and ElasticNet regression and different process orders.
All methods can be seen as the solution to an optimization problem, minimizing the empirical statistical error plus some penalty term $\Omega(\hat{a})$, that is, $\sum_{y_i, \hat{y}_i} (y_i-\hat{y}_i)^2 + \lambda \Omega(\hat{a}),$ where $\hat{y}_i$ denotes the model prediction with estimated parameters $\hat{a}$ and $\lambda>0$ the strength of the regularization.
For OLS, the penalty term is zero. 
For Ridge and Lasso the penalty is the $l_2$ and $l_1$ norm respectively, i.e. $\Omega(\hat{a}) = \|\hat{a}\|_2$ for Ridge and $\Omega(\hat{a}) = \|\hat{a}\|_1$ for Lasso.
For ElasticNet we have $\Omega(\hat{a}) = \mu\cdot\|\hat{a}\|_1 + (1-\mu)\cdot\|\hat{a}\|_2$, where $\mu$ is a parameter balancing the $l_1$ and $l_2$ penalty.

We used standard grid-search and 5-fold cross-validation to find the optimal regularization strength.
For ElasticNet, we additional optimized $\mu$ with the grid search.
Except for Figures \ref{fig:corr_vs_err_sample}, we use 100 training and 1000 test samples.
For all experiments, we simulate our processes with noise variance $\sigma^2 = 1$.
\begin{figure}[htp]
    \centering
    \includegraphics[width=\textwidth]{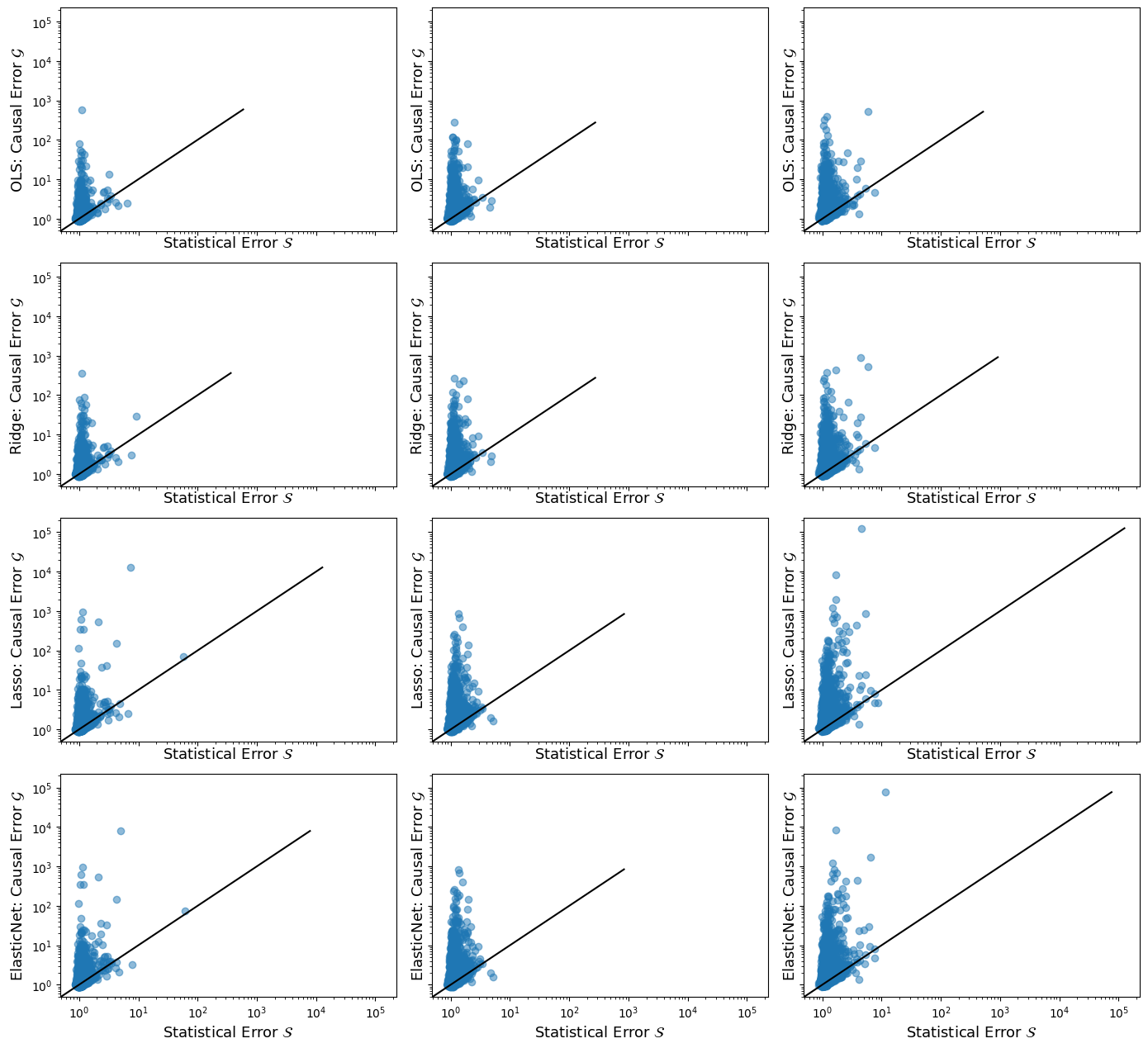}
    \caption{The causal error $\mathcal{G}$ plotted against the statistical error $\mathcal{S}$ for process orders $p=3, 5, 7$ (from left to right) and estimators OLS, Lasso and ElasticNet (from top to bottom).}
    \label{fig:errors}
\end{figure}
\begin{figure}[htp]
    \centering
    \includegraphics[width=\textwidth]{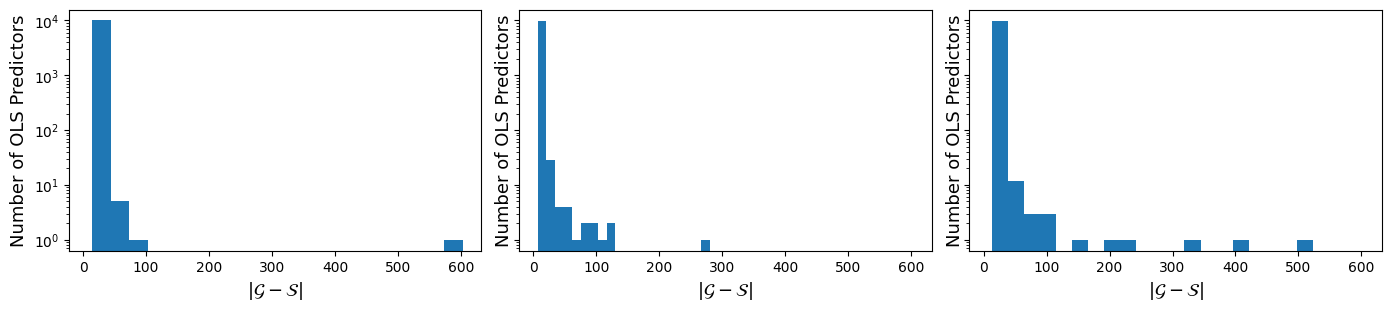}
    \caption{Histogram of the difference $|\mathcal{G} - \mathcal{S}|$ for orders $p=3, 5, 7$.}
    \label{fig:errors_hist}
\end{figure}
\begin{figure}[htp]
    \centering
    \includegraphics[width=\textwidth]{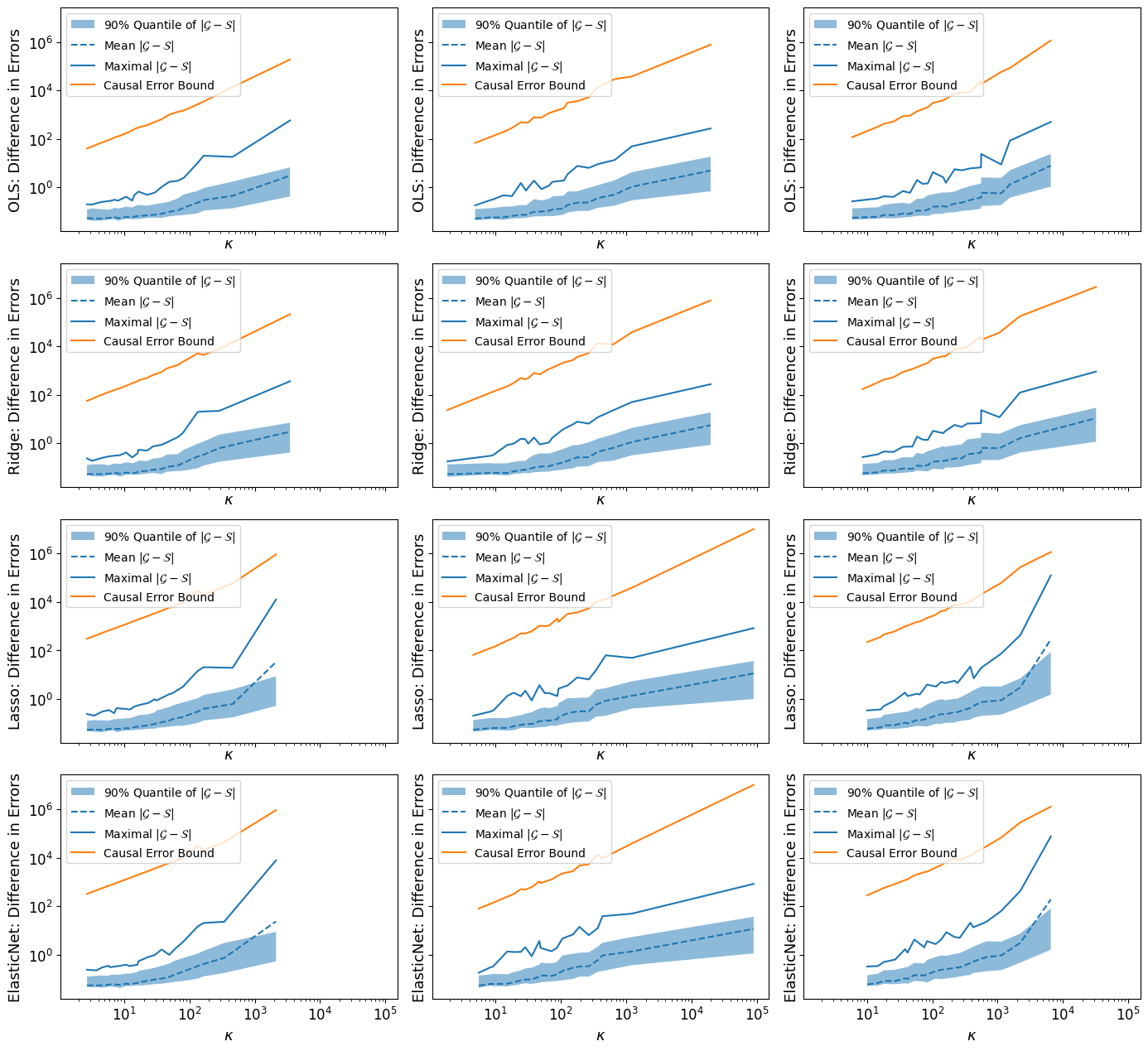}
    \caption{The maximal difference of statistical and causal error  $|\mathcal{G} - \mathcal{S}|$ plotted against the condition number of the autocorrelation matrix $\kappa$ for process orders $p=3, 5, 7$ (from left to right) and estimators OLS, Lasso and ElasticNet (from top to bottom).}
    \label{fig:corr_vs_err_supp}
\end{figure}

\textbf{Increasing sample size.}
As one would expect, in Figure \ref{fig:corr_vs_err_sample}, we can see that the absolute difference of the errors decreases for larger training samples. We show this result for the ridge regression estimator. Results for other estimators are similar.
The respective means are 13.28, 0.48 and 0.18 from left to right and the standard deviations are 264.54, 4.35 and 0.27, which is hard to read from the plot due to the scale of the outliers.
\begin{figure}[htp]
	\centering
	\includegraphics[width=.7\textwidth]{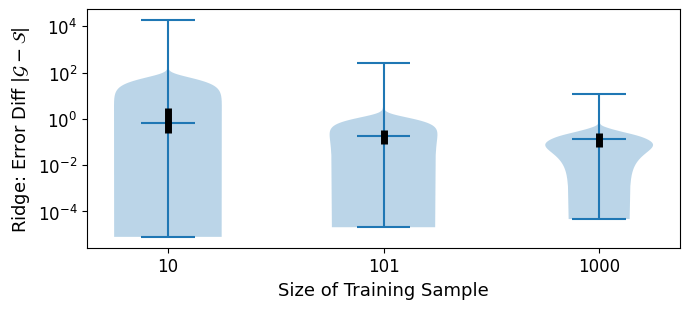}
	\caption{The absolute difference $\abs{\mathcal{G} - \mathcal{S}}$ of causal and statistical error plotted against the sample size for process orders $p=5$,  sample sizes 10, 100, 1000 using Ridge regression. The blue bars mark the 0, 0.5 and 1 quantile and the black block goes from the 0.25 to the 0.75 quantile.}
	
	\label{fig:corr_vs_err_sample}
\end{figure}

\begin{figure}[htp]
	\centering
	\includegraphics[width=\textwidth]{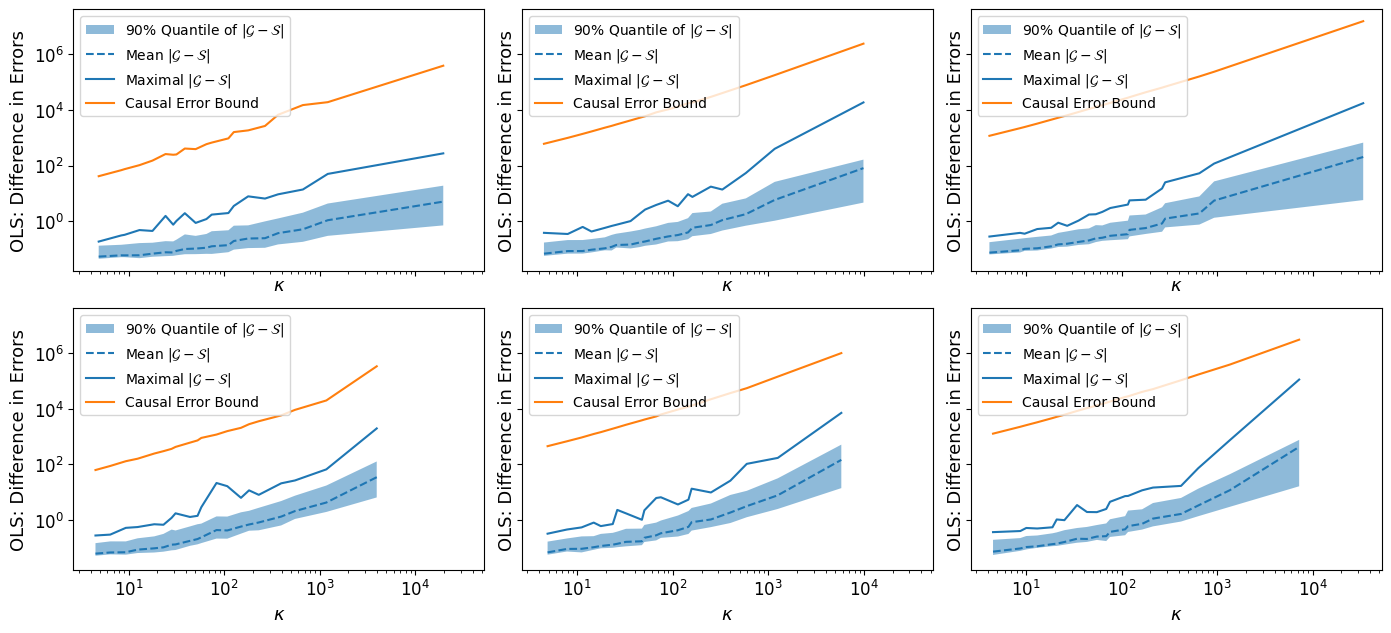}
	\caption{The maximal difference of errors $|\mathcal{G} - \mathcal{S}|$ as well as the generalization bound from Theorem \ref{thm:main_supp} plotted against condition number of the autocorrelation matrix for process order $p=5$, steps predicted ahead $\omega = 1, 5, 7$ (from left to right). The top row show interventions only on the most recent timestep $x_{t-1}$ where the bottom row shows interventions on all previous timesteps before the prediction.}
	
	\label{fig:corr_vs_err_omega}
\end{figure}

\textbf{Violations of causal sufficiency.}
In Figure \ref{fig:corr_vs_err_misspec} we violated the causal sufficiency assumption by introducing a hidden confounder. To this end we draw a two-dimensional AR(1) process by drawing each entry of the parameter matrix $A$ independently and uniformly from $[-2, 2]$ and reject matrices that yield non-stationary processes.
We then only use one of the two dimensions as training and test sample.
The other one acts as hidden confounder.
We also use only the sample of the observed dimension to estimate the autocorrelation of the process, which is the x-axis of the plots in Figure \ref{fig:corr_vs_err_misspec}. 
\begin{figure}[htp]
	\centering
	\includegraphics[width=\textwidth]{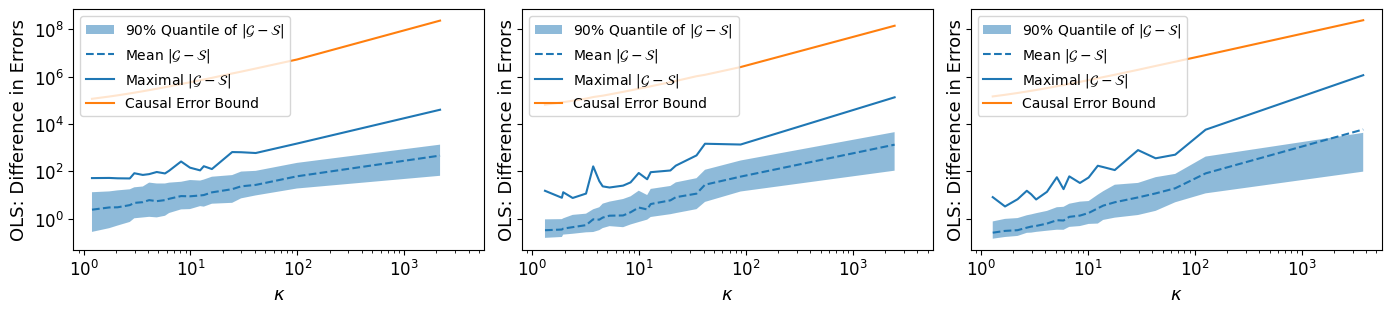}
	\caption{The maximal difference of errors $|\mathcal{G} - \mathcal{S}|$ as well as the generalization bound from Theorem \ref{thm:main_supp} plotted against condition number of the autocorrelation matrix for process order $p=5$, steps predicted ahead $\omega = 1, 5, 7$ (from left to right). }
	
	\label{fig:corr_vs_err_misspec}
\end{figure}

\printbibliography

\end{document}